\def\eqref#1{equation~\ref{#1}}
\def\1{\bm{1}}
\DeclareMathAlphabet{\mathsfit}{\encodingdefault}{\sfdefault}{m}{sl}
\SetMathAlphabet{\mathsfit}{bold}{\encodingdefault}{\sfdefault}{bx}{n}
\newcommand{\bx}{\bm{x}}
\newcommand{\Lomega}{\mathcal{L}_{\bm{\omega},\lambda}}
\newcommand{\Lw}{\mathcal{L}_{\mathbf{w},\lambda}}
\newcommand{\bomega}{\bm{\omega}}
\newcommand{\bomegae}{\bm{\varpi}}
\newcommand{\omegae}{\varpi}
\newcommand{\wrm}{\mathrm{w}}
\DeclareMathOperator*{\argmin}{arg\,min}
\theoremstyle{plain}
\newtheorem{theorem}{Theorem}
\newtheorem{proposition}{Proposition}
\newtheorem{lemma}{Lemma}
\theoremstyle{definition}
\newtheorem{definition}{Definition}
\newtheorem{remark}{Remark}
\newcommand*\circled[1]{\tikz[baseline=(char.base)]{
            \node[shape=circle,draw,inner sep=1pt](char){#1};}}
\definecolor{ckblue}{rgb}{0.53,0.81,0.92} 
\definecolor{drred}{rgb}{1,0,0} 
\title{Deep Weight Factorization: Sparse Learning Through the Lens of Artificial Symmetries}
\author{Chris Kolb, Tobias Weber, Bernd Bischl, \& David R\"ugamer\\ 
Department of Statistics, LMU Munich, Munich\\
Munich Center for Machine Learning (MCML), Munich
\\
\texttt{\{chris.kolb,tobias.weber,bernd.bischl,david\}@stat.uni-muenchen.de} \\
}
\newcommand{\w}{\mathbf{w}}
\begin{document}

\maketitle

\begin{abstract}
Sparse regularization techniques are well-established in machine learning, yet their application in neural networks remains challenging due to the non-differentiability of penalties like the $L_1$ norm, which is incompatible with stochastic gradient descent. A promising alternative is shallow weight factorization, where weights are decomposed into two factors, allowing for smooth optimization of $L_1$-penalized neural networks by adding differentiable $L_2$ regularization to the factors. 
In this work, we introduce deep weight factorization, extending previous shallow approaches to more than two factors. We theoretically establish equivalence of our deep factorization with non-convex sparse regularization and analyze its impact on training dynamics and optimization.
Due to the limitations posed by standard training practices, we propose a tailored initialization scheme and identify important learning rate requirements necessary for training factorized networks.
We demonstrate the effectiveness of our deep weight factorization through experiments on various architectures and datasets, consistently outperforming its shallow counterpart and widely used pruning methods.
\end{abstract}

\section{Introduction}
Making models sparse is a contemporary challenge in deep learning, currently attracting a lot of attention. Among the more prominent methods to achieve sparsity are model pruning methods \citep{gale2019state,blalock2020state} and regularization approaches sparsifying the model during training \citep{hoefler2021sparsity}. While in statistics and machine learning, sparse regularization approaches are well-established \citep[see, e.g.,][]{tian2022comprehensive}, the non-smoothness of sparsity penalties such as the $L_1$ norm impedes the optimization of neural networks when using classical stochastic gradient descent (SGD) optimization. A possible solution that allows SGD-based optimization while inducing $L_1$ regularization is \emph{weight factorization}. Originally proposed in statistics for linear models \citep{hoff2017lasso}, the idea of factorizing the weights $\w = \bomega_1 \odot \bomega_2$ to obtain a differentiable $L_1$ penalty on the product $\bomega_1\odot\bomega_2$ has recently been adopted also in deep learning \citep[see, e.g.,][]{ziyin2023spred}.  This simple trick allows the integration of 
\begin{wrapfigure}[12]{r}[0pt]{0.583\textwidth}
    \includegraphics[width=\linewidth]
    {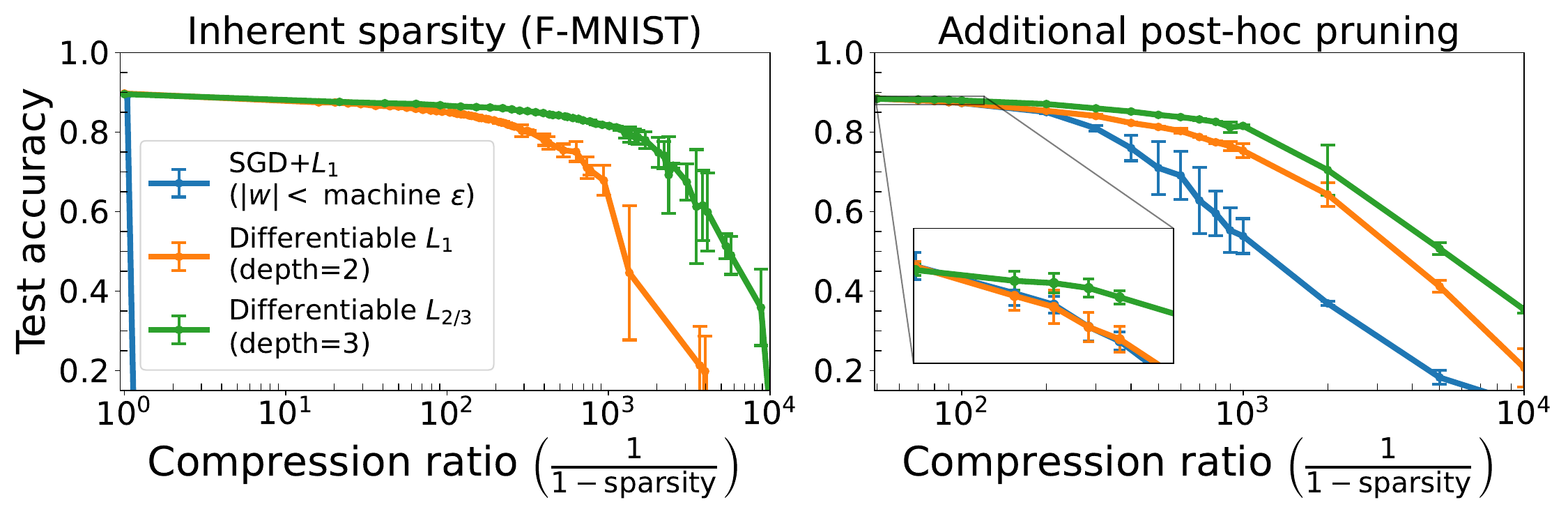}
        \caption{Sparsity-accuracy tradeoff using a vanilla $L_1$ penalization with SGD (blue) compared to (deep) weight factorization. Means and std.~deviations over 3 random seeds are shown.
        }
        \label{fig:lenet300100-fmnist-seeds-pruning}
\end{wrapfigure}
convex $L_1$-based sparsity into neural network training while promising direct applicability of familiar SGD. 
As shown in \cref{fig:lenet300100-fmnist-seeds-pruning}, the obtained sparsity of differentiable $L_1$ is superior to vanilla $L_1$ regularization. This holds even after applying additional post-hoc pruning, demonstra\-ting that the inferior sparsity performance of vanilla $L_1$ is not just due to a suboptimal thres\-hold but also the incompatibility of SGD and non-smooth penalties.

Given the success of (shallow) weight factorization, we study deep weight factorization in this work, i.e., factorizing $\w = \bomega_1 \odot \cdots \odot \bomega_D, D\geq 2$ (cf.~\cref{fig:intro-plot}). 
We investigate whether theoretical guarantees support the use of a \emph{depth}-$D$ factorization, whether it is beneficial for sparsity, what implications its usage has on training dynamics, and analyze other practices such as initialization. 

\paragraph{Our contributions}

In this work, we address the aforementioned challenges and close an important gap in the current literature. We first theoretically show the equivalence of factorized neural networks with sparse regularized optimization problems for depth $D \geq 2$, allowing for differentiable non-convex sparsity regularization in any neural network. We then discuss optimization strategies for these factorized networks including their initialization and appropriate learning rate schedules. We also analyze the training dynamics of such networks, showing a particularly interesting connection between the evolution of weight norms, compression, accuracy, and generalization. Conducting experiments on a range of architectures and datasets, we further substantiate our theoretical findings and demonstrate that our proposed factorized networks usually outperform the recently proposed shallow factorization and yield competitive results to commonly used pruning methods.

\begin{figure}
    \centering
    \resizebox{0.8\textwidth}{!}{
        \input{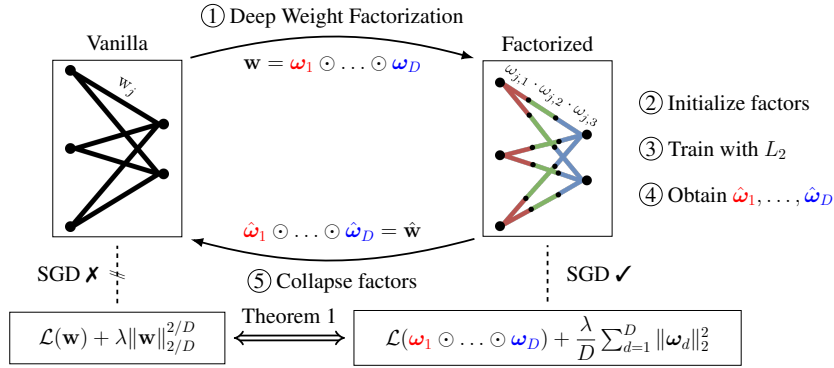}
    }
    \caption{Overview of the proposed method (cf.~\cref{alg:train}). Our approach proceeds by factorizing the neural network weights and running SGD on the factors $\bomega_d$ with weight decay. Post-training, the factors are collapsed again, with the resulting sparse solutions being minimizers of the non-smooth $L_{2/D}$-regularized objective.}
    \label{fig:intro-plot}
\end{figure}

\section{Background and related literature}

\subsection{Notation}\label{sec:notation}
Let $\left\{(\bm{x}_i, y_i)\right\}_{i=1}^n$ be the training data of independent samples $(\bm{x}_i, y_i) \in \mathcal{X} \times \mathbb{R}^c$, and $n,c\in\mathbb{N}$. Let 
$f(\w, \bm{x}): \mathcal{X} \rightarrow \mathbb{R}^c$ denote a network realization for any $\w \in \mathbb{R}^p$.
In general, we are interested in minimizing $\ell(\cdot, \cdot): \mathbb{R}^c \times \mathbb{R}^c \rightarrow \mathbb{R}_0^{+}$ denoting a continuous per-sample loss. 
The \textit{$L_q$ norm} of a vector $\mathbf{w} \in \mathbb{R}^p$ is defined as $\|\w\|_q = \left( \sum_{i=1}^p |\mathrm{w}_i|^q \right)^{1/q}$ for $q>0$. Note that $L_q$ \textit{regularizers} are defined differently as $\Vert \w \Vert_q^q$ and that for $q<1$, only a non-convex quasi-norm is defined. For two vectors $\bomega_1, \bomega_2 \in \mathbb{R}^p$, we use $\odot$ to denote their element-wise multiplication. For an optimization problem $\min_{\w} \mathcal{L}(\w)$, we denote $\hat{\w} := \argmin_{\w} \mathcal{L}(\w)$. Finally, the \textit{compression ratio} (CR) is defined as the ratio of original to sparse model parameters.

\subsection{Differentiable \texorpdfstring{$L_1$}{L1} regularization}




Weight factorizations were previously mostly studied for regularized linear models or as toy models for deep learning theory. We briefly illustrate this using the idea of a differentiable lasso. 

\paragraph{Differentiable lasso} 
The original lasso objective is defined as
\begin{equation} \label{eq:lasso_objective}
\min_{\w \in \mathbb{R}^p} \mathcal{L}_{\w,\lambda}(\w) := \sum_{i=1}^n \left( y_i - \bm{x}_i^\top \w \right)^2 + \lambda \| \w \|_1,
\end{equation}
where $\lambda > 0$ promotes sparsity via the $L_1$ norm \citep{tibshirani1996regression}. By factorizing $\w$ into $\bomega_1$ and $\bomega_2$ such that $\w = \bomega_1 \odot \bomega_2$, and replacing the non-differentiable $L_1$ penalty with an $L_2$ penalty on $\bomega=(\bomega_1,\bomega_2)$, we can obtain a differentiable formulation of the lasso \citep{hoff2017lasso}:
\begin{equation} \label{eq:factorized_objective}
\min_{\bomega_1, \bomega_2 \in \mathbb{R}^p} \mathcal{L}_{\bomega,\lambda}(\bomega) := \sum_{i=1}^n \left( y_i - \bm{x}_i^\top ( \bomega_1 \odot \bomega_2 ) \right)^2 + \frac{\lambda}{2} \left( \| \bomega_1 \|_2^2 + \| \bomega_2 \|_2^2 \right),
\end{equation}
The formulation \cref{eq:factorized_objective} is equivalent to \cref{eq:lasso_objective} in the sense that all minima of the non-convex objective in \cref{eq:factorized_objective} are global and related to the unique lasso solution of \cref{eq:lasso_objective} as $\hat{\bomega}_1 \odot \hat{\bomega_2}=\hat{\w}$. \citet{hoff2017lasso} proposes solving \cref{eq:factorized_objective} via alternating ridge regression. 
However, this relies on the biconvexity of the problem and cannot be easily extended beyond linear models.

\paragraph{Differentiable $L_1$ regularization in general neural networks}


Recently, \citet{ziyin2023spred} proposed applying a shallow factorization to arbitrary weights of a neural network. Coupled with weight decay, this allows obtaining a differentiable formulation of the sparsity-inducing $L_1$ penalty that can be optimized with simple SGD. Specifically, by factorizing the weights $\w$ of any neural network $f_{\w}(\w,\bx)$ as $\w = \bomega_1 \odot \bomega_2$, and applying $L_2$ regularization to the factors, the resulting optimization problem 
has the same minima as the $L_1$ regularized vanilla network. The key insight for the equivalence with $L_1$ regularization is that the factorization $\w = \bomega_1 \odot \bomega_2$ introduces a rescaling symmetry in the (unregularized) loss $\mathcal{L}_{\bomega,0}$. 

\begin{definition}[Rescaling Symmetry]\label{def:artificial-rescaling}
Let the parameters of a loss function $\mathcal{L}_{\bm{\theta}}(\bm{\theta})$ be partitioned as $\bm{\theta}=(\bomega_1, \bomega_2, \bm{\theta}_0)$, with $\bm{\theta}_0$ denoting the remaining parameters. Then $\mathcal{L}_{\bm{\theta}}(\bm{\theta})$ possesses a rescaling symmetry w.r.t.\ arbitrary parameters $\bomega_1, \bomega_2$ belonging to $\bm{\theta}=(\bomega_1, \bomega_2, \bm{\theta}_0)$ if for any $c \neq 0$:
\begin{equation*}\label{eq:rescaling-symmetry}
  \mathcal{L}_{\bm{\theta}}(\bomega_1, \bomega_2, \bm{\theta}_0)= \mathcal{L}_{\bm{\theta}}(c \cdot \bomega_1, c^{-1} \cdot \bomega_2, \bm{\theta}_0) \,\,\, \forall \, \bm{\theta}.  
\end{equation*}
\end{definition}


\begin{wrapfigure}[12]{r}[0pt]{0.23\textwidth}
\vspace{-0.35cm}
\centering
\includegraphics[width=\linewidth]{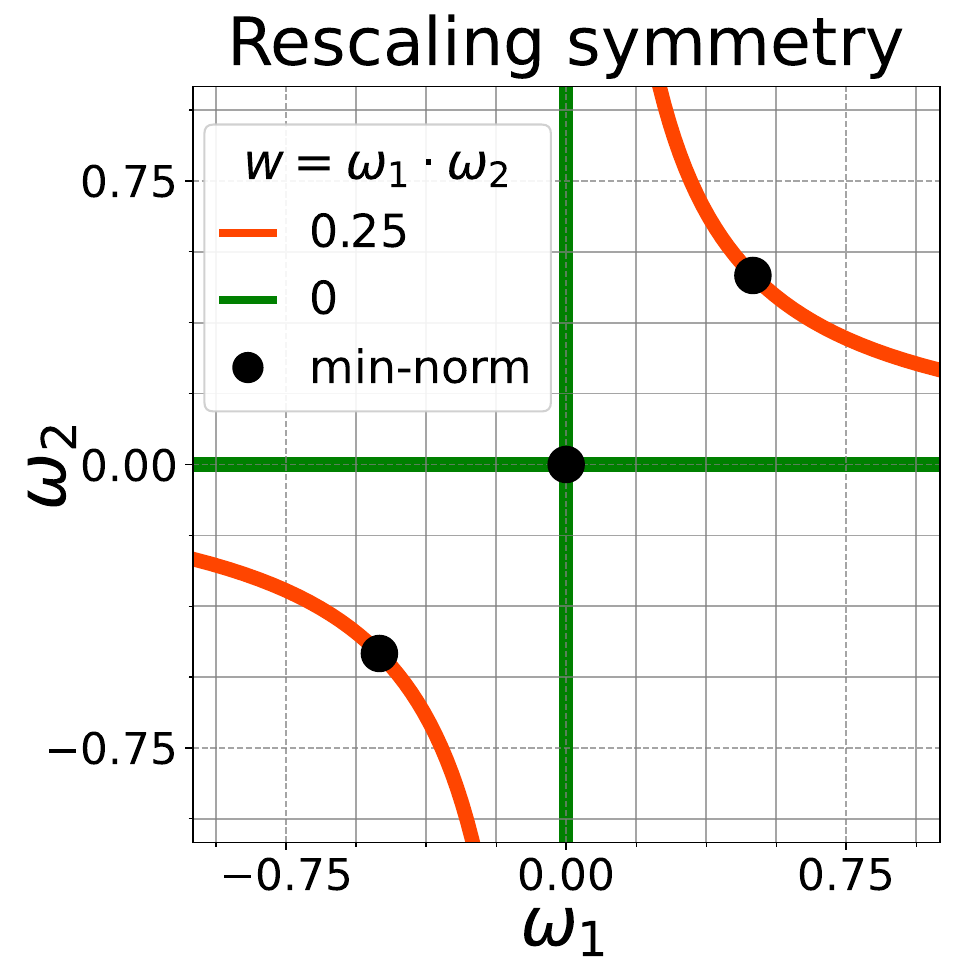}
\captionsetup{skip=3pt} %
\caption[]{Scalar rescaling symmetry and min-norm factorizations.}
\label{fig:rescaling-symmetry}
\end{wrapfigure}
While previous works mainly studied rescaling symmetries naturally arising in, e.g., homogeneous activation functions \citep{neyshabur2015search,parhi2023deep}, weight factorization constitutes an \textit{artificial} symmetry that is independent of $\mathcal{L}$, and by extension also of $\ell(\cdot,\cdot)$ and $f_{\w}(\cdot,\bx)$. This applicability to any parametric problem designates artificial symmetries as a powerful tool for constrained learning \citep{ziyin2023symmetry,chen2024stochastic}. Intuitively, the additional $L_2$ regularization enforces preference for min-norm factorizations $(\bomega_1^{\ast},\bomega_2^{\ast})$ among all feasible factorizations of a given $\w$ (\cref{fig:rescaling-symmetry}). At such a min-norm factorization of $\w$, the $L_2$ penalty in \cref{eq:factorized_objective} reduces to $
\Vert \bomega_1^{\ast} \odot \bomega_2^{\ast} \Vert_1=\Vert \w \Vert_1$, effectively inducing $L_1$ regularization on the collapsed parameter. This approach allows for implementing $L_1$ regularization in general networks using GD without requiring specialized algorithms to handle non-differentiable regularization.

We refer to \cref{app:literature} for additional related methods and discussion. \cref{app:intuition} provides some intuition why weight factorization with $L_2$ regularization promotes sparse solutions based on \cref{fig:rescaling-symmetry}.
%

\section{Theoretical results} \label{sec:theory}

Based on a given network specification of $f(\w,\bm{x})$, we study its \emph{depth-$D$} factorization with $D\geq2$, which we call Deep Weight Factorization (DWF) and is defined as follows:
\begin{definition}[Deep Weight Factorization]
    A depth-$D$ factorization with $D\in\mathbb{N}_{\geq2}$ of an arbitrary neural network $f_{\w}(\w,\bm{x}),\, \w \in \mathbb{R}^p$, is given by $f_{\w}(\bomega_1 \odot \ldots \odot \bomega_D, \bx)$ with $\bomega= (\bm{\omega}_1,\ldots,\bm{\omega}_D)$ and factors $\bm{\omega}_d = (\omega_{1,d},\ldots,\omega_{p,d}) \in \mathbb{R}^{p}, d\in[D]$. The original and factorized parameters are related through $\w = \bm{\omega}_1 \odot \ldots \odot \bm{\omega}_D=: \bomegae$, where $\bomegae$ denotes the collapsed parameter. 
    Further, a factorization depth is called \emph{shallow} for $D=2$ and otherwise \emph{deep}.
\end{definition}


In this work, we focus on unstructured sparsity. This means all weights and biases in $f_{\w}$ are factorized using DWF. In principle, however, the factorization can also be selectively applied to arbitrary subsets of the parameters $\w$. Importantly, while DWF does not alter the expressive capacity of the underlying network $f_{\w}$, it drastically alters the optimization dynamics and enables sparse learning in conjunction with $L_2$ regularization or weight decay. Therefore, our focus lies on examining the effects of $L_2$ regularization and the behavior of SGD optimization in factorized networks.

The regularized training loss 
with DWF and regularization strength $\lambda>0$ is defined to be
{
\begin{eqnarray}\label{eq:losses}
\mathcal{L}_{\bomega,\lambda}(\bomega)=
\frac{1}{n} \sum_{i=1}^n \ell\left(y_i, f_{\w}\left(\bomega_1 \odot \ldots \odot \bomega_D, \bm{x}_i\right)\right)+\frac{\lambda}{D} \sum_{d=1}^{D}\Vert{\bomega_d}\Vert_2^2. 
\end{eqnarray}
}

For a given $\w$, applying DWF to the training objective introduces an infinite set of feasible factorizations $\{(\bomega_1,\ldots,\bomega_D): \bomegae = \w\}$ that leave the network output $f_\w(\w, \bx)$ and loss invariant. Those factorizations, however, differ in their respective norms. While the norm of individual factors can grow arbitrarily large, there exist factorizations that minimize the Euclidean norm, or equivalently, the factor $L_2$ penalty. $L_2$ regularization thus biases the optimization toward min-norm factorizations. This regularization ensures that the parameter representation strives to be evenly distributed across factors. The following result formalizes the necessary optimality conditions for the factorized objective, identifying solution candidates as those that achieve minimal norm configuration.

\begin{lemma}[Necessary condition for solution and minimum $L_2$ penalty] \label{lemma:min-l2-penalty}
Let $\bomega = (\bomega_1, \ldots, \bomega_D) \in \mathbb{R}^{Dp}$ be a local minimizer of $\mathcal{L}_{\bomega,\lambda}(\bomega)$. Then i) $|\omega_{j,1}| = \ldots = |\omega_{j,D}|\,$ for all $j \in [p]$, and ii) the factor $L_2$ penalty reduces to $D^{-1} \sum_{d=1}^D \|\bomega_d\|_2^2 = \|\bomegae\|_{2/D}^{2/D}$.
\end{lemma}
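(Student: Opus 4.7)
The plan is to exploit the fact that the data-fit term in
\[
\mathcal{L}_{\bomega,\lambda}(\bomega) = \mathcal{L}_{\bomega,0}(\bomega) + \frac{\lambda}{D}\sum_{d=1}^D \|\bomega_d\|_2^2
\]
depends on the factors only through the Hadamard product $\bomegae = \bomega_1 \odot \cdots \odot \bomega_D$. Hence, if $\bomega$ is a local minimizer, any small perturbation of the factors that preserves $\bomegae$ leaves $\mathcal{L}_{\bomega,0}$ unchanged and therefore cannot decrease the penalty. So $\bomega$ must locally minimize $\sum_d \|\bomega_d\|_2^2$ over the fiber $\{\bomega': \bomega'_1 \odot \cdots \odot \bomega'_D = \bomegae\}$. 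Both the constraint and the penalty decouple across coordinates $j \in [p]$, so the problem reduces to $p$ independent scalar subproblems: minimize $\sum_{d=1}^D \omega_{j,d}^2$ subject to $\prod_{d=1}^D \omega_{j,d} = \varpi_j$.

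Next I would dispatch each scalar subproblem. For $\varpi_j \neq 0$, AM--GM applied to the nonnegative quantities $\omega_{j,d}^2$ gives
\[
\sum_{d=1}^D \omega_{j,d}^2 \;\geq\; D\Bigl(\prod_{d=1}^D \omega_{j,d}^2\Bigr)^{1/D} \;=\; D\,|\varpi_j|^{2/D},
\]
with equality iff all $\omega_{j,d}^2$ coincide, i.e.\ $|\omega_{j,1}| = \cdots = |\omega_{j,D}| = |\varpi_j|^{1/D}$, which is (i) in this case. For $\varpi_j = 0$, at least one factor already vanishes; if any other factor were nonzero, I could shrink it continuously toward zero while the product stays zero, strictly decreasing the penalty without changing the loss, contradicting local minimality. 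Hence all $\omega_{j,d}=0$, so (i) holds trivially. Claim (ii) then follows by summing the coordinate-wise equality $\sum_d \omega_{j,d}^2 = D\,|\varpi_j|^{2/D}$ over $j$ and dividing by $D$, which is exactly $\|\bomegae\|_{2/D}^{2/D}$ by definition of the $L_{2/D}$ quasi-norm.

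The main subtlety I expect is the case $\varpi_j = 0$: the product-preserving set is singular there, so AM--GM alone does not give a local minimum, and the argument must rely explicitly on local minimality of $\mathcal{L}_{\bomega,\lambda}$ to rule out ``partially zero'' configurations. Beyond that, the argument is essentially decoupled scalar optimization via AM--GM combined with the symmetry induced by the factorization.
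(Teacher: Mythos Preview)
Your approach is correct and mirrors the paper's: reduce to per-coordinate minimization of the $L_2$ penalty over the product fiber and invoke AM--GM to force $|\omega_{j,1}|=\cdots=|\omega_{j,D}|$. You are in fact more careful than the paper, which applies AM--GM uniformly without separately dispatching the degenerate case $\varpi_j=0$ where, as you correctly flag, AM--GM alone does not rule out partially-zero configurations and one must appeal directly to local minimality.
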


Using the result of~\cref{lemma:min-l2-penalty}, we introduce the concept of \textbf{factor misalignment} to quantify the distance from balanced factorizations required for solutions. 
Specifically, the factor misalignment is defined as $M(\bomega) = D^{-1} \sum_{d=1}^D \Vert\bomega_d\Vert_2^2 - \Vert \bomegae \Vert_{2/D}^{2/D}$ and captures the difference between the factor $L_2$ penalty and that of a balanced minimum-norm factorization of the same collapsed $\bomegae$. The misalignment satisfies $M(\bomega) \geq 0$, with equality if and only if the factorization is balanced. 
This allows us to restrict the search for potential solutions to balanced factorizations $M(\bomega)=0$, as required by \cref{lemma:min-l2-penalty}. \cref{lemma:balancedness} in \cref{app:conserved-balancedness} describes the remarkable implications of reaching zero misalignment for SGD dynamics, collapsing the dynamics to a constrained symmetry-induced subspace in which the parameters remain for all future iterations (cf.~\cref{fig:vgg19-misalign} for dynamics of $M(\bomega)$).


The results from \cref{lemma:min-l2-penalty,lemma:balancedness} highlight the significance of factor misalignment for both the landscape of loss functions under DWF and the trajectories of SGD optimization. For balanced factorizations, the usual smooth $L_2$ penalty remarkably takes the equivalent form of a sparsity-inducing regularizer, with SGD dynamics being restricted to simpler symmetry-induced subspaces. Notably, both $L_2$ regularization and SGD noise naturally drive the dynamics towards balance \citep{chen2024stochastic}. These observations motivate the following key result:

\begin{theorem}[Equivalence of optimization problems] \label{thm:equi}
    The optimization problems 
\begin{eqnarray} \label{eq:org}    
&\min_{\w\in \mathbb{R}^p} \Lw(\w):= \frac{1}{n} \sum_{i=1}^n \ell\left(y_i, f_{\w}\left(\w,\bm{x}_i\right)\right) + \lambda \| \w \|_{2/D}^{2/D} \phantom{of_the_opera_of_the_op}\\ \label{eq:orgover}
&\min_{\bm{\omega} \in \mathbb{R}^{Dp}} \Lomega(\bomega) := \frac{1}{n} \sum_{i=1}^n \ell\left(y_i, f_{\w}\left(\bomega_1 \odot \ldots \odot \bomega_D,\bm{x}_i\right)\right)+\frac{\lambda}{D}\sum_{d=1}^{D}\Vert{\bomega_d}\Vert_2^2
\end{eqnarray}
    have the same global and local minima with the respective minimizers related as $\hat{\w}=\hat{\bomega}_1 \odot \ldots \odot \hat{\bomega}_D$. 
\end{theorem}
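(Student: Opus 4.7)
The plan is to reduce \cref{thm:equi} to a pointwise comparison between $\Lomega$ and $\Lw$ via two structural facts: (i) an AM--GM lower bound $D^{-1}\sum_{d=1}^D\Vert\bomega_d\Vert_2^2\geq\Vert\bomegae\Vert_{2/D}^{2/D}$, which holds for every factorization and attains equality exactly on balanced ones (this is already implicit in the necessary condition of \cref{lemma:min-l2-penalty}); and (ii) an explicit balanced lift of any $\w\in\mathbb{R}^p$, namely $\omega_{j,d}^{(\w)}=s_{j,d}|w_j|^{1/D}$ with signs chosen so that $\prod_d s_{j,d}=\sign(w_j)$ when $w_j\neq 0$, and $\omega_{j,d}^{(\w)}=0$ when $w_j=0$. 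Together these give the key sandwich $\Lomega(\bomega)\geq\Lw(\bomegae)$ for all $\bomega$, with equality whenever $\bomega$ is balanced.

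For the global case, the equivalence then follows from a one-step chain: taking the infimum in $\Lomega(\bomega)\geq\Lw(\bomegae)$ gives $\inf\Lomega\geq\inf\Lw$, and evaluating $\Lomega$ at the balanced lift of any candidate $\w$ gives the reverse inequality. Any minimizer $\hat\bomega$ of $\Lomega$ is balanced by \cref{lemma:min-l2-penalty}, hence $\Lw(\hat\bomegae)=\Lomega(\hat\bomega)=\inf\Lw$, so $\hat\bomegae$ minimizes $\Lw$; conversely, any balanced lift of a global minimizer $\hat\w$ of $\Lw$ attains $\inf\Lomega$.

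For the local case I would argue by contradiction in both directions. The $(\Leftarrow)$ direction is immediate: if $\hat\w$ is a local minimum of $\Lw$ and $\hat\bomega$ is any balanced lift, then any sequence $\bomega^{(k)}\to\hat\bomega$ strictly decreasing $\Lomega$ would collapse to $\w^{(k)}=\bomega_1^{(k)}\odot\cdots\odot\bomega_D^{(k)}\to\hat\w$ with $\Lw(\w^{(k)})\leq\Lomega(\bomega^{(k)})<\Lomega(\hat\bomega)=\Lw(\hat\w)$, a contradiction. The $(\Rightarrow)$ direction requires lifting a descent sequence $\w^{(k)}\to\hat\bomegae$ for $\Lw$ back into one approaching $\hat\bomega$; I would do this by \emph{freezing} the signs to match $\hat\bomega$, setting $s_{j,d}=\sign(\hat\omega_{j,d})$ whenever $\hat w_j\neq 0$ (so that $\prod_d s_{j,d}=\sign(\hat w_j)=\sign(w_j^{(k)})$ for all large $k$) and any valid signs otherwise, and then taking $\omega_{j,d}^{(k)}=s_{j,d}|w_j^{(k)}|^{1/D}$. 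The resulting $\bomega^{(k)}$ are balanced lifts of $\w^{(k)}$, converge to $\hat\bomega$ by continuity of the $D$-th root, and satisfy $\Lomega(\bomega^{(k)})=\Lw(\w^{(k)})<\Lw(\hat\bomegae)=\Lomega(\hat\bomega)$, contradicting local minimality of $\hat\bomega$.

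The hard part will be this continuous sign matching around coordinates where $\hat w_j=0$, since there the map $\w\mapsto\bomega^{(\w)}$ is not smooth and the sign of $w_j^{(k)}$ can oscillate; however, by balancedness $\hat\omega_{j,d}=0$ for \emph{all} $d$ in such coordinates, so $|w_j^{(k)}|^{1/D}\to 0=|\hat\omega_{j,d}|$ regardless of sign choice, which is all that is needed for convergence $\bomega^{(k)}\to\hat\bomega$ and hence the contradiction.
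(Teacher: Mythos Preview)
Your proposal is correct and follows essentially the same route as the paper: both hinge on the sandwich $\Lomega(\bomega)=\Lw(\bomegae)+\lambda M(\bomega)\geq\Lw(\bomegae)$ (with equality exactly at balanced factorizations), and both reduce the local-minimum correspondence to constructing a continuous balanced lift with matched signs near $\hat{\bomega}$. Your sign-freezing construction---use $\sign(\hat{\omega}_{j,d})$ where $\hat{w}_j\neq 0$, arbitrary valid signs where $\hat{w}_j=0$, and rely on $|w_j^{(k)}|^{1/D}\to 0$ in the latter case---is exactly the content of the paper's auxiliary \cref{lemma:aux-lemma-thm}; the only cosmetic difference is that you phrase the local argument via descent sequences whereas the paper works directly with $\varepsilon$--$\delta$ balls.
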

Practically speaking, instead of attempting to optimize the non-smooth problem in Eq.~(\ref{eq:org}), we can alternatively optimize the smooth problem in Eq.~(\ref{eq:orgover}) as every local or global solution of the DWF model will yield a corresponding local or global solution in the original model space. Hence, this allows inducing sparsity in typical deep learning applications with SGD-optimization by a simple $L_2$ regularization using Eq.~(\ref{eq:orgover}). In contrast, the non-differentiability in~(\ref{eq:org}) will cause the optimization to oscillate and not provide the desired sparsity (see \cref{sec:failure-direct-l1} and \cref{fig:lenet300100-fmnist-seeds-pruning}). The $L_{2/D}$ penalty in Eq.~(\ref{eq:org}) becomes non-convex and increasingly closer to the $L_0$ penalty for $D>2$, permitting a more aggressive penalization of small weights than $L_1$ regularization \citep{frank1993statistical}.




While the theoretical equivalence derived in \cref{thm:equi} establishes correspondence of all minimizers and suggests a simple way to induce sparsity in arbitrary neural networks, the optimization of a DWF model is not straightforward and little is known about the learning dynamics of such a model. We will hence study these two aspects in the following section. 

\section{Optimization and dynamics of deep factorized networks} \label{sec:opt}

Two crucial aspects of successfully training DWF models are their initialization and the learning rate when optimizing with SGD. 

\begin{figure}
    \centering
    \includegraphics[width=1.0\linewidth]{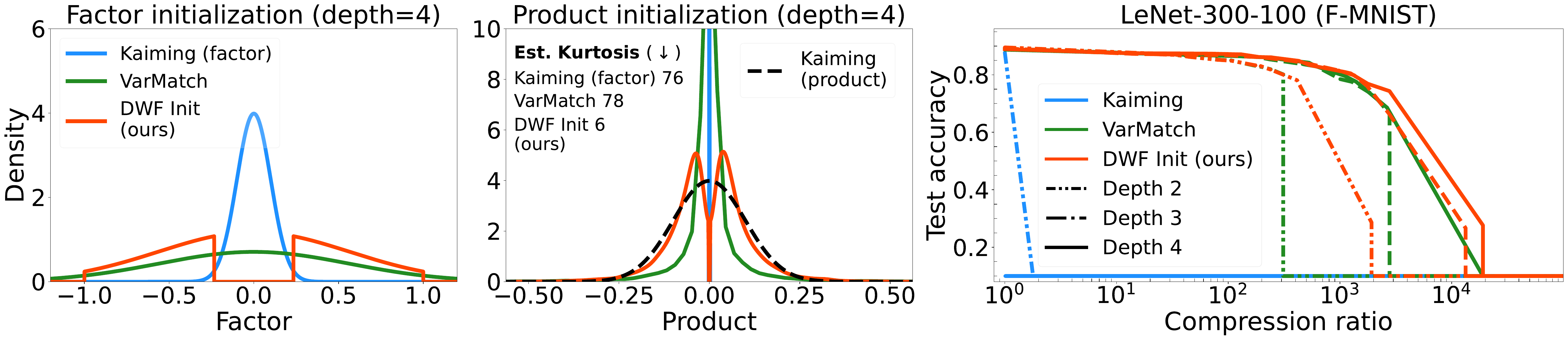}
    \caption{DWF initialization strategies. \textbf{Left}: factor densities with variance matching and truncation. \textbf{Middle}: product densities for $D=4$ illustrating kurtosis explosion without truncation. \textbf{Right}: sparsity-accuracy curves for different initializations and $D$, showing the failure of standard initialization.}
    \label{fig:init-combined-plot}
\end{figure}

\subsection{Initialization} \label{sec:init}

Applying DWF to a neural network factorizes each parameter into a product of $D$ factors. However, initializations for factorized neural networks are not straightforward since the product distribution of random variables often drastically differs from the factor distribution, leading to pathological behavior, especially for deep factorizations. To retain the properties of standard initializations, defined in \cref{app:proofs-definitions}, adjustments to the factor initializations need to be implemented.\\
Our exposition focuses on the simplest case $\mathrm{w}_j^{(l)} \sim \mathcal{N}(0,1/n_{\text{in}}^{(l)})$, where $n_{\text{in}}^{(l)}$ is the number of input units to the $l$-th layer \citep{lecun2002efficient}, but similar arguments can be made for other approaches. While \citet{ziyin2023spred} use standard initialization for shallow factorizations with good results, this consistently fails for $D>2$ in our experiments and only works in few cases for $D=2$ (cf.~\cref{fig:init-combined-plot,fig:initgrid-lenet5-fmnist-compar}). The following result shows that initializing a factorized neural network using a standard scheme 
leads to deteriorating initialization quality and vanishing activation variance:

\begin{lemma}[Standard initializations in factorized networks]\label{lemma:init-factorized-networks} Consider a factorized neural network with $L$ layers and factorization depth $D \geq 2$, where $\w^{(l)}=\bomega_1^{(l)} \odot \ldots \odot \bomega_D^{(l)}$ and the scalar factors $\omega_{j,d}^{(l)}$ are initialized using a standard scheme. Then \textbf{i)} the collapsed weights $\omegae_j^{(l)}=\prod_{d=1}^D \omega_{j,d}^{(l)} \overset{p}{\longrightarrow} 0$ as $D$ grows, and \textbf{ii)} for any $D\geq 2$, the variance of the activations vanishes in both $n_{\text{in}}$ and $L$.  
\end{lemma}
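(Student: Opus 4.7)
Both claims follow from a single elementary fact: for mutually independent zero-mean random variables $X_1,\dots,X_D$, we have $\Var[\prod_{d=1}^D X_d] = \prod_{d=1}^D \Var[X_d]$. Applied to the independent factors $\omega_{j,d}^{(l)} \sim \mathcal{N}(0, 1/n_{\text{in}}^{(l)})$, this immediately gives $\mathbb{E}[\omegae_j^{(l)}]=0$ and $\Var[\omegae_j^{(l)}] = (n_{\text{in}}^{(l)})^{-D}$, which is the key quantity driving both parts.

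For part \textbf{i)}, Chebyshev's inequality applied to the collapsed weight yields
\begin{equation*}
\mathbb{P}\!\left(|\omegae_j^{(l)}| > \epsilon\right) \;\leq\; (n_{\text{in}}^{(l)})^{-D}/\epsilon^2 \;\longrightarrow\; 0 \text{ as } D \to \infty,
\end{equation*}
for any $\epsilon>0$ whenever $n_{\text{in}}^{(l)} \geq 2$, giving the stated convergence in probability. (The degenerate case $n_{\text{in}}^{(l)} = 1$, of no practical relevance, can be handled by taking logs and applying the strong law of large numbers to $\log|\omega_{j,d}^{(l)}|$, using the classical inequality $\mathbb{E}[\log|Z|]<0$ for $Z\sim\mathcal{N}(0,1)$.)

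For part \textbf{ii)}, I propagate these collapsed-weight variances through the layers via the standard signal-propagation calculation. Under the usual independence simplifications at initialization (factors iid across input positions and independent of the preceding activations), the pre-activation $z_j^{(l)} = \sum_{i=1}^{n_{\text{in}}^{(l)}} \omegae_{ij}^{(l)} a_i^{(l-1)}$ satisfies
\begin{equation*}
\Var[z_j^{(l)}] \;=\; n_{\text{in}}^{(l)} \cdot \Var[\omegae^{(l)}] \cdot \mathbb{E}\!\left[\big(a^{(l-1)}\big)^{2}\right] \;=\; (n_{\text{in}}^{(l)})^{1-D} \cdot \mathbb{E}\!\left[\big(a^{(l-1)}\big)^{2}\right].
\end{equation*}
For any $D \geq 2$ and $n_{\text{in}}^{(l)} \geq 2$, the prefactor is at most $1/2$, so the variance shrinks layer by layer. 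Absorbing any further activation-dependent contraction (e.g., the factor $1/2$ for ReLU) into a constant $c \in (0,1]$ and unrolling yields $\mathbb{E}[(a^{(L)})^{2}] \leq c^{L} \prod_{l=1}^L (n_{\text{in}}^{(l)})^{1-D} \cdot \mathbb{E}[(a^{(0)})^{2}]$, which tends to $0$ geometrically in $L$ (with $n_{\text{in}}$ fixed) and polynomially in $n_{\text{in}}$ (with $L$ fixed). This is precisely the opposite of the variance-preserving regime that standard non-factorized LeCun initialization is designed to achieve.

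The main difficulty is expositional rather than mathematical: cleanly stating the standard independence assumptions used in signal-propagation analyses, and tracking activation-dependent constants across common nonlinearities (ReLU, tanh, etc.), without obscuring the essential $(n_{\text{in}})^{-D}$ scaling. All substantive content reduces to the product-of-variances identity, which is precisely what makes naive standard initialization pathological once the factorization depth exceeds one.
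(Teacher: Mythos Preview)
Your proposal is correct and follows essentially the same approach as the paper's proof: both parts rest on the product-of-variances identity for independent zero-mean factors, with Chebyshev's inequality for part \textbf{i)} and the standard layer-wise signal-propagation recursion giving the $(n_{\text{in}})^{1-D}$ contraction for part \textbf{ii)}. Your treatment is in fact slightly more careful than the paper's (you explicitly handle the degenerate $n_{\text{in}}=1$ case and track activation-dependent contraction constants, whereas the paper assumes a near-linear activation and $n_{\text{in}}\geq 2$), but the core argument is identical.
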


\paragraph{Rectifying the failure of standard initializations in DWF} Given a standard initialization $\mathrm{w} \sim \mathcal{P}(\mathrm{w})$ with variance $\sigma_{\mathrm{w}}^2$, the first step is to correct the variance of the product $\omegae$ 
by initializing the factors $\omega_d$ so that the variance of their product matches that of $\mathcal{P}(\mathrm{w})$. 
This variance matching of $\omegae$ and $\mathrm{w}$ is achieved by setting 
$\text{Var}(\omega_d) = \text{Var}(\mathrm{w})^{1/D}$ and named \textit{VarMatch} initialization here. 
However, only considering the variance overlooks the importance of higher-order moments for initialization in deep learning. For example, given a factor initialization $\omega_d \sim \mathcal{N}(0, \sigma^2)$, we have $\mathbb{E}\big[(\omegae)^2\big] = \sigma^{2D}, \quad \mathbb{E}\big[(\omegae)^4\big] = 3^D \sigma^{4D}$,
implying the kurtosis of $\omegae$ grows exponentially as $\kappa_{\omegae} = 3^D$ regardless of variance matching (cf.~\cref{fig:init-combined-plot}). In DWF with plain variance matching, we observe a performance decline and the undesirable emergence of inactive weights (cf.~\cref{fig:resnet18-truncation-compar}).\footnote{Inactive or dead weights are collapsed weights $\omegae$ consisting of factors $\omega_d$ with vanishingly small initialization values, resulting in $\omegae$ not changing during training.
} 


Since variance matching alone does not yield satisfactory results for $D>2$, 
we additionally propose a tailored interval truncation of the factor initialization outside of a certain absolute value range and name this approach $\texttt{DWF}$ initialization (see also~\cref{alg:init}). This redistributes the accumulating probability mass away from $0$ and prevents catastrophic initialization of dead weights. The truncation thresholds control the smallest and largest possible absolute values $\omegae_{\min}$ and $\omegae_{\max}$ of $\omegae$, defining the support of the product distribution. Setting the upper truncation threshold to $(2 \sigma_{\w})^{1/D}$ to address large outliers and the lower threshold to  $\varepsilon^{1/D}$, for some $\varepsilon>0$, successfully removes pathological product initializations in our experiments. 

Together, the crucial ingredients for \texttt{DWF} initialization are corrections for both the vanishing variance of the product distribution and its concentration around zero. 

\begin{remark}
The factorized bias parameters should not be initialized to all zeros, as this corresponds to a saddle point from which gradient descent cannot escape by symmetry (see \cref{lemma:balancedness}).
\end{remark}

\subsection{Learning rate}

\begin{figure}[t]
    \centering
    \begin{subfigure}[b]{0.48\textwidth}
        \centering
        \includegraphics[width=0.9\linewidth]{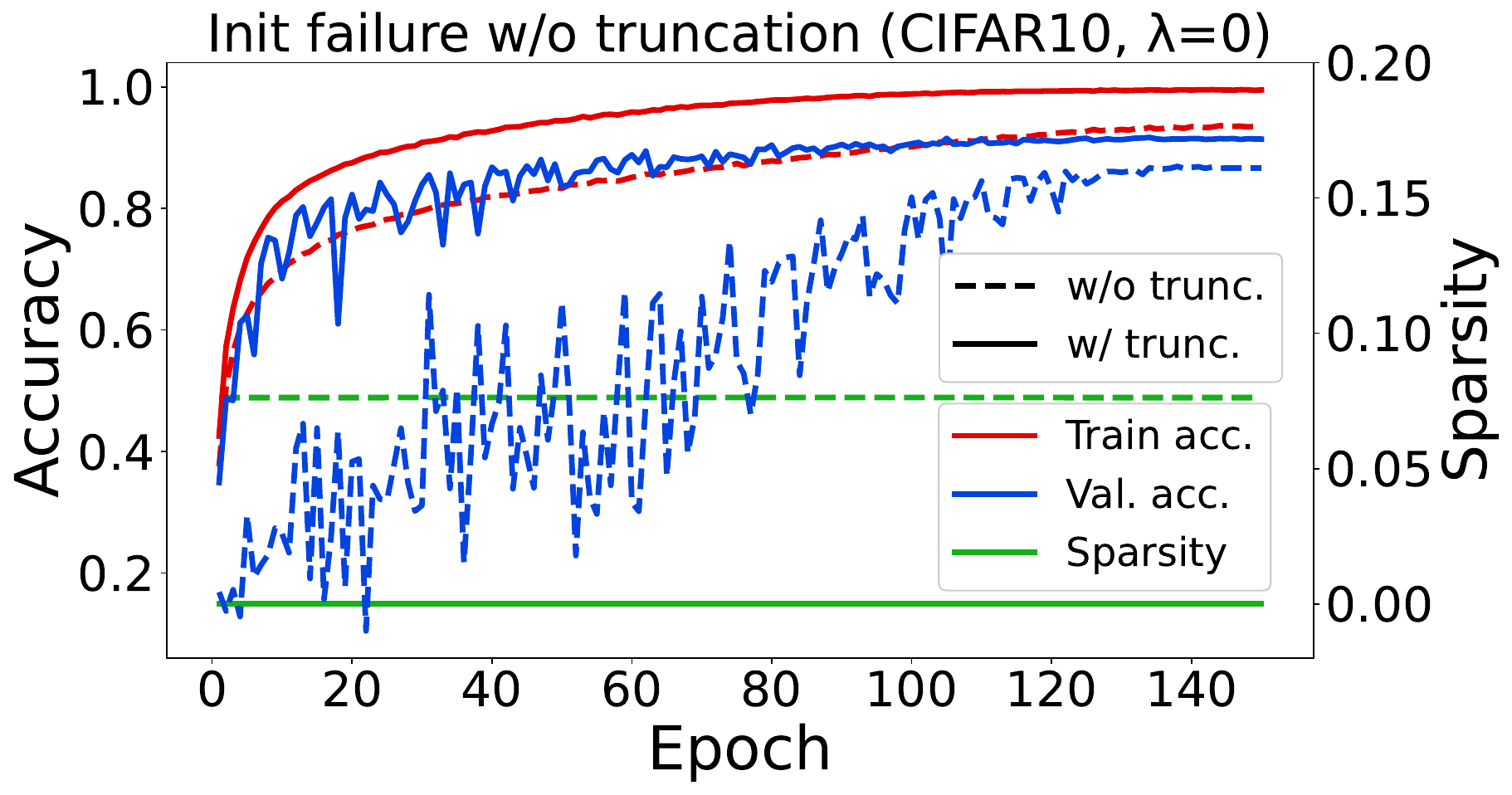}
        \caption{Acc. degradation and dead weights w/o truncation despite $\lambda=0$ for ResNet-18 and $D=10$.}
        \label{fig:resnet18-truncation-compar}
    \end{subfigure}
    \hfill
    \begin{subfigure}[b]{0.48\textwidth}
        \centering
        \includegraphics[width=0.9\linewidth]{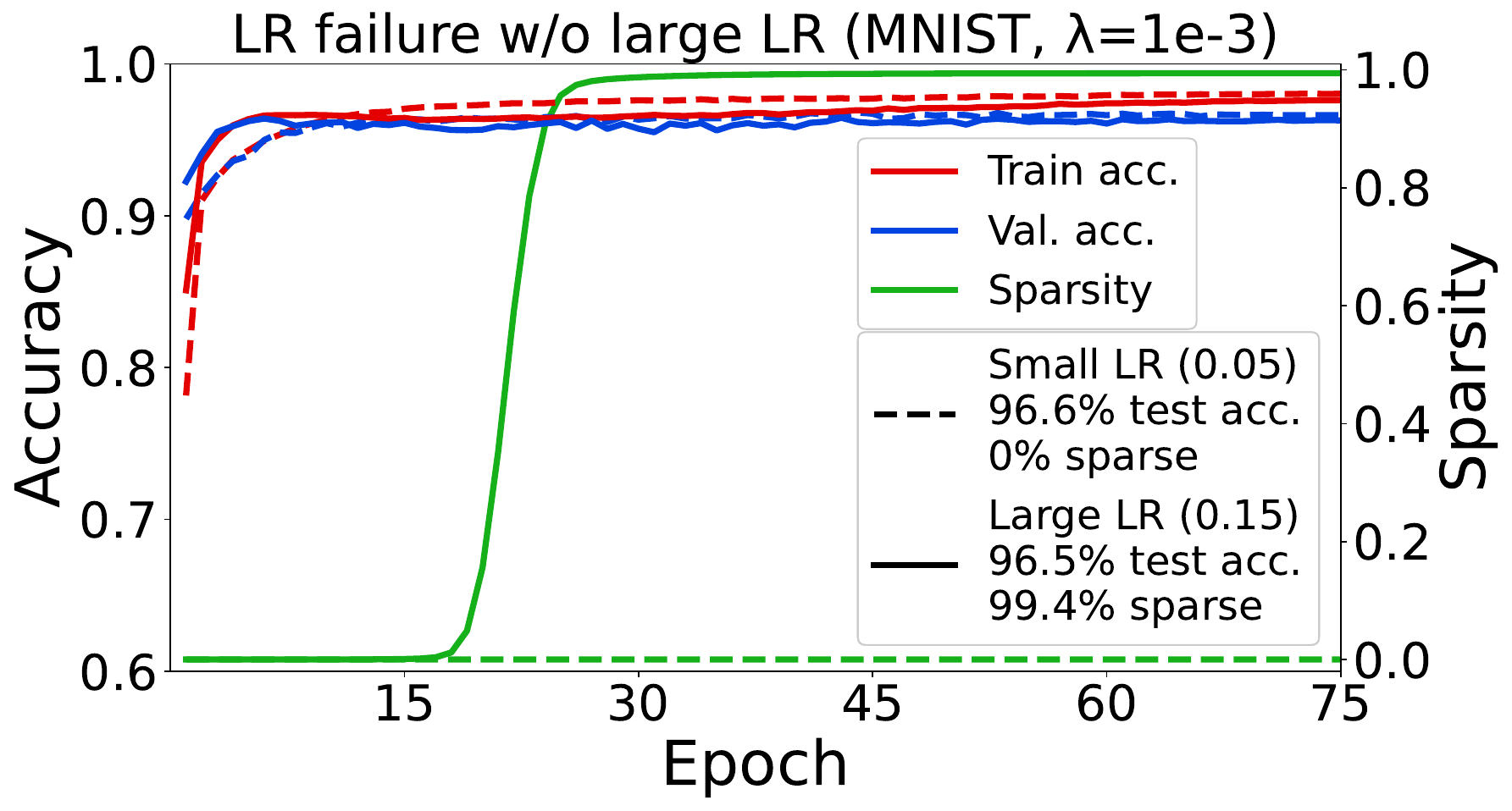}
        \caption{Sparsity only emerges with sufficiently large LR for LeNet-300-100 and $D=4$.}
        \label{fig:lenet300100-lrfail}
    \end{subfigure}%
    \caption[]{Failure modes when optimizing factorized neural networks.}
    \label{fig:optim-failure-modes}
\end{figure}

Another challenge in optimizing a depth-factorized model is the choice of learning rate (LR). As shown in \cref{fig:lenet300100-lrfail}, if the LR is chosen too small, the model cannot learn a sparse representation despite achieving the same generalization as a $99.4\%$ sparse model trained with large LR. This closely follows previous analyses of large LRs in neural network training dynamics: \citet{nacson2022implicit} show that large LRs help transition to a sparsity-inducing regime in diagonal linear networks. In more realistic scenarios, \citet{andriushchenko2023sgd} observe that a piece-wise constant (step decay) LR schedule with large initial LR induces phased learning dynamics including a short initial learning phase, followed by a period of sparse feature learning and loss stabilization, and sudden generalization upon reduction of the LR. Particular to symmetries and SGD, \citet{chen2024stochastic} demonstrate how large LRs help generalization by causing SGD to be attracted to symmetry-induced structures through stochastic collapse. 
We conjecture that in DWF, the introduction of $D$-fold artificial symmetries (cf.~\cref{def:artificial-rescaling}) accelerates this phenomenon and thus additionally aids sparse learning. The first row of \cref{fig:lrdecay-sparsities-resnet18} shows the training dynamics of deep factorized ResNets and demonstrates the requirement of large and small LR phases for DWF training.
These training dynamics are further discussed in the following section. Additionally, \cref{app:details-init-lrs} includes an ablation study on different LRs and factorization depths $D$, suggesting optimal sparsity-accuracy tradeoffs for initial LRs slightly below a critical threshold where training becomes unstable (\cref{fig:lr-grid-mnist}).

\begin{figure}[htp]
\centering
\includegraphics[width=1\textwidth]{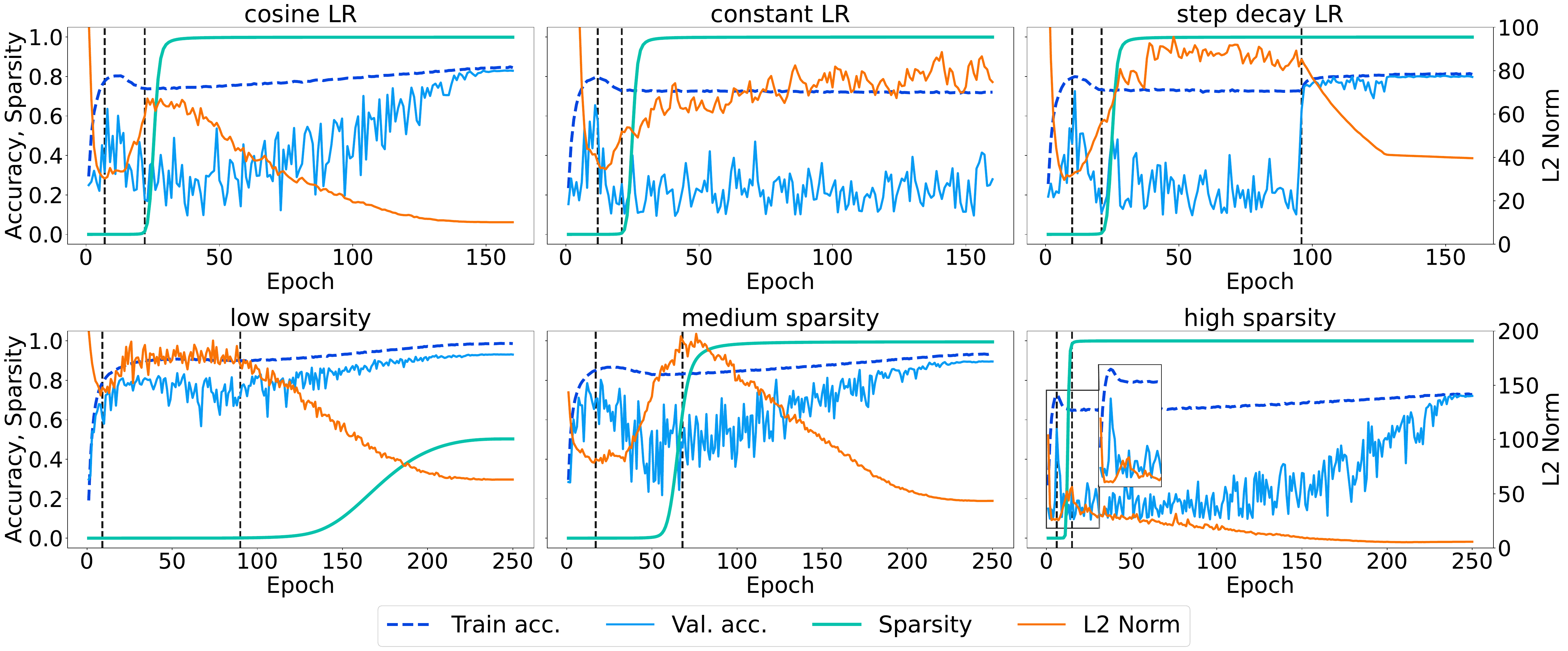}
\caption[]{Factorized ResNet-18 on CIFAR10 with $D=4$. Dashed lines indicate phase transitions. \textbf{Top}: Different LR schedules with same initial LR and $\lambda$. \textbf{Left}: cosine LR learns sparse and generalizing solutions. \textbf{Mid}: a const.\ large LR causes sparsification but no generalization. \textbf{Right}: step-decay LR displays sharply distinct sparsification and generalization phases in large and small LR phases. \textbf{Bottom}: For cosine LR, the three distinct learning phases occur at all sparsity levels, with sharper contracted dynamics for high sparsity.}
\label{fig:lrdecay-sparsities-resnet18}
\end{figure}

\subsection{Learning dynamics and delayed generalization}\label{sec:learning-dynamics-1}

The learning dynamics of DWF with cosine annealing exhibit three distinct phases, characterized by changes in accuracy, sparsity, and $L_2$ norm of the collapsed weights (\cref{fig:lrdecay-sparsities-resnet18}, second row):
In an \textbf{initial phase}, SGD learns easy-to-fit patterns without overfitting while the $L_2$ norm decreases. The \textbf{reorganization phase} is characterized by temporary drops in accuracy and an increase in weight norm, hinting at a period of representational restructuring to accommodate sparsity constraints. Sparsity emerges during or at the end of this phase. The final \textbf{mixed sparsification and generalization phase} shows improvements in training and validation accuracy as sparsification continues at a decreasing rate. The mixed nature of the final phase, contrasting sharply separated sparsification and generalization with step decay, is owed to the gradual reduction in cosine annealing. Notably, with increasing regularization $\lambda$, the dynamics contract, and the phases occur in closer succession. This phased behavior shows that the more contracted the reorganization phase is, the higher compression and the more severe delayed generalization will be. This is reminiscent of the ``grokking'' phenomenon \citep{power2022grokking} shown to be tightly linked to $L_2$ regularization  \citep{liu2023omnigrok}.

\subsection{Impact of regularization and evolution of layer-wise metrics}\label{sec:learning-dynamics-sparsity}

To investigate dynamics in more detail, we analyze the effect of $D$ and $\lambda$ on the sparsity and training trajectories (\cref{fig:resnet18-cifar10-cr-acc-combined}). Similar results for different architectures/datasets are included in \cref{app:additional-architectures}. 

\begin{figure}[ht]
\centering
\includegraphics[width=1\textwidth]{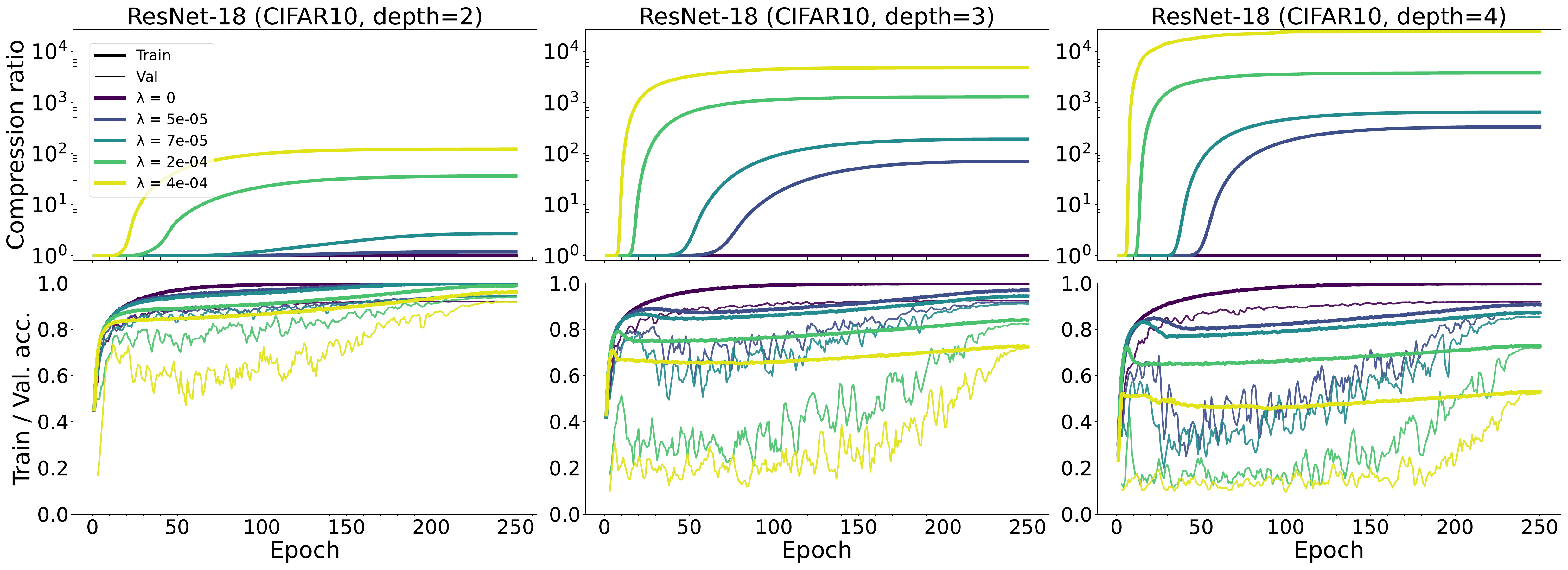}
\caption[]{Impact of regularization $\lambda$ on compression (\textbf{top}), training, and validation accuracy (\textbf{bottom}) for factorized ResNet-18 and $D \in \{2,3,4\}$. For large $\lambda$ severely delayed generalization and extreme compression emerges simultaneously. Colors indicate the same $\lambda$ in both rows.}
\label{fig:resnet18-cifar10-cr-acc-combined}
\end{figure}

As expected, increasing $\lambda$ leads to higher compression ratios across all depths. Moreover, greater $D$ enables higher compression ratios for the same $\lambda$. During the initial phase, the regularized training curves coincide with the unregularized trajectory until their departure at the onset of the reorganization phase. This departure occurs earlier the stronger the regularization. For greater factorization depths, the same $\lambda$ values induce higher sparsity at the cost of reduced generalization performance, indicating a stronger regularizing effect\footnote{Note that this does not imply worse performance in general, but a different optimal $\lambda$ for different $D$.}. The relationship between sparsity, $\lambda$, and the collapsed weight norm is further discussed in \cref{app:cr-norm-lambda}. \cref{app:layerwise-cr-norms} presents the layer-wise evolution of sparsity and weight norms, providing more detailed insights into the effects of DWF across the network topology for different architectures (e.g.,~\cref{fig:layerwise-l2-cr-dep3-resnet18}). Two key observations can be made: the first and last layers exhibit less sparsity, owed to their increased importance for the prediction. For the intermediate layers, there is a general trend toward higher compression for later layers. Secondly, the non-monotonic dynamics of the collapsed weight norm seem to be almost entirely driven by the first few and the last layers, while the intermediate layers behave homogeneously. Finally, the evolution of factor misalignment and its relation to the onset of sparsity is discussed in \cref{app:misalignment}.

\section{Performance evaluation} \label{sec:exp}


In this section, we evaluate the performance of DWF. 
In \cref{app:additional-architectures,app:experimental-details}, we provide further results 
and details on the experimental setup, including hyperparameters and training protocols.

\subsection{Failure of vanilla \texorpdfstring{$L_1$}{L1} optimization with SGD} \label{sec:failure-direct-l1} 

The failure of SGD with vanilla $L_1$ regularization to achieve inherent sparsity has been previously observed 
by \citet{ziyin2023spred,kolb2023smoothing}. It is natural to ask whether this limitation is merely a benign optimization artifact or if it degrades the prunability of the regularized models. 
We, therefore, train a LeNet-300-100 on Fashion-MNIST with vanilla $L_1$ regularization as well as with DWF and $D={2,3}$, inducing differentiable $L_1$ and non-convex $L_{2/3}$ regularization. 

\textbf{Results}\,\, The left plot in \cref{fig:lenet300100-fmnist-seeds-pruning} (page 1) shows the tradeoff between performance and inherent sparsity (before pruning) for 100 logarithmically spaced $\lambda$ values, confirming prior findings on the limitations of vanilla $L_1$ optimization. In contrast, differentiable $L_1$ regularization using DWF achieves a compression ratio of about 350 at $80\%$ test accuracy. In addition, our DWF network with $D=3$ is up to four times sparser than $D=2$ at the same accuracy, underscoring the advantages of deeper factorizations. In the right plot of \cref{fig:lenet300100-fmnist-seeds-pruning}, we subsequently apply post-hoc magnitude pruning to each of the models at increasing compression ratios (without fine-tuning) until reaching random chance performance and use the best-performing pruned model at each fixed compression ratio to obtain the pruning curves. 
Results indicate that differentiable sparse training with factorized networks provides better tradeoffs than vanilla $L_1$, even after accounting for the issues producing sparsity when using SGD with vanilla $L_1$. At $80\%$ test accuracy, vanilla $L_1$ plus pruning requires twice as many parameters as its DWF counterpart, and three times as many as DWF ($D=3)$. This suggests SGD with $L_1$ struggles to find similarly well-prunable structures, while DWF yields much sparser models. 


\subsection{Run times}\label{sec:runtimes}

The perceptive reader might be concerned about the computational overhead induced by training deep factorized networks. Our experiments show this concern to be unwarranted, as the effect of the factorization depth is rather unimportant compared to batch size for both time per sample and memory cost. 
\cref{app:runtimes} illustrates this for WRN-16-8 \citep{zagoruyko2016wide} and VGG-19 \citep{simonyan2014very}. For both models, the impact of factorization depth on computation time and memory usage 
becomes negligible as batch size increases. These findings suggest that practitioners can leverage deeper factorized networks without incurring substantial additional computational costs, particularly at typical batch sizes used in modern deep learning.

\subsection{Compression benchmark}\label{sec:benchmark}

We now evaluate DWF for factorization depths $D \in \{2,3,4\}$ against various pruning methods concerning test accuracy vs.\ compression, as well as the layer-wise allocation of the remaining weights.\\ 
\underline{Architectures and datasets}: Our experiments cover commonly used computer vision benchmarks: LeNet-300-100 and LeNet-5 \citep{lecun1998gradient} 
on MNIST, Fashion-MNIST, and Kuzushiji-MNIST, VGG-16 and VGG-19 \citep{simonyan2014very} on CIFAR10 and CIFAR100, and ResNet-18 \citep{he2016deep} on CIFAR10 and Tiny ImageNet.\\
\underline{Methods}: We compare our method against Global magnitude pruning (\textbf{GMP}) after training \citep{han2015learning}, a simple pruning method that removes the smallest weights across all layers and is surprisingly competitive, especially at low sparsities \citep{gale2019state, frankle2020pruning}; Single-shot Network Pruning (\textbf{SNIP}) \citep{lee2019snip}, a pruning-at-initialization technique, 
showing competitive performance 
against other recent pruning methods \citep{wang2020picking}; \textbf{SynFlow} \citep{tanaka2020pruning}, considered a state-of-the-art method for high sparsity regimes; 
\textbf{Random pruning}, serving as a naive baseline that removes weights uniformly at random; a shallow factorized network ($D=2$) which is our variant of the \textbf{spred} algorithm \citep{ziyin2023spred} with our tailored initialization.\\
\underline{Tuning}: For comparison methods, we use the established training configurations in \citet{lee2019snip, wang2020picking, frankle2020pruning} when available, and otherwise ensure comparability by using the same configuration for all methods. All models are trained with SGD and cosine learning rate annealing \citep{loshchilov2022sgdr}. 
For our method, \textbf{no} post-hoc pruning or fine-tuning is required and all layers are regularized equally. Further details are given in \cref{app:experimental-details}.

\begin{figure}[ht]
\centering
\includegraphics[width=0.95\textwidth]{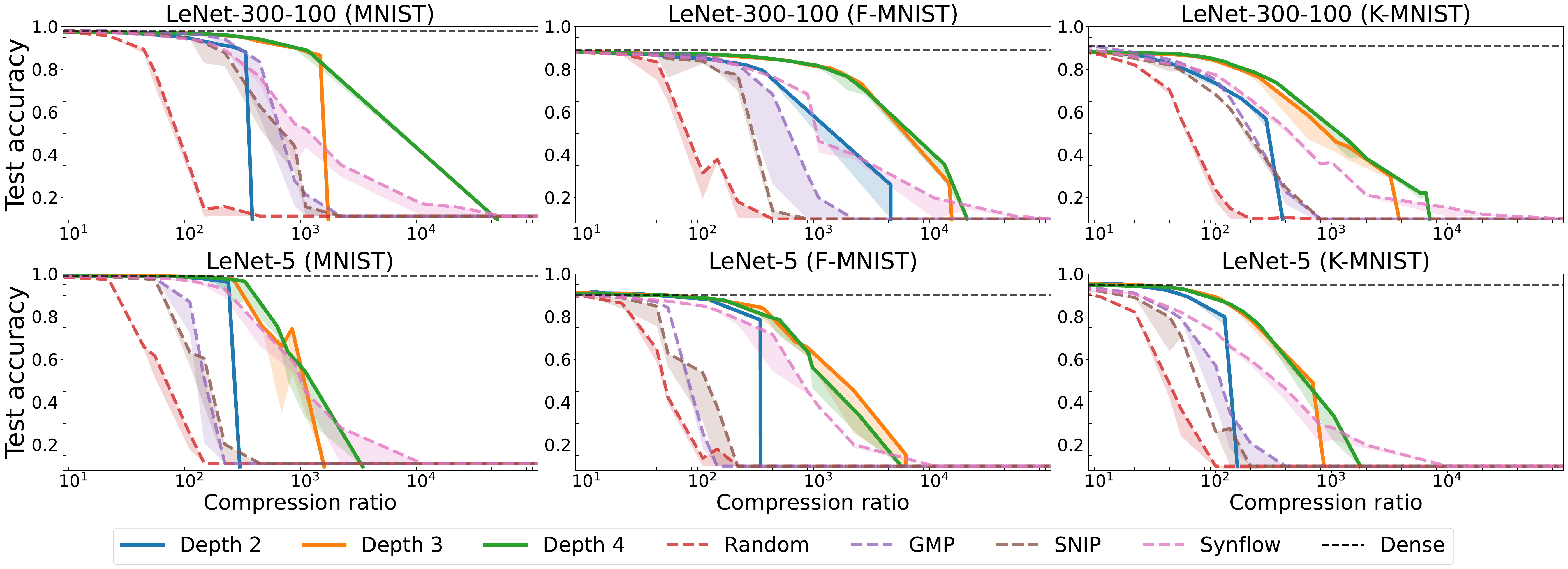}
\caption[]{Accuracy vs.\ sparsity tradeoffs for LeNet architectures on MNIST and replacements of varying difficulty. Lines depict median test accuracies and shaded areas the minimum over three random initializations. 
}
\label{fig:lenet-performance-grid-errorbars}
\end{figure}

\paragraph{Results} \cref{fig:lenet-performance-grid-errorbars} shows results for the fully-connected and convolutional LeNet-300-100 and LeNet-5 architectures on MNIST, F-MNIST, and K-MNIST.
Across all datasets and models, our proposed DWF consistently outperforms existing pruning techniques, particularly at higher compression ratios. For LeNet-300-100 on MNIST, DWF with $D=3,4$ remains within $5\%$ of the dense performance even above a compression ratio of $500$, significantly surpassing other methods. On F-MNIST and K-MNIST, a similar performance gain is observed. LeNet-5 exhibits similar trends, with DWF maintaining high accuracy at compression ratios where other techniques, especially random pruning and SNIP, have collapsed. Notably, DWF sustains performance up to a compression ratio of $100$ on K-MNIST, while competitors rapidly decline. SynFlow and GMP generally outperform random pruning and SNIP, but still fall short of DWF. When comparing shallow and deep factorizations, we see that $D>2$ retains performance and delays model collapse much longer than $D=2$. The results demonstrate that DWF offers substantial gains in compression capability. Further, the clear separation between DWF and other methods in the high-sparsity regime indicates that our approach captures aspects of the model's representational power that are missed by the other techniques. These findings underscore the potential of DWF, particularly under severe parameter constraints.

For larger architectures and more complex datasets (\cref{fig:cnn-performance-grid}), DWF continues to demonstrate superior performance, 
albeit less pronounced. 
We observe that the $D=2$ factorization excels in the medium sparsity regime below a compression ratio of $100$, while $D>2$ shows enhanced resilience to performance degradation and delayed model collapse at more extreme sparsity levels.

\begin{figure}[ht]
\centering
\includegraphics[width=0.635\textwidth]{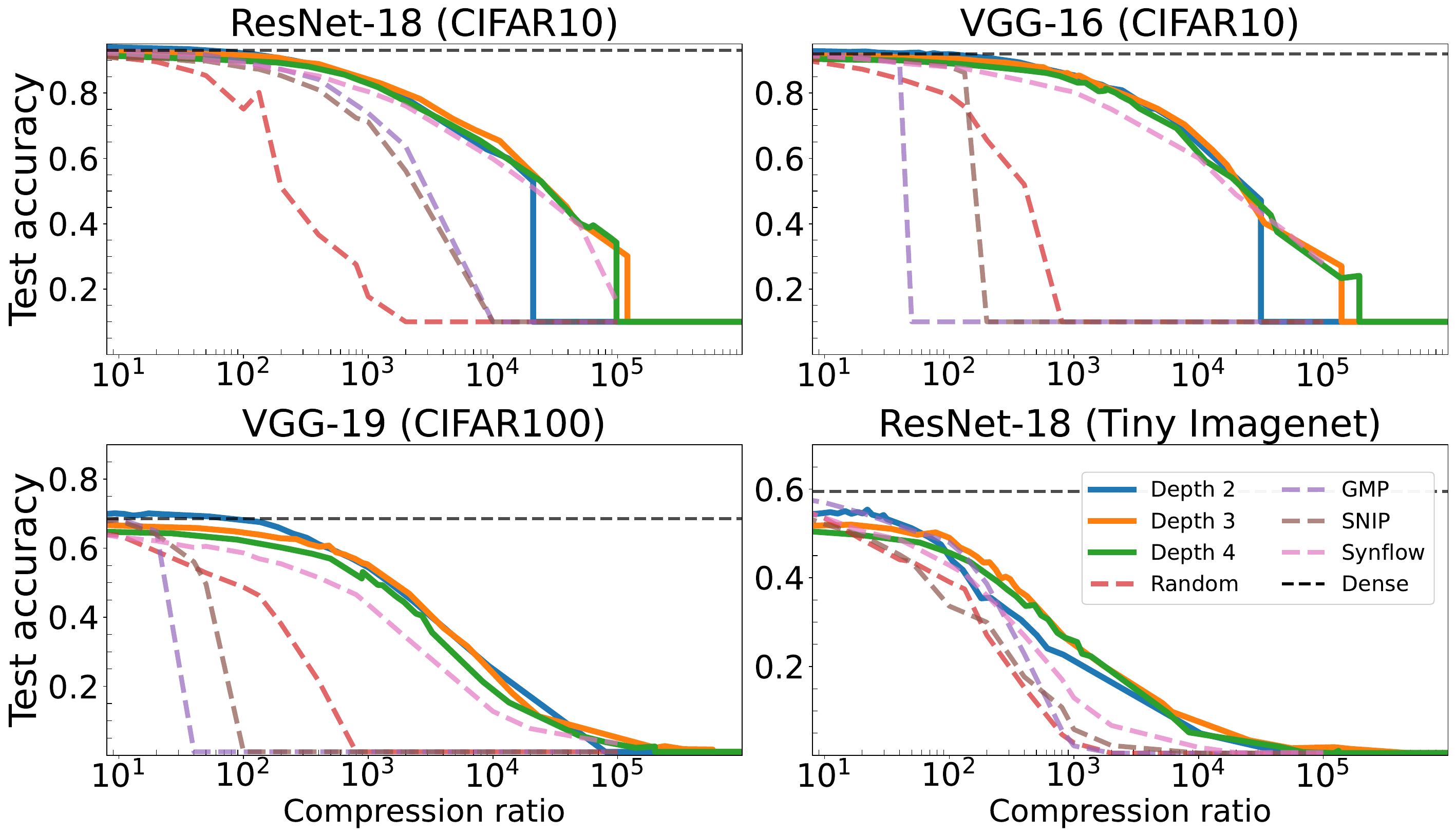}
\caption[]{Accuracy vs.\ sparsity for larger ResNet and VGG architectures on CIFAR and Tiny ImageNet.}
\label{fig:cnn-performance-grid}
\end{figure}

\cref{tab:interpolated_cr_by_method} 
showcases the sparsest models achieved by each method while maintaining performance within $5\%$ or $10\%$ of the dense model accuracy. These tolerance levels are suitable for testing the medium to high sparsity regimes we focus on. Other levels can be read from \cref{fig:cnn-performance-grid}. Of the $10$ presented scenarios, DWF with $D=3,4$ achieves the highest compression in $9$ cases, demonstrating the robustness of DWF in preserving model performance under extreme sparsity requirements. The best-ranked DWF model achieves $2$ to $5$ times the compression ratio of the best pruning method, and surpasses shallow factorization in all but one setting, albeit with smaller improvements ranging from $8\%$ to $298\%$. For example, DWF with $D=3$ reaches a compression ratio of $1014$ ($1456$) for VGG-19 on CIFAR100 (ResNet-18 on CIFAR10) at $10\%$ tolerance, improving by $8\%$ ($25\%$) over $D=2$ and by $381\%$ ($102\%$) over the best pruning method SynFlow.




\begin{table}[htbp]
\centering
\caption{Compression ratios (sparsities) of sparsest model within $\varepsilon_{acc}$ percentage points of the dense model test accuracy. Random pruning is left out for clarity.}
\scriptsize
\resizebox{0.99\textwidth}{!}{
\begin{tabular}{lcccccc}
\hline\hline
CR ($\uparrow$) & $\varepsilon_{acc}$ & \textbf{LeNet-300-100} & \textbf{LeNet-5} & \textbf{ResNet-18} & \textbf{VGG-19} & \textbf{ResNet-18} \\
 & & \tiny{F-MNIST} & \tiny{K-MNIST} & \tiny{CIFAR10} & \tiny{CIFAR100} & \tiny{Tiny ImageNet} \\
\cline{2-7}
Depth 2 & 5\% & 141 \scriptsize{(99.29\%)} & 52 \scriptsize{(98.08\%)} & 466 \scriptsize{(99.79\%)} & \textbf{484} \scriptsize{(99.79\%)} & 60 \scriptsize{(98.34\%)} \\
 & 10\% & 362 \scriptsize{(99.72\%)} & 78 \scriptsize{(98.71\%)} & 1169 \scriptsize{(99.91\%)} & 939 \scriptsize{(99.89\%)} & 99 \scriptsize{(98.99\%)} \\
Depth 3 & 5\% & \textbf{506} \scriptsize{(99.80\%)} & \textbf{75} \scriptsize{(98.67\%)} & \textbf{573} \scriptsize{(99.83\%)} & 440 \scriptsize{(99.77\%)} & \textbf{67} \scriptsize{(98.51\%)} \\
 & 10\% & 1422 \scriptsize{(99.93\%)} & 134 \scriptsize{(99.25\%)} & \textbf{1456} \scriptsize{(99.93\%)} & \textbf{1014} \scriptsize{(99.90\%)} & \textbf{161} \scriptsize{(99.38\%)} \\
Depth 4 & 5\% & 486 \scriptsize{(99.79\%)} & \textbf{75} \scriptsize{(98.67\%)} & 445 \scriptsize{(99.78\%)} & 215 \scriptsize{(99.53\%)} & 13 \scriptsize{(92.39\%)} \\
 & 10\% & \textbf{1442} \scriptsize{(99.93\%)} & \textbf{139} \scriptsize{(99.28\%)} & 1161 \scriptsize{(99.91\%)} & 675 \scriptsize{(99.85\%)} & 113 \scriptsize{(99.12\%)} \\
GMP & 5\% & 156 \scriptsize{(99.36\%)} & 22 \scriptsize{(95.37\%)} & 211 \scriptsize{(99.53\%)} & 37 \scriptsize{(97.27\%)} & 60 \scriptsize{(98.33\%)} \\
 & 10\% & 235 \scriptsize{(99.58\%)} & 32 \scriptsize{(96.84\%)} & 484 \scriptsize{(99.79\%)} & 68 \scriptsize{(98.52\%)} & 133 \scriptsize{(99.25\%)} \\
SNIP & 5\% & 76 \scriptsize{(98.69\%)} & 17 \scriptsize{(94.10\%)} & 140 \scriptsize{(99.29\%)} & 28 \scriptsize{(96.47\%)} & 18 \scriptsize{(94.34\%)} \\
 & 10\% & 146 \scriptsize{(99.32\%)} & 24 \scriptsize{(95.91\%)} & 339 \scriptsize{(99.70\%)} & 42 \scriptsize{(97.59\%)} & 41 \scriptsize{(97.56\%)} \\
Synflow & 5\% & 141 \scriptsize{(99.29\%)} & 21 \scriptsize{(95.23\%)} & 210 \scriptsize{(99.52\%)} & 46 \scriptsize{(97.81\%)} & 24 \scriptsize{(95.84\%)} \\
 & 10\% & 302 \scriptsize{(99.67\%)} & 37 \scriptsize{(97.30\%)} & 721 \scriptsize{(99.86\%)} & 218 \scriptsize{(99.54\%)} & 71 \scriptsize{(98.60\%)} \\
\hline\hline
\end{tabular}
}
\label{tab:interpolated_cr_by_method}
\end{table}

\paragraph{Allocation of layer-wise sparsity} Finally, we investigate the reasons for model collapse in SNIP and GMP in the high sparsity regime by plotting the layer-wise remaining ratio (1/CR) for ResNet-18 and VGG-16 on CIFAR10 in the medium and extreme compression regimes, as shown in \cref{fig:layerwise-remaining-grid}. At high compression, for both ResNet-18 and VGG-16, we observe that GMP and SNIP catastrophically prune entire layers. In contrast, SynFlow and DWF automatically learn adaptive layer-wise sparsity budgets which helps in avoiding such issues. While SynFlow does prune some layers entirely in ResNet-18, these correspond to skip connections that can be removed without interrupting the synaptic flow. Comparing SynFlow and DWF, we observe that DWF produces higher sparsity in the first and last layers across all settings. This is a particularly desirable property, as it leads to greater computational savings for a given overall sparsity level. Moreover, as opposed to removing the skip connections, DWF allocates less sparsity to these structures, suggesting a qualitatively distinct underlying structure optimization mechanism.

\begin{figure}[ht]
\centering
\includegraphics[width=0.635\textwidth]{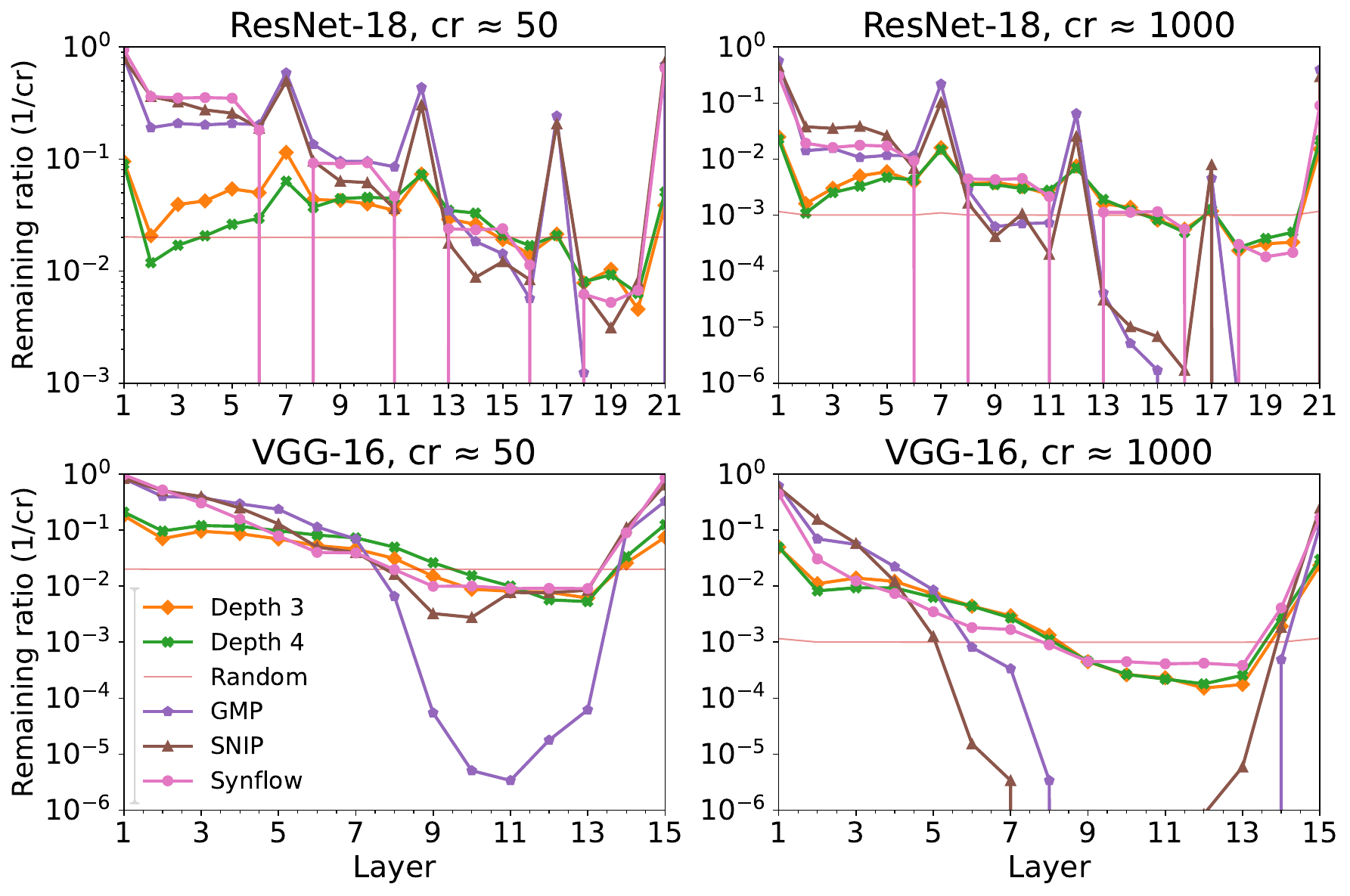}
\caption{Allocation of layer-wise sparsity for different methods. SNIP and GMP show catastrophic pruning of whole layers (collapse) for high sparsities, whereas DWF, like SynFlow, finds adaptive sparsity allocations.}
\label{fig:layerwise-remaining-grid}
\end{figure}

\section{Conclusion}\label{sec:discussion}

This paper introduces deep weight factorization (DWF), an extension of a previously proposed differentiable $L_1$ regularization to induce sparsity in general neural networks. By factorizing weights not only into two, but $D\geq 3$ parts, our method provably induces differentiable $L_{2/D}$ regularization that can be incorporated in any neural network. We identify practical training obstacles and propose tailored optimization strategies such as a depth-specific initialization and a sparsity-promoting learning rate scheme. We also characterize three distinct phases that describe the learning dynamics and (delayed) generalization behavior of DWF. Experiments demonstrate that DWF is usually superior to shallow factorization and outperforms dominant pruning techniques.

\paragraph{Limitations and future work} In this work, we primarily focused on $D\in\{2,3,4\}$. While not incurring significant computational overhead (cf.~\cref{fig:runtime-persample-2}), we found that increasing the factorization depth beyond four did not yield further sparsity improvements and introduced optimization challenges (cf.~\cref{fig:moredepth-ablation-depth-mnist-lenet300100}). Additionally, a limitation of our work is that we exclusively focused on DWF approaches resulting in unstructured sparsity regularization. Hence, an interesting potential direction for future research is to extend our factorization to structured sparsity problems. 

\subsubsection*{Acknowledgments}
DR’s research is funded by the Deutsche Forschungsgemeinschaft (DFG, German Research Foundation) – 548823575.

\bibliography{iclr2025_conference}
\bibliographystyle{iclr2025_conference}

\clearpage

\appendix
\appendixpage

\startcontents[sections]
\printcontents[sections]{l}{1}{\setcounter{tocdepth}{2}}

\section{Further related literature}\label{app:literature}

\paragraph{Convex and Non-convex sparse regularization}
Convex and non-convex regularization for sparsity-inducing effects has a long history as it is well-understood with strong theoretical underpinnings. Notable works in this direction include
\citet{tibshirani1996regression,fan2001variable, meinshausen2006high, zhang2008sparsity} for convex norm-based sparse regularization and 
\citet{friedman2010regularization,fan2001variable, zhang2010nearly, xu2010l1} studying its non-convex extensions. While mathematically rigorous, they see limited application in deep learning due to their non-differentiability exactly where sparsity is achieved, requiring the implementation of non-smooth optimization routines that are often inflexible and do not scale well. While some works exist on proximal-type optimization routines in deep learning \citep{yang2020proxsgd, deleu2021structured}, their popularity remains far behind pruning and other approaches, with these methods being rarely applied, used as comparisons, or given much attention in surveys \citep{hoefler2021sparsity,gale2019state}. Nevertheless, the use of $L_1$ or structured $L_{2,1}$ regularization is ubiquitous in sparse deep learning \citep{han2015learning, he2017channel, liu2017learning, li2022pruning}. These sparsity regularization pruning methods \citep{cheng2024survey} use sparse regularization to shrink weights before applying a subsequent pruning step for actual sparsity. However, the conceptual framework underlying these heuristic-based methods remains poorly understood. 

\paragraph{Weight factorization without and with explicit regularization}

Weight factorizations, also studied under the names diagonal linear networks \citep{woodworth2020kernel}, redundant parameterization \citep{ziyin2023spred}, or Hadamard product parameterization \citep{hoff2017lasso,tibs2021,kolb2023smoothing} in various contexts, can be traced back to \citet{grandvalet1998least} and was rediscovered both in statistics \citep{hoff2017lasso} and machine learning \cite{neyshabur2015search}. Later,  \citet{tibs2021,kolb2023smoothing} and \citet{ziyin2023spred} used this approach to induce sparsity via $L_2$ regularization. Further works using weight factorization from the field of optimization include \citet{poon2021smooth,poon2023smooth}. \citet{ouyang2025kurdyka} show that the Kurdyka-\L ojasiewicz exponent at a second-order stationary point of a factorized and $L_2$ regularized objective can be inferred from its corresponding $L_1$ penalized counterpart. These works are closest to ours in spirit, namely, factorizing the weights of existing problems to achieve sparsity in the original weight space under $L_2$ regularization. None of these works, however, studied deeper factorizations of neural network parameters. A closely related approach to incorporate the induced non-convex $L_q$ regularization ($q<1$) into DNN training was recently proposed by \citet{outmezguine2024decoupled}, who base their method on the $\bm{\eta}$-trick \citep{bach2012optimization} instead of the $L_2$ regularized weight factorization we study. This method re-parametrizes the regularization function instead of factorizing the network parameters, utilizing a different variational formulation of the $L_q$ quasi-norm. Despite having the same effective $L_q$ regularization, DWF incorporates additional symmetry-induced sparsity-promoting effects through weight factorization and the stochastic collapse phenomenon \citep{ziyin2023symmetry, chen2024stochastic}. Another branch of literature studies the \textit{representation cost} of DNN architectures $f_{\w}$ for a specific function $f$, defined as $R(f)=\min _{\w: f_{\w}=f}\|\w\|_2^2$, showing that the $L_2$ cost of representing a linear function using a diagonal linear network yields the $L_{2/D}$ quasi-norm \citep{dai2021representation, jacot2023implicit}.

Apart from the previous works using explicit regularization, the implicit regularization effect of weight factorization was studied by various researchers, both in statistics and deep learning, including 
\citet{neyshabur2015search,gunasekar2018implicit,gissin2019implicit,vaskevicius2019implicit,woodworth2020kernel,pesme2021implicit,zhao2022high,li2021implicit}. However, such approaches are usually impractical for real applications, requiring vanishing initializations or specific loss functions. For example, the implicit bias of deep weight factorizations does not extend to non-convex regularizers under the squared loss~\citep{nacson2022implicit}.
Powerpropagation \citep{schwarz2021powerpropagation} is a different sparsity-inducing weight transformation with implicit regularization effects, proposed as a modular tool designed for use in practical applications.
%
%
%
%

Various papers have also studied the factorization of weights without explicit regularization in the context of implicit acceleration caused by the factors acting as adaptive learning rates. Notable examples include \citet{arora2019implicit,wang2022random,li2024improving}.

\paragraph{Pruning in neural networks} 
The landscape of pruning and sparse training methods is confusing due to the plethora of complex sparsification pipelines, incorporating numerous techniques at the same time, and an enormous amount of hyperparameters like pruning schedules or learning rates at different stages. A fair comparison is further complicated by the lack of established, streamlined evaluation processes and opposing objectives in some methods. Pruning, arguably the most popular and widespread method, can be used in innumerable ways to sparsify neural networks, which makes comparisons among these particularly difficult \citep{wang2023state}. Existing pruning techniques use pruning at initialization \citep{lee2019snip,wang2020picking,tanaka2020pruning}, pruning after training by, e.g., magnitude pruning \citep{han2015learning,gale2019state}, pruning during training, iterative pruning \citep{frankle2020pruning}, or pruning and re-growth \citep{evci2020rigging}. Recent surveys can be found in \citet{blalock2020state,hoefler2021sparsity, cheng2024survey}. Prominent examples include the lottery ticket hypothesis \citep{frankle2018lottery}, proposing that many networks contain equally performant, but much smaller subnetworks that can be found at initialization. A Bayesian pruning version is suggested in \citet{dhahrishaving}. Another approach related to pruning is soft thresholding reparameterization \citep{kusupati2020soft}, a sparse training method incorporating a soft-thresholding step into its network. Despite its success, however, it was shown to be outperformed by the differentiable $L_1$ approach, i.e., DWF with $D=2$ \citep{ziyin2023spred}. Recently, \citet{zhang2024how} established theoretical bounds on network prunability using convex geometry, showing that the fundamental one-shot pruning limit without sacrificing performance is determined by weight magnitudes and the sharpness of the loss landscape, providing a unified framework to help explain the effectiveness of magnitude-based pruning. They empirically show that $L_1$ regularized post-hoc magnitude pruning approximately matches the derived pruning limit before performance degrades significantly. 

\paragraph{Sparsity-inducing regularization in neural networks}
Apart from pruning, the application of norm-based regularizers is also common in deep learning \citep{scardapane2017group, wen2016structuredsparsity,han2015learning,bui2021improving}. This includes $L_0$-type reguarlization methods \citep{louizos2018learning, zhou2021effective,savarese2020winning}. However, some of these, such as \citet{louizos2018learning}, were found to not work well due to the stochastic sampling in their training procedure. Other approaches include adaptive regularization \citep{glandorf2023hypersparse} or dynamic masking \citep{liu2020dynamic}.

\paragraph{Sparsity based on structural constraints}

While we focus on unstructured sparsity in this paper, various approaches for structured pruning were proposed, including \citet{wen2016structuredsparsity,li2022pruning,bui2021improving,liu2017learning}. For a recent survey, see
\citet{he2023structured}.
A link also made in our paper is the connection between sparsity and symmetries. Using structures of symmetries can guide the sparsification of neural networks. Papers studying this link include the works of \citet{kunin2020neural,simsek2021geometry,le2022training}, but also some various recent work such 
\citet{ziyin2023symmetry, ziyin2023probabilistic,chen2024stochastic}.

\paragraph{Matrix factorization related induced regularization}
In \citet{srebro2004mmmf, mazumder2010spectral, shang2020unified, hastie2015matrix} different matrix factorization regularization schemes are proposed to, e.g., learn incomplete matrices \citep{mazumder2010spectral,hastie2015matrix} or for better generalization \citep{srebro2004mmmf}.
%
%
There are also neural architectures implementing flavors of matrix factorization to achieve better performance or acceleration
\citep{guo2020expandnets,jing2020implicit, bhardwaj2022collapsible}. Although not directly related to our factorization, we will briefly explain their idea to contrast it with our approach. For example, \citet{guo2020expandnets} note a beneficial effect of applying $L_2$ regularization on the (matrix) factors in the form of $\Vert \bm{W}_1 \Vert_2^2 + \Vert \bm{W}_2 \Vert_2^2$ as opposed to the $L_2$ regularizer $\Vert \bm{W}_1 \bm{W}_2 \Vert_2^2$ proposed in \citet{arora2019implicit}. This observation can be explained by the low-rank bias induced on the product matrix, whereas the second approach is simple $L_2$ regularization on the product.

\section{Intuition for sparsity via $L_2$ regularized weight factorization}\label{app:intuition}

Deep Weight Factorization introduces overparameterization by decomposing each original weight $\mathrm{w}$ multiplicatively into $D \geq 2$ factors $\omega_1,\ldots,\omega_D$. Without additional $L_2$ regularization, this induces artificial rescaling symmetries (\cref{def:artificial-rescaling}), resulting in infinitely many possible factorizations for each weight, all producing the same collapsed network and thus leaving the loss function unchanged. However, when $L_2$ regularization is applied, it influences the choice among these factorizations by preferring those with minimal Euclidean norm. With $L_2$ regularization, only minimum-norm (balanced) factorizations can be optimal, as otherwise, we could always decrease the $L_2$ penalty by picking a more balanced factorization while leaving the unregularized loss unchanged (cf.~\cref{lemma:min-l2-penalty}).

To provide some geometric intuition using a more concrete example, consider the simplest case of factorizing a scalar weight $\mathrm{w} \in \mathbb{R}$ into two factors, $\mathrm{w} = \omega_1 \cdot \omega_2$, as illustrated in \cref{fig:rescaling-symmetry} for $\mathrm{w} \in \{0, 0.25\}$. The set of all possible factorizations $\{(\omega_1, \omega_2) \in \mathbb{R}^2: \omega_1 \omega_2 = \mathrm{w}\}$ is given by the points on the coordinate axes for $\mathrm{w} = 0$ and forms a rectangular hyperbola in the $(\omega_1, \omega_2)$ plane for non-zero $\mathrm{w}$ (cf. \cref{fig:rescaling-symmetry}). Among these, the factorizations with minimal $L_2$ norm (i.e., minimal distance to the origin) are located at the vertices of the hyperbola. These minimum-norm factorizations are balanced, meaning the factors are equal in magnitude, as the vertices of a rectangular hyperbola always lie either on the diagonal $\omega_2=\omega_1$ or $\omega_2=-\omega_1$. Specifically, the two vertices of the resulting hyperbola are given by $(\sqrt{|\mathrm{w}|}, \sqrt{|\mathrm{w}}|)$ and $(-\sqrt{|\mathrm{w}|}, -\sqrt{|\mathrm{w}|})$ for positive $\mathrm{w}$, and $(\sqrt{|\mathrm{w}|}, -\sqrt{|\mathrm{w}|})$ and $(-\sqrt{|\mathrm{w}|}, \sqrt{|\mathrm{w}|})$ for negative $\mathrm{w}$. Combined with the case $\mathrm{w}=0$, the minimum-norm factorizations $(\omega_1^{\ast},\omega_2^{\ast})$ for any $\mathrm{w}$ are obtained as

\begin{equation}
(\omega_1^{\ast},\omega_2^{\ast})= \begin{cases}\left(\sqrt{\left|\mathrm{w}\right|}, \sqrt{\left|\mathrm{w}\right|}\right) \text { or }\left(-\sqrt{\left|\mathrm{w}\right|},-\sqrt{\left|\mathrm{w}\right|}\right) & \text {, } \mathrm{w}>0 \\ (0,0) & \text {, } \mathrm{w}=0 \\ \left(\sqrt{\left|\mathrm{w}\right|},-\sqrt{\left|\mathrm{w}\right|}\right) \text { or }\left(-\sqrt{\left|\mathrm{w}\right|}, \sqrt{\left|\mathrm{w}\right|}\right) & \text {, } \mathrm{w}<0 .\end{cases}
\end{equation}

At these points, the $L_2$ penalty evaluates to $2|\mathrm{w}|$, effectively turning into an $L_1$ penalty on the collapsed weight $\mathrm{w}$ scaled by a factor of 2.

For deeper factorizations involving more than two factors the same line of reasoning applies, but visualizing the set of possible factorizations as in \cref{fig:rescaling-symmetry} for $D=2$ becomes challenging. The minimum $L_2$ penalty at balanced factorizations reduces to a non-convex sparsity-inducing $L_{2/D}$ penalty on the collapsed weight. This serves as a lower bound of the $L_2$ penalty for every fixed value of $\mathrm{w}$. Once the factors reach this balanced state, which is an "absorbing state" under (S)GD, the optimization process locks in this configuration for all future iterations by symmetry (cf. \cref{lemma:balancedness}). Thus, the combination of DWF and $L_2$ regularization induces sparsity in the collapsed weights by promoting balanced factorizations, at which the $L_2$ penalty reduces to a lower-degree quasi-norm penalty on $\mathrm{w}$.

\clearpage

\section{Further results and missing proofs}\label{app:proofs-definitions}

\subsection{Proof of \cref{lemma:min-l2-penalty}}

\begin{proof}
Let $\bomega = (\bomega_1, \ldots, \bomega_D) \in \mathbb{R}^{Dp}$ be a local minimizer of $\mathcal{L}_{\bomega,\lambda}(\bomega)$.
As the factorization is applied independently to each parameter, it suffices to treat the scalar case: We will prove that $|\omega_{j,1}| = \ldots = |\omega_{j,D}|$ for all $j \in [p]$.

The rescaling symmetries of DWF ensures that $\mathcal{L}_{\bomega,0}$ (the factorized loss without regularization) is constant over all possible factorizations of a collapsed parameter $\bomegae$. However, the $L_2$ regularization term $\lambda D^{-1} \sum_{d=1}^D \|\bomega_d\|_2^2$ enforces a preference for min-norm factorization. For each scalar weight indexed by $j \in [p]$, consider its factors $\omega_{j,1}, \ldots, \omega_{j,D}$. Applying the AM-GM inequality to the $L_2$ penalty of the DWF loss yields

\begin{equation}
D^{-1} \textstyle\sum_{d=1}^D \omega_{j,d}^2 \geq \left(\textstyle\prod_{d=1}^D (\omega_{j,d})^2\right)^{1/D} = |\omega_{j,1} \cdots \omega_{j,D}|^{2/D} = |\omegae_j|^{2/D} \quad \forall \, j \in [p]
\end{equation}

This shows the balancedness requirement for the minimizers of $\Lomega(\bomega)$, as the AM-GM inequality holds tight if and only if all terms are equal, i.e., $|\omega_{j,1}| = \ldots = |\omega_{j,D}|$.

Summing over the factorizations of all weights yields the non-convex $L_{2/D}$ regularizer $\Vert \bomegae \Vert_{2/D}^{2/D}$ as the minimum $L_2$ penalty for a given collapsed weight $\bomegae \in \mathbb{R}^p$.
\end{proof}

\subsection{Proof of \cref{lemma:init-factorized-networks}}

\begin{definition}[Standard Weight Initialization]
\label{def:std_init}
A standard weight initialization scheme for a neural network layer with $\text{n}_{\text{in}}$ input units and $\text{n}_{\text{out}}$ output units is a probability distribution with mean 0 and variance $\sigma^2$, where $\sigma^2 = \frac{c g^2}{\text{n}_{\text{mode}}}$. Here, $g$ is a gain factor depending on the activation function, $c$ is a constant, and $\text{n}_{\text{mode}}$ is either $\text{n}_{\text{in}}$, $\text{n}_{\text{out}}$ or their sum. Common examples include the Kaiming ($\sigma^2 = \frac{2}{\text{n}_{\text{in}}}$) \citep{he2015delving}, Glorot ($\sigma^2 = \frac{2}{\text{n}_{\text{in}} + \text{n}_{\text{out}}}$) \citep{glorot2010understanding}, or LeCun initialization ($\sigma^2 = \frac{1}{\text{n}_{\text{in}}}$) \citep{lecun2002efficient}.
\end{definition}

\begin{proof} \label{proof:init-factorized-networks}
Recall that using a standard initialization (cf.~\cref{def:std_init}), each factor is initialized as $\omega_{j,d}^{(l)} \sim \mathcal{N}(0,\sigma_l^2)$, where $\sigma_l^2 = 1/n_{in}^{(l)}<1$ in the case of LeCun initialization \citep{lecun2002efficient}. For clarity, we assume the width $n_{in}^{(l)}$ to be constant across layers $l \in [L]$.\\
To prove the first statement, we note that $\mathbb{E}[\omegae_j^{(l)}] = 0$ and $\text{Var}\left(\omegae_j^{(l)}\right) = \prod_{d=1}^D \text{Var}\left(\omega_{j,d}^{(l)}\right) = \sigma^{2D}$. Applying Chebyshev's inequality, we get for any $\varepsilon>0$

\begin{equation}
\mathbb{P}\big(\big|\omegae_j^{(l)} - \mathbb{E}\big[\omegae_j^{(l)}\big]\big| \geq \varepsilon\big) = \mathbb{P}(|\omegae_j^{(l)}| \geq \varepsilon)  \leq \frac{\text{Var}\big(\omegae_j^{(l)}\big)}{\varepsilon^2} = \frac{\sigma^{2D}}{\varepsilon^2}\,.
\end{equation}

Finally, we have $0 \leq \lim_{D \to \infty} \mathbb{P}(|\omegae_j^{(l)}| \geq \varepsilon) \leq \lim_{D \to \infty} \frac{\sigma^{2D}}{\varepsilon^2} = 0$, and thus, by the squeeze theorem:
$\lim_{D \to \infty} \mathbb{P}(|\omegae_j^{(l)}| \geq \varepsilon) = 0$. This shows that $\omegae_j^{(l)} \overset{p}{\longrightarrow} 0$ as $D \to \infty$.\\


For the second point, we denote the pre-activation of neuron $k$ in layer $l$ as

\begin{equation}
  y_k^{(l)} = \sum_{i=1}^{n_{\text{in}}} \mathrm{w}_{k_i}^{(l)} \phi\left(y_i^{(l-1)}\right) = \sum_{i=1}^{n_{\text{in}}} \left( \prod_{d=1}^{D} \omega_{k_i,d}^{(l)} \right) \phi\left(y_i^{(l-1)}\right),
\end{equation}

where $\omega_{k_i,d}^{(l)}$ is the $d$-th scalar factor of the weight $\mathrm{w}_{k_i}^{(l)}$ associated with input $i$ of neuron $k$ in layer $l$. The activation $\phi\left(y_i^{(l-1)}\right)$ is the activation function $\phi$ applied to the pre-activations from layer $l-1$. To simplify calculations, we assume that the activation function is approximately linear around the origin, implying $\text{Var}\left(\phi\left(y_i^{(l-1)}\right)\right) \approx \text{Var}\left(y_i^{(l-1)}\right)$ and allowing us to ignore the gain factor, as valid for, e.g., $\text{tanh}$ activation. Using that the factors $\omega_{k_i,d}^{(l)}$ and activations $\phi\left(y_i^{(l-1)}\right)$ are independent and identically distributed, respectively, the variance of $y_k^{(l)}$ is given by:

\begin{equation}
  \text{Var}\left(y_k^{(l)}\right) = \sum_{i=1}^{n_{\text{in}}} \text{Var}\left( \prod_{d=1}^{D} \omega_{k_i,d}^{(l)} \cdot \phi\left(y_i^{(l-1)}\right) \right)   = \sum_{i=1}^{n_{\text{in}}} \text{Var}\left(\phi\left(y_i^{(l-1)}\right)\right) \cdot \prod_{d=1}^{D} \text{Var}\big(\omega_{k_i,d}^{(l)}\big).
\end{equation}

Since the factors are initialized with $\text{Var}\left(\omega_{k_i,d}^{(l)}\right) = \sigma_l^2 = \frac{1}{n_{\text{in}}}$, the variance of $y_k^{(l)}$ is:

\begin{equation}\label{eq:layer-variance-recursive}
\text{Var}\left(y_k^{(l)}\right) = \sum_{i=1}^{n_{\text{in}}} \text{Var}\left(y_i^{(l-1)}\right)
\left( \frac{1}{n_{\text{in}}} \right)^D
 =  n_{\text{in}} \cdot \frac{\text{Var}\left(y^{(l-1)}\right)}{n_{\text{in}}^D} = \frac{\text{Var}\left(y^{(l-1)}\right)}{n_{\text{in}}^{D-1}}
\end{equation}

In non-factorized layers, the $n_{\text{in}}$ in \cref{eq:layer-variance-recursive} cancel out, resulting in equal activation variances across layers. In contrast, standard initializations in factorized layers do not account for the exponent $D$ appearing in the variance of the collapsed weight $\omegae_{k_i}^{(l)}$, and thus result in a variance reduction in each subsequent layer as a function of input units and factorization depth $D$. Applying the above relationship recursively, we see that the variance at layer $L$ is

\begin{equation}
\text{Var}(y_k^{(L)}) 
= \text{Var}(y^{(1)}) \cdot \left( 
\frac{1}{n_{\text{in}}} \right)^{(D-1)(L-1)}
\end{equation}
To avoid reducing or amplifying the magnitudes of input signals exponentially, a proper initialization requires $\text{Var}(y_k^{(L)})$ to equal some constant, typically set to unity \citep{he2015delving}. In factorized networks with $D \geq 2$, however, standard initialization causes strong dependence on $n_{\text{in}}$, $D$, and $L$. 

\end{proof}

\subsection{Proof of \cref{thm:equi}}

Before proving the theorem, we introduce some required notation. We define the inverse factorization function as $\mathcal{K}: \mathbb{R}^{Dp} \to \mathbb{R}^p,\, \bomega \mapsto \bomega_1 \odot \ldots \odot \bomega_D = \bomegae$, and remark that it is a smooth surjection. Using $\mathcal{K}$, we can relate both objectives using the factor misalignment $M(\bomega)=D^{-1} \sum_{d=1}^D \Vert \bomega_d \Vert_2^2 - \Vert \bomegae \Vert_{2/D}^{2/D}$. Using \cref{lemma:min-l2-penalty}, the DWF objective $\Lomega(\bomega)$ can be expressed as $\Lomega(\bomega)=\Lw(\mathcal{K}(\bomega)) + \lambda M(\bomega)$, where the misalignment $M(\bomega)\geq0$ attains zero if and only if $\bomega$ represents a balanced factorization. Further, let $B(\w,\varepsilon)$ denote an open ball with radius $\varepsilon$ around $\w \in \mathbb{R}^p$ and recall that $\mathcal{K}$ is continuous at $\bomega$ if $\forall\,\varepsilon>0\,\exists\,\delta>0:\,\mathcal{K}(B(\bomega, \delta)) \subseteq B(\mathcal{K}(\bomega), \varepsilon)$.

\begin{proof}

First, we show that if $\hat{\w} \in \argmin_{\w \in \mathbb{R}^p} \Lw(\w)$, then $\exists\, \hat{\bomega} \in \argmin_{\bomega \in \mathbb{R}^{Dp}} \Lomega(\bomega)$ such that $\mathcal{K}(\hat{\bomega})=\hat{\w}$ and $\Lw(\hat{\w})=\Lomega(\hat{\bomega})$.\\
Since $\hat{\w}$ is a local minimizer of $\Lw(\w)$, $\exists \, \varepsilon_0>0:\, \forall \, \w' \in B(\hat{\w},\varepsilon_0):\,\Lw(\hat{\w}) \leq \Lw(\w')$. By surjectivity and the multiplicative structure of $\mathcal{K}$, we can pick a balanced factorization $\hat{\bomega}$ of $\hat{\w}$ so that $\mathcal{K}(\hat{\bomega})=\hat{\w}$ and $M(\hat{\bomega})=0$. Balanced factorizations are unique up to sign flip permutations that leave the product sign invariant. Therefore $\Lw(\hat{\w})=\Lw(\mathcal{K}(\hat{\bomega}))=\Lomega(\hat{\bomega})$. By continuity of $\mathcal{K}$, $\exists \, \delta_0:\,\mathcal{K}(B(\hat{\bomega},\delta_0)) \subseteq B(\mathcal{K}(\hat{\bomega}),\varepsilon_0) = B(\hat{\w},\varepsilon_0)$, i.e., all $\bomega' \in B(\hat{\bomega},\delta_0)$ map to some $\w' \in B(\hat{\w},\varepsilon_0)$. Then we obtain the following chain of inequalities

\begin{equation}\nonumber
    \forall \bomega' \in B(\hat{\bomega},\delta_0): \Lomega(\hat{\bomega}) = \Lw(\hat{\w}) \leq \Lw(\underbrace{\mathcal{K}(\bomega')}_{\w'}) \leq \Lw(\mathcal{K}(\bomega'))+\underbrace{M(\bomega')}_{\geq 0} = \Lomega(\bomega'),
\end{equation}

where the first equality holds because $M(\hat{\bomega})=0$ and the subsequent inequality because $\hat{\w}$ is a local minimizer of $\Lw(\w)$. This shows that $\hat{\bomega} \in \argmin_{\bomega \in \mathbb{R}^{Dp}} \Lomega(\bomega)$ with $\Lw(\hat{\w})=\Lomega(\hat{\bomega})$.\\

For the other direction, assume that $\hat{\bomega} \in \argmin_{\bomega \in \mathbb{R}^{Dp}} \Lomega(\bomega)$, i.e., $\exists \, \varepsilon_0>0:\,\forall\,\bomega' \in B(\hat{\bomega},\varepsilon_0):\, \Lomega(\hat{\bomega}) \leq \Lomega(\bomega')$. By \cref{lemma:min-l2-penalty}, $M(\hat{\bomega})=0$ and therefore $\Lomega(\hat{\bomega})=\Lw(\mathcal{K}(\hat{\bomega}))+M(\hat{\bomega})=\Lw(\hat{\w})$. We prove that $\hat{\w}=\mathcal{K}(\hat{\bomega})$ is a local minimizer of $\Lw(\w)$ by contradiction. Assume $\hat{\w}$ is not a local minimizer of $\Lw(\w)$, then $\forall\,\delta>0:\exists\,\w' \in B(\hat{\w},\delta): \Lw(\w') < \Lw(\hat{\w})$. However, if $\hat{\bomega}$ is a balanced factorization of $\hat{\w}$, the following auxiliary result shows that for a perturbed $\w'$ around $\hat{\w}$, there must also be a balanced factorization $\bomega'$ of $\w'$ close to $\hat{\bomega}$:
\begin{lemma}\label{lemma:aux-lemma-thm}
Let $\hat{\bomega} \in \mathbb{R}^{Dp}$ so that $M(\hat{\bomega})=0$ and let $\mathcal{K}(\hat{\bomega})=\hat{\w}$. Then $\forall\, \varepsilon>0\,\exists \delta>0:\, \w' \in B(\hat{\w},\delta) \implies \exists \bomega' \in B(\hat{\bomega},\varepsilon):\, \mathcal{K}(\bomega')=\w'$ and $M(\bomega')=0$.
\end{lemma}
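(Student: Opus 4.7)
The plan is to construct a balanced factorization $\bomega'$ of a nearby $\w'$ explicitly, coordinate by coordinate, using the sign pattern of $\hat{\bomega}$ where possible, and then verify continuity by bounding $\Vert\bomega'-\hat{\bomega}\Vert$ in terms of $\Vert\w'-\hat{\w}\Vert$. Since the factorization and the misalignment $M$ decouple across the $p$ coordinates, it suffices to analyze one scalar coordinate $j\in[p]$ at a time and then aggregate.

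\textbf{Construction.} Since $M(\hat{\bomega})=0$, Lemma~\ref{lemma:min-l2-penalty} gives $|\hat{\omega}_{j,1}|=\ldots=|\hat{\omega}_{j,D}|=|\hat{\omegae}_j|^{1/D}$ for every $j$. Split the indices into $N_{+}=\{j:\hat{\omegae}_j\neq 0\}$ and $N_0=\{j:\hat{\omegae}_j=0\}$. For $j\in N_{+}$, define
\[
\omega'_{j,d} := \mathrm{sign}(\hat{\omega}_{j,d})\cdot |\omegae'_j|^{1/D},\qquad d\in[D],
\]
so that the sign pattern is inherited from $\hat{\bomega}$. For $j\in N_0$, set all $\omega'_{j,d}:=|\omegae'_j|^{1/D}$ except $\omega'_{j,1}:=\mathrm{sign}(\omegae'_j)|\omegae'_j|^{1/D}$ (any other sign assignment with the right product works equally well). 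By construction each coordinate is balanced, hence $M(\bomega')=0$, and $\prod_d \omega'_{j,d}=\omegae'_j$ because for $j\in N_{+}$ we have $\prod_d \mathrm{sign}(\hat{\omega}_{j,d})=\mathrm{sign}(\hat{\omegae}_j)=\mathrm{sign}(\omegae'_j)$ for $\w'$ close enough to $\hat{\w}$ (see the sign-preservation argument below). Thus $\mathcal{K}(\bomega')=\w'$.

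\textbf{Continuity estimate.} Let $m:=\min_{j\in N_{+}}|\hat{\omegae}_j|>0$. Choose $\delta_1\leq m/2$; then $\Vert\w'-\hat{\w}\Vert<\delta_1$ forces $|\omegae'_j|\geq m/2$ and $\mathrm{sign}(\omegae'_j)=\mathrm{sign}(\hat{\omegae}_j)$ for every $j\in N_{+}$, which justifies the sign choices above. On $[m/2,\infty)$, the map $t\mapsto t^{1/D}$ is Lipschitz with some constant $C=C(m,D)$, so for $j\in N_{+}$ and every $d$,
\[
|\omega'_{j,d}-\hat{\omega}_{j,d}|=\bigl||\omegae'_j|^{1/D}-|\hat{\omegae}_j|^{1/D}\bigr|\leq C\,|\omegae'_j-\hat{\omegae}_j|.
\]
For $j\in N_0$, we instead have $|\omega'_{j,d}-\hat{\omega}_{j,d}|=|\omega'_{j,d}|=|\omegae'_j|^{1/D}\leq\Vert\w'-\hat{\w}\Vert^{1/D}$. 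Squaring and summing over $j,d$ gives
\[
\Vert\bomega'-\hat{\bomega}\Vert^2 \leq D C^2\,\Vert\w'-\hat{\w}\Vert^2 + Dp\,\Vert\w'-\hat{\w}\Vert^{2/D}.
\]
Given $\varepsilon>0$, picking $\delta\leq\min\bigl\{m/2,\ \varepsilon/(2C\sqrt{D}),\ (\varepsilon^2/(2Dp))^{D/2}\bigr\}$ ensures $\Vert\bomega'-\hat{\bomega}\Vert<\varepsilon$ whenever $\Vert\w'-\hat{\w}\Vert<\delta$, proving the claim.

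\textbf{Main obstacle.} The only delicate point is the behavior at coordinates where $\hat{\omegae}_j=0$: since $t\mapsto t^{1/D}$ has infinite slope at $0$, one cannot hope for a Lipschitz dependence of $\bomega'$ on $\w'$, only a H\"older one with exponent $1/D$. The explicit bound above shows this H\"older rate is still enough to retain continuity and thereby complete the construction, which in turn closes the contradiction step in the proof of Theorem~\ref{thm:equi}. A secondary subtlety is that for $j\in N_0$ the balanced factorization is non-unique (any sign pattern whose product equals $\mathrm{sign}(\omegae'_j)$ works); this non-uniqueness is harmless because $\hat{\omega}_{j,d}=0$, so every such choice is automatically within $|\omegae'_j|^{1/D}$ of $\hat{\omega}_{j,d}$.
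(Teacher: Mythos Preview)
Your proof is correct and follows essentially the same approach as the paper: both construct $\bomega'$ coordinate-wise by inheriting the sign pattern of $\hat{\bomega}$ on the nonzero coordinates, handle the zero coordinates separately, and then invoke continuity of $t\mapsto t^{1/D}$ to control $\Vert\bomega'-\hat{\bomega}\Vert$. Your version is slightly more quantitative—using an explicit Lipschitz constant on $[m/2,\infty)$ and the H\"older exponent $1/D$ at zero rather than bare continuity—but the underlying idea is identical; just note the trivial edge case $N_{+}=\emptyset$, where the constant $m$ is vacuous and only the H\"older term survives.
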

\begin{proof}
    
We first consider the scalar case of a balanced factorization $\hat{\bomega}_j = (\hat{\omega}_{j,1}, \ldots, \hat{\omega}_{j,D})$ mapping to $\hat{\mathrm{w}}_j \in \mathbb{R}$, i.e., $M(\hat{\bomega}_j) = 0$ and $\mathcal{K}_j(\hat{\bomega}_j) = \hat{\mathrm{w}}_j$. By \cref{lemma:min-l2-penalty}, the magnitudes of the factors are equal: $|\hat{\omega}_{j,d}| = |\hat{\mathrm{w}}_j|^{1/D}$ for all $d \in [D]$. We now construct factors $\bomega_j'$ for any $\mathrm{w}_j'$ close to $\hat{\mathrm{w}}_j$ such that $\mathcal{K}_j(\bomega_j') = \mathrm{w}_j'$, $M(\bomega_j') = 0$, and $\| \bomega_j' - \hat{\bomega}_j \|_2 < \varepsilon / \sqrt{p}$. \\
Let $\varepsilon > 0$ be arbitrary. Since $\mathrm{w}_j \mapsto |\mathrm{w}_j|^{1/D}$ is continuous at $\hat{\mathrm{w}}_j$, there exists $\delta_j > 0$ such that $|\mathrm{w}_j' - \hat{\mathrm{w}}_j| < \delta_j$ implies $\left| |\mathrm{w}_j'|^{1/D} - |\hat{\mathrm{w}}_j|^{1/D} \right| < \varepsilon / (\sqrt{D p})$. The factors $\bomega_j'$ are defined as follows:
\[
\omega_{j,d}' =
\begin{cases}
\displaystyle \text{sign}(\hat{\omega}_{j,d})\cdot |\mathrm{w}_j'|^{1/D}, & \text{if } \hat{\mathrm{w}}_j \neq 0,\ \forall d \in [D], \\[2ex]
\displaystyle \text{sign}(\mathrm{w}_j') \cdot |\mathrm{w}_j'|^{1/D}, & \text{if } \hat{\mathrm{w}}_j = 0,\ d = 1, \\[2ex]
\displaystyle |\mathrm{w}_j'|^{1/D}, & \text{if } \hat{\mathrm{w}}_j = 0,\ d = 2, \ldots, D.
\end{cases}
\]
This ensures the magnitudes are equal, so $M(\bomega_j') = 0$, and that the product of the factors satisfies $\mathcal{K}_j(\bomega_j') = \mathrm{w}_j'$. For $\hat{\mathrm{w}}_j\neq0$, we can apply the sign pattern of $\hat{\bomega}_j$ to the $\omega_{j,d}'$ by choosing $\delta_j$ small enough. The resulting distance between $\bomega_j'$ and $\hat{\bomega}_j$ is then:
{\footnotesize
\[
\| \bomega_j' - \hat{\bomega}_j \|_2 = \sqrt{D} \cdot \left| |\mathrm{w}_j'|^{1/D} - |\hat{\mathrm{w}}_j|^{1/D} \right| < \frac{\varepsilon}{\sqrt{p}},
\]
}

since $\left| |\mathrm{w}_j'|^{1/D} - |\hat{\mathrm{w}}_j|^{1/D} \right| < \varepsilon / (\sqrt{D p})$. Extending to the vector case, let $\delta = \min_j \{\delta_j\}$. For any $\w' \in B(\hat{\w},\delta)$, each component $\mathrm{w}_j'$ satisfies $|\mathrm{w}_j' - \hat{\mathrm{w}}_j| < \delta_j$. Applying the scalar construction to each $\mathrm{w}_j'$, we obtain $\bomega'$ such that $\mathcal{K}(\bomega') = \w'$ and $M(\bomega') = 0$. Together, we get
\[
\| \bomega' - \hat{\bomega} \|_2 = \Big( \sum_{j=1}^p \| \bomega_j' - \hat{\bomega}_j \|_2^2 \Big)^{1/2} < \varepsilon.
\]
Therefore, for any $\varepsilon > 0$, there exists $\delta > 0$ such that $\w' \in B(\hat{\w}, \delta)$ implies the existence of $\bomega' \in B(\hat{\bomega}, \varepsilon)$ with $\mathcal{K}(\bomega') = \w'$ and $M(\bomega') = 0$.
\end{proof}

Choosing $\varepsilon=\varepsilon_0$, then $\exists \delta_0>0:\forall\, \tilde{\w} \in B(\hat{\w},\delta_0)\, \exists \tilde{\bomega} \in B(\hat{\bomega},\varepsilon_0):\, \mathcal{K}(\tilde{\bomega})=\tilde{\w}$ and $M(\tilde{\bomega})=0$. By assumption $\hat{\w}$ is not a minimizer of $\Lw(\w)$, hence $\exists\,\w' \in B(\hat{\w},\delta_0):\, \Lw(\w')<\Lw(\hat{\w})$. Let $\bomega'\in B(\hat{\bomega},\varepsilon_0)$ be the corresponding balanced factorization of $\w'$ constructed using \cref{lemma:aux-lemma-thm}, with the properties $\mathcal{K(\bomega')}=\w'$,  $M(\bomega')=0$ and thus $\Lomega(\bomega')=\Lw(\w')$. But then

\begin{equation}
    \exists \bomega' \in B(\hat{\bomega},\varepsilon_0):\,\Lomega(\bomega')=\Lw(\w') < \Lw(\hat{\w}) = \Lomega(\hat{\bomega}),
\end{equation}

contradicting $\hat{\bomega} \in \argmin_{\bomega \in \mathbb{R}^{Dp}} \Lomega(\bomega)$. Therefore, if $\hat{\bomega}$ is a local minimizer of $\Lomega(\bomega)$, then $\hat{\w}=\mathcal{K}(\hat{\bomega})$ is a local minimizer of $\Lw(\w)$ with $\Lomega(\hat{\bomega})=\Lw(\hat{\w})$. This finishes the proof.
\end{proof}

\subsection{Balanced factors and absorbing states in SGD optimization (\cref{lemma:balancedness})}\label{app:conserved-balancedness}

\begin{lemma}[Balanced factors are absorbing states in SGD]\label{lemma:balancedness}
Consider the SGD iterates of a depth-$D$ factorized network with parameters $\bomega^{(t)} = (\bm{\omega}_1^{(t)},\ldots,\bm{\omega}_D^{(t)})$ at iteration $t \in \mathbb{N}$, where the $j$-th entry of the collapsed weight vector $\bomegae^{(t)}$ is $\varpi_j^{(t)} = \omega_{j,1}^{(t)} \cdot \ldots \cdot \omega_{j,D}^{(t)}$. Then, \textbf{i)} if $\omegae_j^{(t)}=0$ and $M\big(\bomega_j^{(t)}\big)=0$, then  
$\omega_{j,d}^{(t')} = 0$ for all $d \in [D]$ and $t' > t$.  Further, \textbf{ii)} $M\big(\bomega_j^{(t)}\big)=0$ implies $M\big(\bomega_j^{(t')}\big)=0$ for all $t'>t$. 
\end{lemma}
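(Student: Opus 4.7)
The plan is to exploit the chain-rule form of the gradient of $\Lomega$ with respect to each scalar factor $\omega_{j,d}$. Writing $g_j^{(t)}$ for the stochastic partial derivative of the unregularized loss with respect to the collapsed parameter $\omegae_j$ evaluated on the minibatch at step $t$, we have
\begin{equation*}
\frac{\partial \Lomega}{\partial \omega_{j,d}}\bigg|_{\bomega^{(t)}} = g_j^{(t)} \prod_{d'\neq d} \omega_{j,d'}^{(t)} + \tfrac{2\lambda}{D}\,\omega_{j,d}^{(t)},
\end{equation*}
so the SGD update reads
\begin{equation*}
\omega_{j,d}^{(t+1)} = \omega_{j,d}^{(t)}\bigl(1 - \eta \tfrac{2\lambda}{D}\bigr) - \eta\, g_j^{(t)} \prod_{d'\neq d} \omega_{j,d'}^{(t)}.
\end{equation*}
The key observation, on which the whole argument rests, is that $g_j^{(t)}$ is a function of the scalar $\omegae_j^{(t)}$ and the minibatch but \emph{not} of the individual factors, so the only $d$-dependent object on the right-hand side is the partial product $\prod_{d'\neq d} \omega_{j,d'}^{(t)}$, which takes a highly structured form under a balanced factorization.

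For part (i), I would note that $M(\bomega_j^{(t)})=0$ forces a common magnitude $c \geq 0$ across $\omega_{j,1}^{(t)},\ldots,\omega_{j,D}^{(t)}$ by \cref{lemma:min-l2-penalty}, while $\omegae_j^{(t)} = 0$ gives $c^D = 0$ and hence $\omega_{j,d}^{(t)}=0$ for every $d$. Substituting into the update above annihilates both the weight-decay term and the loss term (the partial product still contains at least one zero factor), so $\omega_{j,d}^{(t+1)}=0$ for every $d$, and a trivial induction on $t'>t$ concludes (i).

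For part (ii) I would induct on $t'$ with the single step $t'=t+1$ as the core. The case $c=0$ is already handled by (i), so assume $c>0$. Writing $s_d := \sign(\omega_{j,d}^{(t)})$ and $S := \prod_{d} s_d = \sign(\omegae_j^{(t)})$, a direct computation gives $\prod_{d'\neq d} s_{d'} = S/s_d = S s_d$ and hence $\prod_{d'\neq d} \omega_{j,d'}^{(t)} = c^{D-1} S s_d$. Substituting into the update yields
\begin{equation*}
\omega_{j,d}^{(t+1)} = s_d\Bigl[\, c\bigl(1 - \eta \tfrac{2\lambda}{D}\bigr) - \eta\, g_j^{(t)}\, c^{D-1} S \,\Bigr] = s_d \cdot A^{(t)},
\end{equation*}
where the bracket $A^{(t)}$ depends on $c, S, \eta, \lambda$ and $g_j^{(t)}$ but is independent of $d$. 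Consequently $|\omega_{j,d}^{(t+1)}| = |A^{(t)}|$ for every $d$, so $M(\bomega_j^{(t+1)})=0$ by \cref{lemma:min-l2-penalty}, and the induction closes.

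The only point that requires genuine care is the sign bookkeeping in (ii), together with the observation that $g_j^{(t)}$ multiplies all partial products $\prod_{d'\neq d}\omega_{j,d'}^{(t)}$ with the same scalar. Because of this structural fact, balance is preserved \emph{exactly} at each iteration, for any learning rate and regularization strength, and independently of the minibatch drawn; no small-step, gradient-flow, or averaging approximation is needed, and the argument applies verbatim to full-batch GD and to weight-decay-style SGD. I do not anticipate a substantive obstacle beyond this short calculation.
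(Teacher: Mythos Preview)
Your proposal is correct and follows essentially the same approach as the paper's proof: both compute the SGD update via the chain rule, observe that the loss gradient enters only through the collapsed parameter, and then factor out the sign $s_d$ from the update to show the post-update magnitude is $d$-independent (the paper uses the identity $1/s_d = s_d$ at the same point where you write $S/s_d = S s_d$). The only cosmetic difference is that the paper writes the final magnitude explicitly rather than naming the bracket $A^{(t)}$.
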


In other words, a balanced factorization at $0$ causes the SGD dynamics to ``collapse'' and the factors remain zero for all subsequent iterations, effectively reducing the expressiveness of the model.

\begin{proof}
Consider the SGD updates for the factors $\bomega_d \in \mathbb{R}^p, \, d \in [D]$, in a factorized network with $L_2$ regularization. Let $\mathcal{L}_{\bomega,0}(\bomega)$ denote the part of the loss function without regularization and assume a batch size of $n$ without loss of generality:

\begin{equation}
\bomega_d^{(t+1)} = \bomega_d^{(t)} - \eta^{(t)} \big( \nabla_{\bomega_d} \mathcal{L}_{\bomega,0}(\bomega^{(t)}) + 2 D^{-1} \lambda \bomega_d^{(t)} \big)
\end{equation}

Using the chain rule, the SGD updates are given by:

\begin{equation}
\bomega_d^{(t+1)} = \bomega_d^{(t)} - \eta^{(t)} \big( \nabla_{\bomegae} \mathcal{L}_{\bomega,0}(\bomega^{(t)}) \odot \big(\textstyle\bigodot_{k \neq d} \bomega_k^{(t)}\big) + 2 D^{-1} \lambda \bomega_d^{(t)} \big)
\end{equation}

To show the collapse in the dynamics for a balanced zero factorization, consider the scalar case $\varpi_j^{(t)}=0$ with factorization $\bomega_j^{(t)}=\{\omega_{j,d}^{(t)}\}_{d=1}^{D}$ such that $M\big(\bomega_j^{(t)}\big)=0$. Then $\omega_{j,d}^{(t)} = 0$ for all $d \in [D]$, and the update becomes:

\begin{equation}
\omega_{j,d}^{(t+1)} = 0 - \eta^{(t)} \big( [\nabla_{\bomegae} \mathcal{L}_{\bomega,0}(\bomega^{(t)})]_j \cdot 0 + 2D^{-1} \lambda \cdot 0 \big) = 0
\end{equation}

This holds for all subsequent iterations, proving $\omega_{j,d}^{(t')} = 0$ for all $d \in [D]$ and $t' > t$. Next we show the more general case of SGD dynamics conserving balancedness, i.e., $M\big(\bomega_j^{(t)}\big)=0$, or equivalently, $|\omega_{j,1}^{(t)}| = \cdots = |\omega_{j,D}^{(t)}| := m_j^{(t)}$. Let $\omega_{j,d}:=s_{j,d}^{(t)} m_j^{(t)}$, where $s_{j,d}^{(t)}=\text{sign}\big(\omega_{j,d}^{(t)}\big)$ and $s_{\varpi_j}^{(t)}=\text{sign}(\prod_{d=1}^D s_{j,d}^{(t)})$. We investigate the scalar updates: 

\begin{equation}
\omega_{j,d}^{(t+1)} = s_{j,d}^{(t)} m_j^{(t)} - \eta^{(t)} \big( [\nabla_{\bomegae} \mathcal{L}_{\bomega,0}(\bomega^{(t)})]_j \cdot (m_j^{(t)})^{D-1} \cdot \frac{s_{\varpi_j}^{(t)}}{s_{j,d}^{(t)}} + 2D^{-1} \lambda s_{j,d}^{(t)} m_j^{(t)} \big)
\end{equation}

Because $1/s_{j,d}^{(t)}=s_{j,d}^{(t)}$, we can factor out $s_{j,d}^{(t)}$ from all terms in the update. Hence, the resulting magnitude at iteration $t+1$ is:

\begin{equation}
|\omega_{j,d}^{(t+1)}| = \big|m_j^{(t)} - \eta^{(t)} \big( [\nabla_{\bomegae} \mathcal{L}_{\bomega,0}(\bomega^{(t)})]_j \cdot (m_j^{(t)})^{D-1} \cdot s_{\varpi_j}^{(t)} + 2D^{-1} \lambda m_j^{(t)} \big) \big|
\end{equation}

Since the magnitude is constant over $d$, it is shown that $M\big(\bomega_j^{(t')}\big)=0$ for all $t'>t$.
\end{proof}


This ``stochastic collapse" \citep{chen2024stochastic} in the gradient dynamics is a recently investigated phenomenon where the noise in SGD dynamics drives iterates toward simpler \textit{invariant sets} of the weight space that remain unchanged under SGD. The previous result (\cref{lemma:balancedness}) about zero misalignment being an absorbing state in DWF with SGD optimization exemplifies this collapse. However, the dynamics that govern the collapse are poorly understood, including how it is determined when and to which simpler structure the model collapses, with unclear implications for generalization in broad settings. The attractivity of these simpler structures is associated with symmetries and high gradient noise levels and closely related to the recently studied Type-II saddle points \citep{ziyin2023probabilistic}, potentially helping to explain the benefits of large initial LRs, adaptively regularizing overly expressive networks to constrained substructures via stochastic collapse. While potentially positive effects on generalization were shown, the research community is not yet certain about the broader consequences of this phenomenon.


\clearpage

\section{Algorithms} \label{app:algos}

In the following, we provide the algorithms for the proposed initialization (\cref{sec:init}) of DWF networks in \cref{app:alg_init} and how to train these networks in \cref{app:alg_train}.

\subsection{DWF initialization} \label{app:alg_init}

\begin{algorithm}
\small
\caption{\small DWF Initialization with Variance-Matching and Absolute Value Truncation}
\begin{algorithmic}[1]
\label{alg:init}
\STATE \textbf{Input:} 
\STATE \quad Number $L$ and parameter size $n_l$ of layers, factorization depth $D$, minimum absolute value $\varepsilon$
\STATE \quad Standard initializations $\{\mathcal{P}(\wrm_j^{(l)}) \sim \mathcal{N}(0,\sigma_{\mathrm{w},l}^2)\}_{l=1}^L$
\FOR{$l = 1$ to $L$}
    \STATE $\sigma_l \leftarrow (\sigma_{\mathrm{w},l})^{1/D}$
    \STATE $\omega_{\text{min}}^{(l)} \leftarrow \varepsilon^{1/D}$
    \STATE $\omega_{\text{max}}^{(l)} \leftarrow \min\big\{1,(2\sigma_{\mathrm{w},l})^{1/D}\big\}$
    \FOR{each weight $\mathrm{w}_j^{(l)}$ in $n_l$}
        \FOR{$d = 1$ to $D$}
            \REPEAT
                \STATE $\omega_{j,d}^{(l)} \sim \mathcal{N}(0, \sigma_l^2)$
            \UNTIL{$\omega_{\text{min}}^{(l)} < |\omega_{j,d}^{(l)}| < \omega_{\text{max}}^{(l)}$}
        \ENDFOR
    \ENDFOR
\ENDFOR
\STATE \textbf{Output:} 
\STATE \quad Initialized factors $\{\omega_{j,d}^{(l)}\}_{d=1}^{D}$ for all weights $j \in [n_l]$ per layer and all layers $l\in [L]$.
\end{algorithmic}
\end{algorithm}

\subsection{DWF training} \label{app:alg_train}

\begin{algorithm}
\small
\caption{\small Training Factorized Neural Networks}
\begin{algorithmic}[1]\label{alg:train}
\STATE \textbf{Input:} 
\STATE \quad Dataset $\mathcal{D} = \{(\bx_i, y_i)\}_{i=1}^n$, network architecture $\mathcal{A}$ with $L$ layers and weights $\w \in \mathbb{R}^p$
\STATE \quad Factorization depth $D \geq 2$, 
\STATE \quad Factor initialization method $\{$\texttt{DWF-Init}, base initialization~$\mathcal{P},\varepsilon\}$,\, (Alg.~\ref{alg:init})
\STATE \quad Training hyperparameters $\{T, |\mathcal{B}|, \text{LRSchedule}\, \{\eta^{(t)}\}_{t=1}^{T}, \lambda \}$
\STATE \quad $\varepsilon_{\text{tiny}}$\quad {\small (e.g., float32 machine epsilon $\approx 1.19\times10^{-7}$}) 

\STATE \textbf{Deep Weight Factorization:}
\STATE \quad Factorize the weights $\w$ of $\mathcal{A}$ as:
\STATE \quad $\w \leftarrow \bomega_1 \odot \ldots \odot \bomega_D$ and obtain $f_{\bomega}(\bomega)$ from $f_\wrm(\wrm)$

\STATE \textbf{Initialize weights $\bomega$ of $f_{\bomega}(\bomega)$:}
\STATE \quad $\bomega \leftarrow \texttt{DWF-Init}(\mathcal{A}, D, \varepsilon,\, \text{standard init}\,\mathcal{P}$)

\FOR{each training step $t \in \{0, \dots, T-1\}$}
    \STATE Sample mini-batch $\mathcal{B}^{(t)}=\{(\bx_i, y_i)\}_{i=1}^{|\mathcal{B}|}$ from $\mathcal{D}$ and compute gradient 
    \STATE Update $\bomega_d$ using SGD:
    \STATE \quad $\bomega_d^{(t+1)} \leftarrow \bomega_d^{(t)} - \frac{\eta^{(t)}}{|\mathcal{B}|} \nabla_{\bomega_d} \left( \mathcal{L}_{\bomega,0}(\bomega^{(t)}) + \lambda D^{-1} \sum_{d=1}^{D} \|\bomega_d^{(t)}\|_2^2 \right) \quad \forall \,\, d \in [D]$
    \STATE Update LR:
    \STATE \quad $\eta^{(t+1)} \leftarrow \text{LRSchedule}(t+1)$
\ENDFOR

\STATE \textbf{Post-training factor collapse:}
\STATE \quad Collapse factors to obtain weights for $\mathcal{A}$:
\STATE \quad $\hat{\bomegae} = \bomega_1^{(T)} \odot \ldots \odot \bomega_D^{(T)}$
\STATE \quad Apply numerical mach. epsilon threshold $\varepsilon_{\text{tiny}}$ to remove approx. $0$ weights:
\STATE \quad $\hat{\omegae}_j \leftarrow 0$ \text{ if } $|\hat{\omegae}_j| < \varepsilon_{\text{tiny}} \,\,\, \forall \,\, j \in [p]$
\STATE \quad Transfer sparse weights $\hat{\bomegae}$ back to $\mathcal{A}$
\STATE \textbf{Output:} 
\STATE \quad Sparse collapsed network parameters $\hat{\bomegae}=\hat{\w}$

\end{algorithmic}
\end{algorithm}

\clearpage

\section{Details on optimization} \label{app:details-init-lrs}

\subsection{Learning rates in factorized networks}

\paragraph{Ablation study on overall learning rate}

Our goal is to determine suitable LR ranges for achieving high sparsity with good generalization in factorized networks. Additionally, we investigate how deeper factorization affects LR requirements. We train factorized LeNet-300-100 with $D \in \{2,3,4\}$ and our $\texttt{DWF}$ initialization on MNIST, using initial LRs ranging from $10^{-3}$ to $2$. All models are trained using SGD with a cosine LR schedule. The results, displayed in \cref{fig:lr-grid-mnist}, show excessively high LRs lead to unstable results, especially at higher compression ratios. Similarly, too small LRs result in poor or even no sparsification. Notably, models with greater $D$ exhibit more robustness to LR variations, maintaining performance over a wider range of compression ratios compared to shallower factorizations. Across all depths, selecting a large initial LR slightly below the edge where training becomes unstable yields the best overall results, balancing both effective training with high compression ratios. and providing evidence for the importance of a large LR phase in DWF training.

\begin{figure}[h]
\centering
\includegraphics[width=0.98\linewidth]{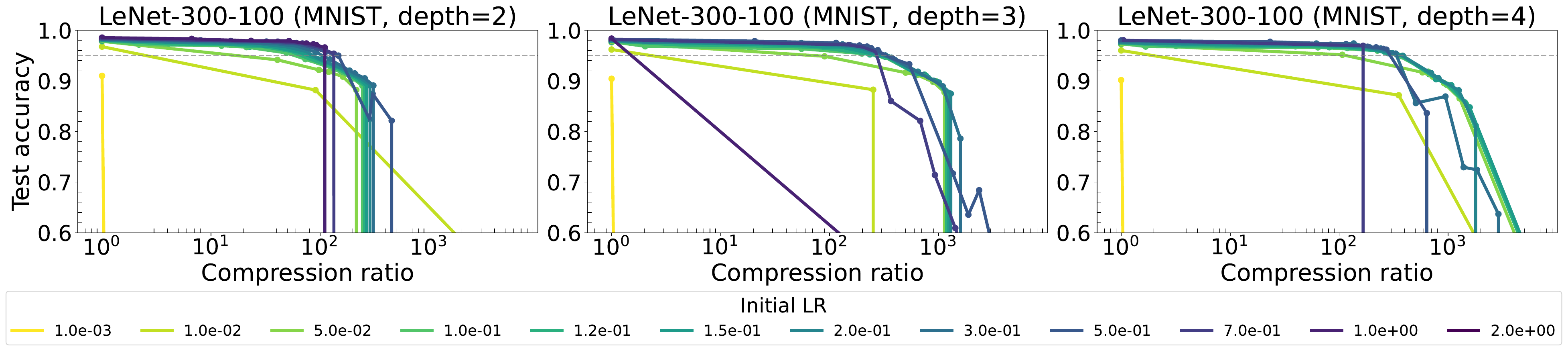}
\caption{Sparsity-accuracy tradeoffs for a grid of learning rates, demonstrating the importance of appropriately large LRs for DWF. Left to right shows factorization depths $D \in \{2,3,4\}$.}
\label{fig:lr-grid-mnist}
\end{figure}
\vspace{-0.3cm}

\paragraph{Ablation on the stability of optimal LRs across the sparsity range}

In another ablation study, we investigate the impact of different sparsity requirements on the optimal initial LR. To do this, we train a LeNet-300-100 on MNIST for $D \in \{2,3,4\}$ on a large number of LR and $\lambda$ combinations. For each $D$, we train all combinations of the learning rate $\eta$ and the regularization $\lambda$, comprising $8$ different LRs between $10^{-3}$ and $1$, and a grid of $20$ $\lambda$ values logarithmically spaced between $10^{-6}$ and $10^{-1}$. We obtain the Pareto frontier for each $D$ by removing all runs that are dominated by other runs in either test accuracy or compression ratio. \cref{fig:moredepth-ablation-lr-lambda} shows the corresponding tradeoffs, with the color of the points indicating the optimal learning rate for the corresponding $\lambda$. Confirming the importance of large LRs, the result further demonstrates that the range of optimal LRs remains at a high level across sparsity requirements, except for a slight trend toward distinctly larger LRs for models with little regularization. This can be explained by the intricate relationship between LR and $\lambda$, together forming the \textit{intrinsic} LR. When $\lambda$ is reduced, this is compensated using a larger LR to recover optimal performance \citep{li2020reconciling}.




\begin{figure}[ht]
\centering

\begin{subfigure}[t]{0.48\textwidth} 
    \centering
    \includegraphics[width=\textwidth]{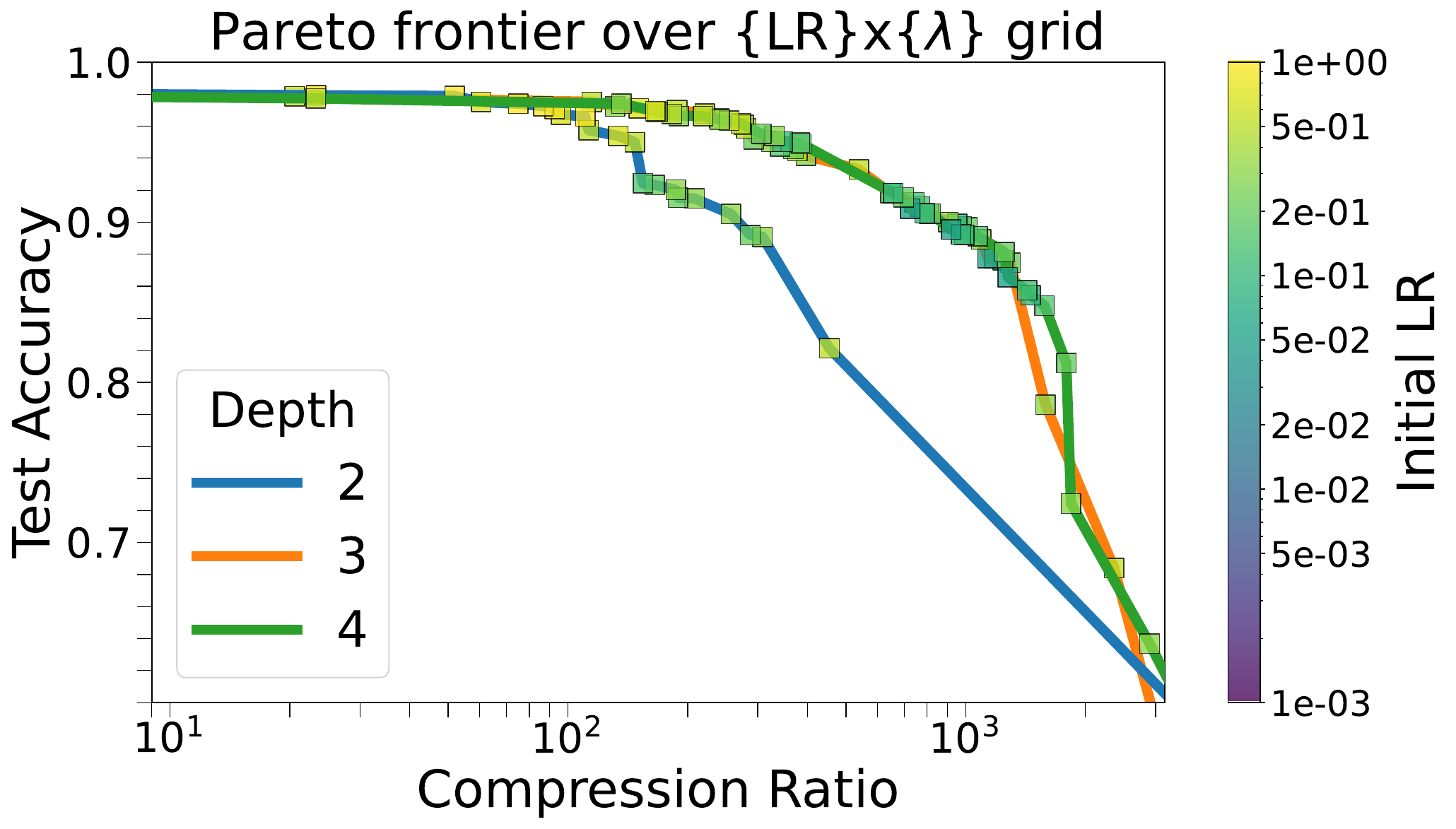}
    \caption{Ablation on optimal LRs at different amounts of sparsity using LeNet-300-100 on MNIST. Note that none of the smaller LRs are selected as optimal.}
    \label{fig:moredepth-ablation-lr-lambda}
\end{subfigure}
\hfill
\begin{subfigure}[t]{0.48\textwidth} 
    \centering
    \includegraphics[width=\textwidth]{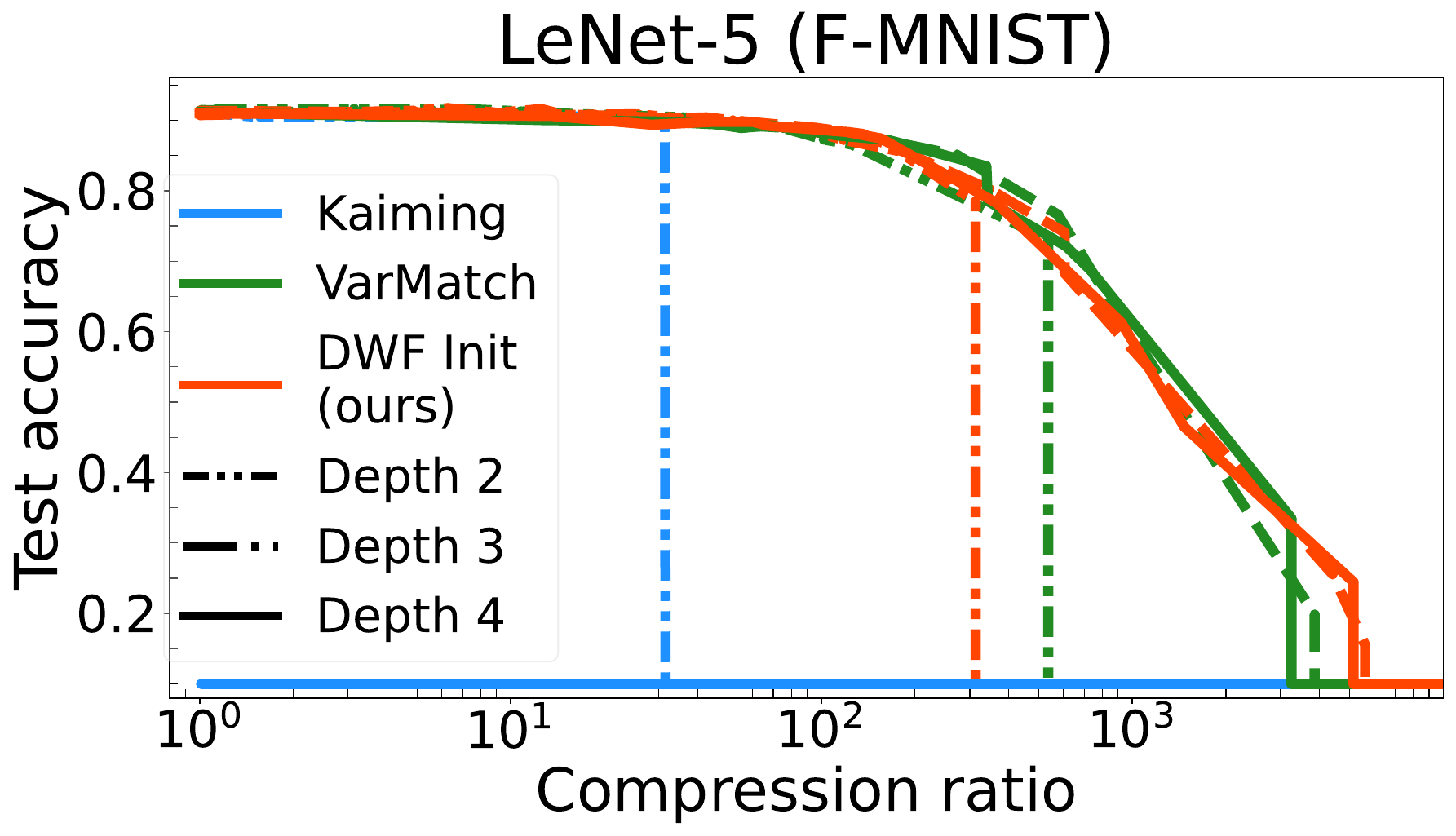}
    \caption{Factor initializations and depths $D$. For $D=2$, standard initialization performs worse and becomes untrainable for $D>2$.}
    \label{fig:initgrid-lenet5-fmnist-compar}
\end{subfigure}

\caption{Experiments on optimal LRs at different amounts of sparsity and different initialization approaches.}
\label{fig:combined-lrgrid-init-lenet5}
\end{figure}

\subsection{Ablation study on initializations}

In \cref{fig:initgrid-lenet5-fmnist-compar}, we extend the experimental analysis of standard and corrected initialization schemes on performance and sparsity in factorized networks, complementing the experiment on a fully-connected architecture (right plot of \cref{fig:init-combined-plot}) by a convolutional LeNet-5 architecture. Similar to the results in \cref{fig:init-combined-plot}, we observe a failure of standard initialization for $D>2$. For $D=2$, contrasting the results for LeNet-300-100 in \cref{fig:init-combined-plot}, standard initialization indeed achieves some sparsity for LeNet-5 on F-MNIST. The attainable tradeoff, however, is vastly outperformed by using the two corrections in the \texttt{DWF} initialization (\cref{alg:init}).

\subsection{Relationship between sparsity, regularization and weight norms}\label{app:cr-norm-lambda}

In \cref{fig:combined-cr-l2-dynamics-lambdagrid}, we present results on the relationship between sparsity (measured via the CR), regularization induced by different $\lambda$ values, and the implicit $L_2$ weight norms of the collapsed parameter $\bomegae$. From the first row, we see that increases in compression ratio for increasing $\lambda$ values have a similar trend for all depths, starting to induce sparsity at approximately the same regularization strengths. For all datasets and regularization strengths, except for extremely large $\lambda$ values on ResNet-18, the $D=4$ model always yields a higher compression than $D=3$, which in turn is sparser than the $D=2$ model for given $\lambda$. In the second row and for the smaller models, we see a short increase in the $L_2$ norm with increasing $\lambda$, followed by a drop in $L_2$ norm that finally goes to zero at the point where the highest compression is achieved. Remarkably, the collapsed model $L_2$ norm increases with $\lambda$ exactly up to the point where sparsity emerges. A slightly different behavior can be seen for ResNet, where the collapsed norm seems to monotonically decrease for increasing $\lambda$ values (i.e., norms do not increase first and then decrease). Finally, the third row indicates smaller $L_2$ norms the more compressed models become, again with deeper factorizations achieving higher compression ratios at the same $L_2$ norm just before model collapse.

\begin{figure}
    \centering
    \includegraphics[width=1.0\linewidth]{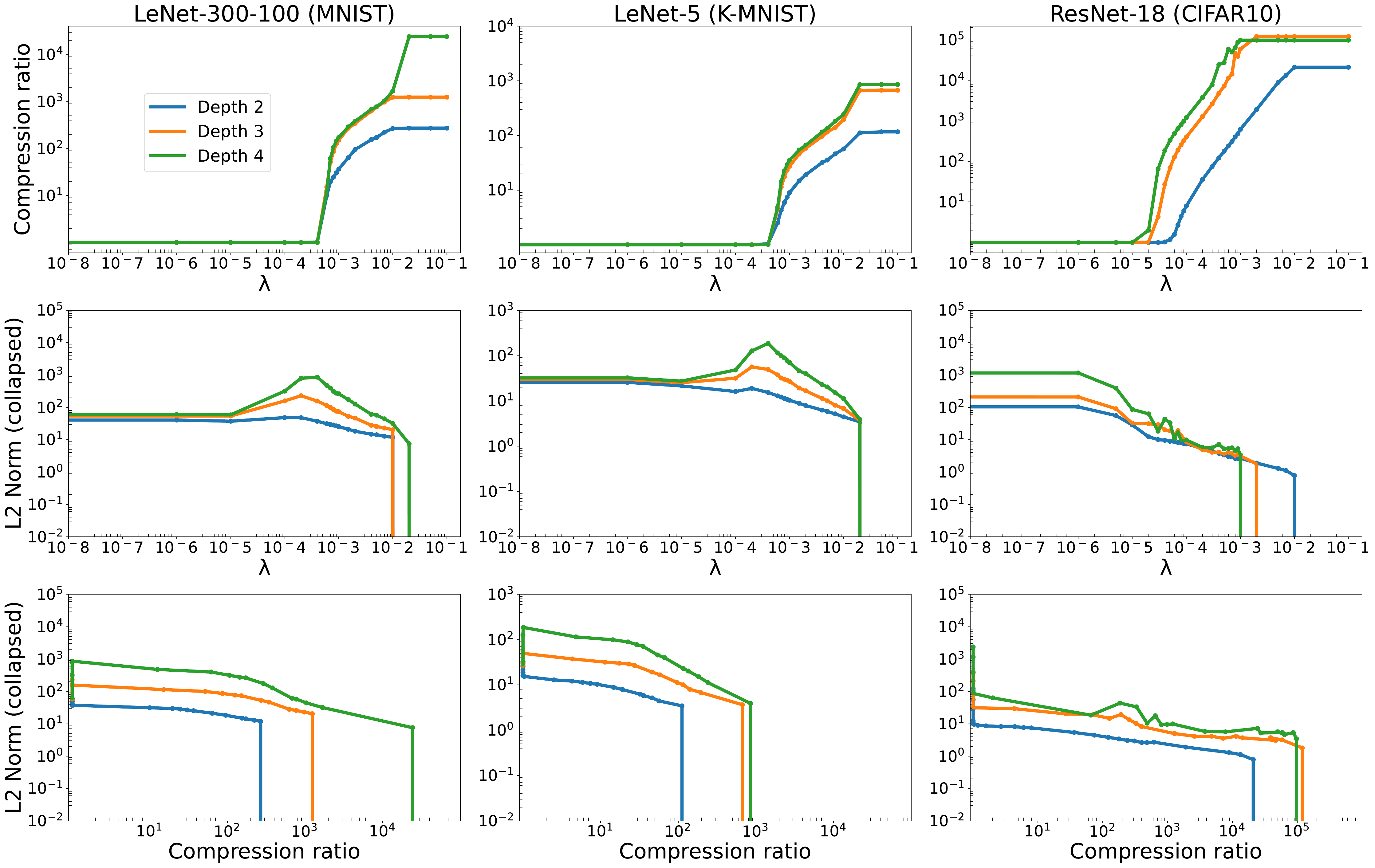}
    \caption{Relationship between different regularization strengths and compression ratio (first row), regularization strength and $L_2$ norm (second row), as well as compression ratio and $L_2$ norm (third row) for different datasets (columns) and different factorization depths $D$ (colors).}
    \label{fig:combined-cr-l2-dynamics-lambdagrid}
\end{figure}

\clearpage

\section{Additional results and ablation studies}\label{app:additional-architectures}

\subsection{Ablation study on the factorization depth \texorpdfstring{$D$}{D}}

In our experiments, we considered deeper factorizations up to a depth of $D=4$. This cut-off is not chosen arbitrarily but follows empirical observations that non-convex $L_q$ regularization achieves an optimal tradeoff between superior sparsity performance and difficulty of numerical optimization roughly at $q=0.5$ \citep{hu2017group}. In an ablation study, we investigate if this also holds for the DWF approach. \cref{fig:moredepth-ablation-depth-mnist-lenet300100} displays the sparsity-accuracy curves attained by factorizations depths up to $D=8$ and three different LRs in the range that performed well for $D=4$. We use the same hyperparameter configuration as described in \cref{app:experimental-details}. Results show that in all settings, deeper factorizations beyond $D=4$ offer no improvements in generalization or sparsity, while their training becomes increasingly unstable.

\begin{figure}[ht]
\centering
\includegraphics[width=1\textwidth]{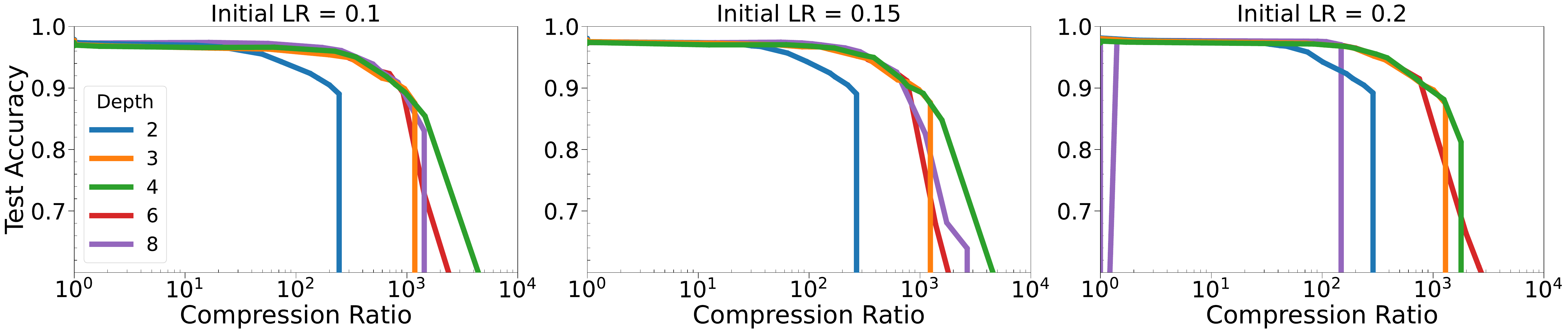}
\caption[]{Factorization depths $D>4$ empirically do not improve performance but become unstable to train. Sparsity-accuracy curves for LeNet-300-100 on MNIST with increasing LRs shown from left to right. 
}
\label{fig:moredepth-ablation-depth-mnist-lenet300100}
\end{figure}


\subsection{Combined training and validation accuracy}

\cref{fig:combined-trainval-cr} contains the deferred training and compression trajectories over a range of $\lambda$ values, as shown exemplarily for ResNet-18 on CIFAR10 in the main text (\cref{fig:resnet18-cifar10-cr-acc-combined}). For improved clarity, we display the running mean of the validation accuracy over three iterations. In addition, \cref{fig:trainval-comb-vgg19warm-cifar100} illustrates the learning dynamics for a much finer grid of $\lambda$ values in the top row to provide a clearer picture of how the training trajectories are affected by different $\lambda$ values. Validation accuracies without moving average smoothing are displayed in the bottom row.

\begin{figure}[h]
\centering
\includegraphics[width=0.96\textwidth]{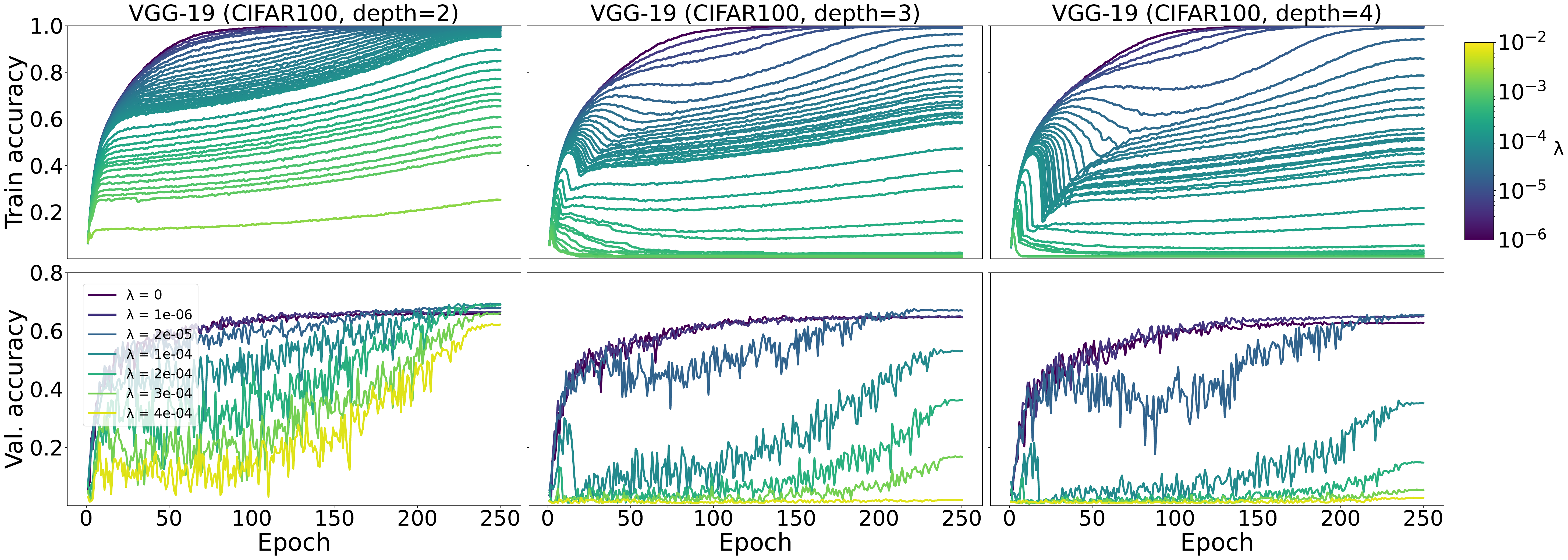}
\caption[]{Impact of regularization $\lambda$ on training (\textbf{top}) and validation accuracy (\textbf{bottom}) for VGG-19 on CIFAR100 and $D \in \{2,3,4\}$. The top row shows the training curves for the whole grid of $\lambda$ values. Bottom row shows validation accuracies without running mean for selected $\lambda$.}
\label{fig:trainval-comb-vgg19warm-cifar100}
\end{figure}

\begin{figure}[h]
\centering

\begin{subfigure}[b]{0.9\textwidth}
    \centering
    \includegraphics[width=0.9\textwidth]{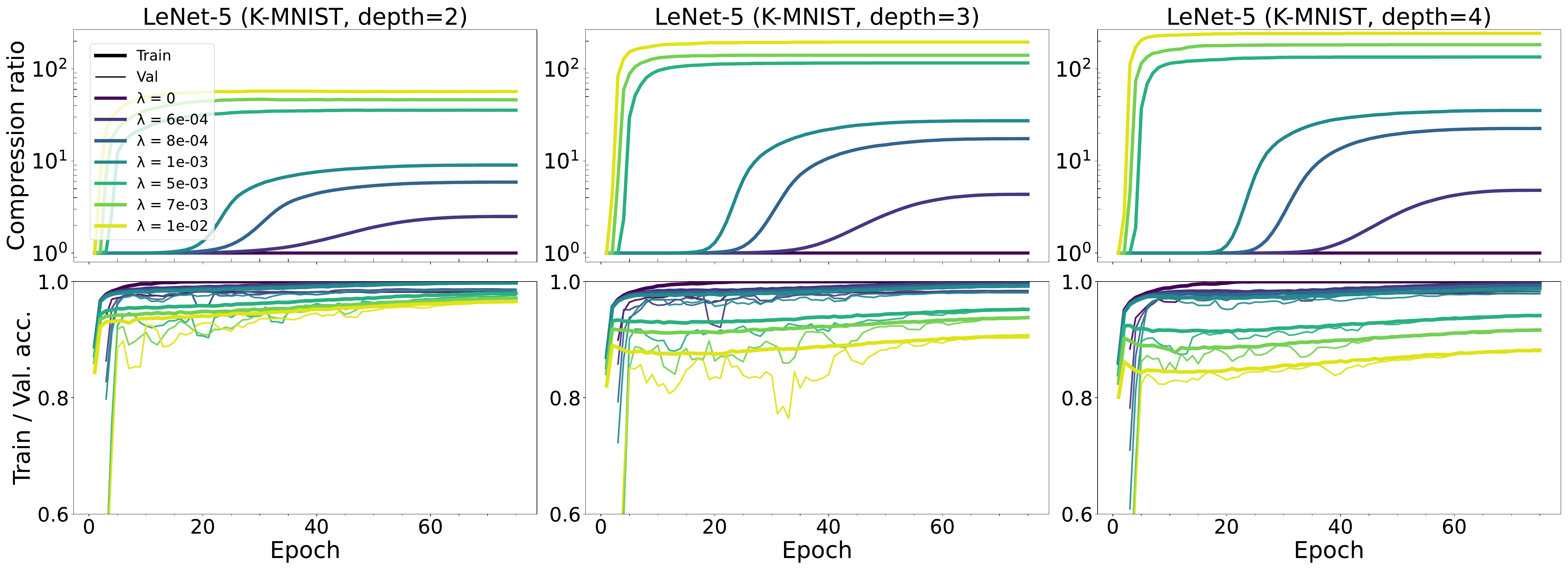}
    \caption{Convolutional LeNet-5 on K-MNIST}
    \label{fig:trainval-cr-lenet5bn-kmnist}
\end{subfigure}

\vspace{0.2cm}

\begin{subfigure}[b]{0.9\textwidth}
    \centering
    \includegraphics[width=0.9\textwidth]{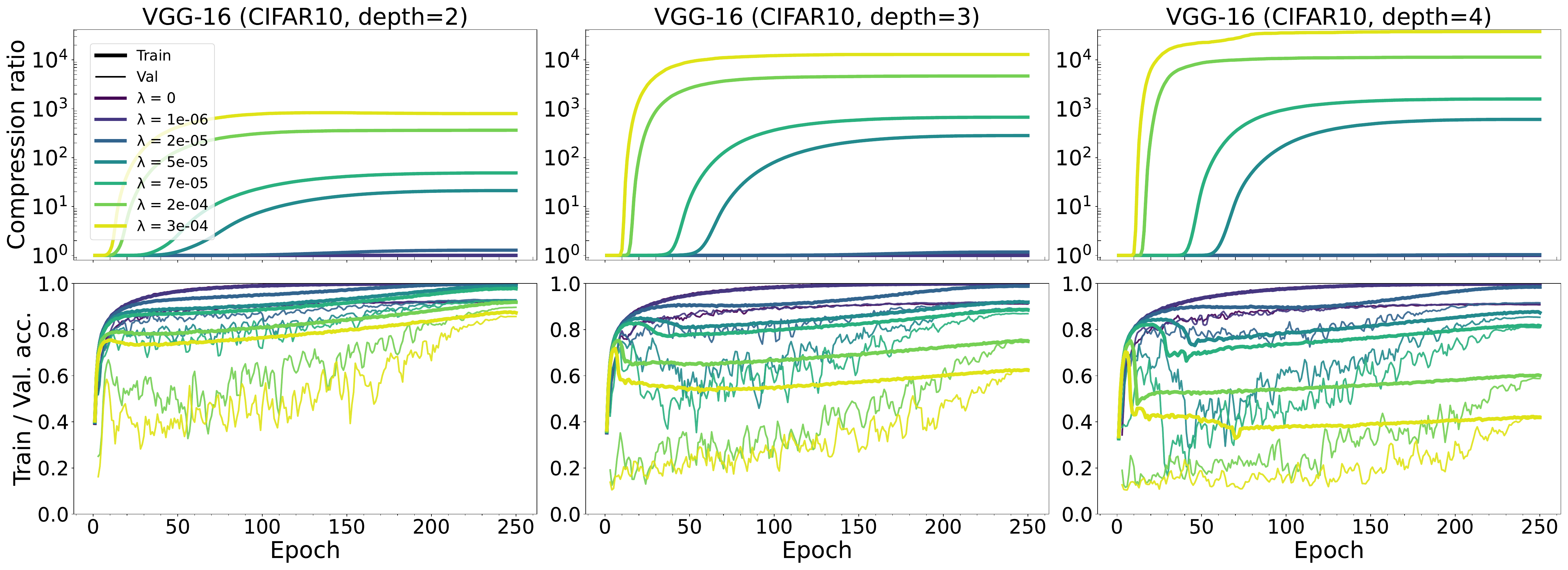}
    \caption{VGG-16 on CIFAR10}
    \label{fig:trainval-cr-vgg16-cifar10}
\end{subfigure}

\vspace{0.2cm}

\begin{subfigure}[b]{0.9\textwidth}
    \centering
    \includegraphics[width=0.9\textwidth]{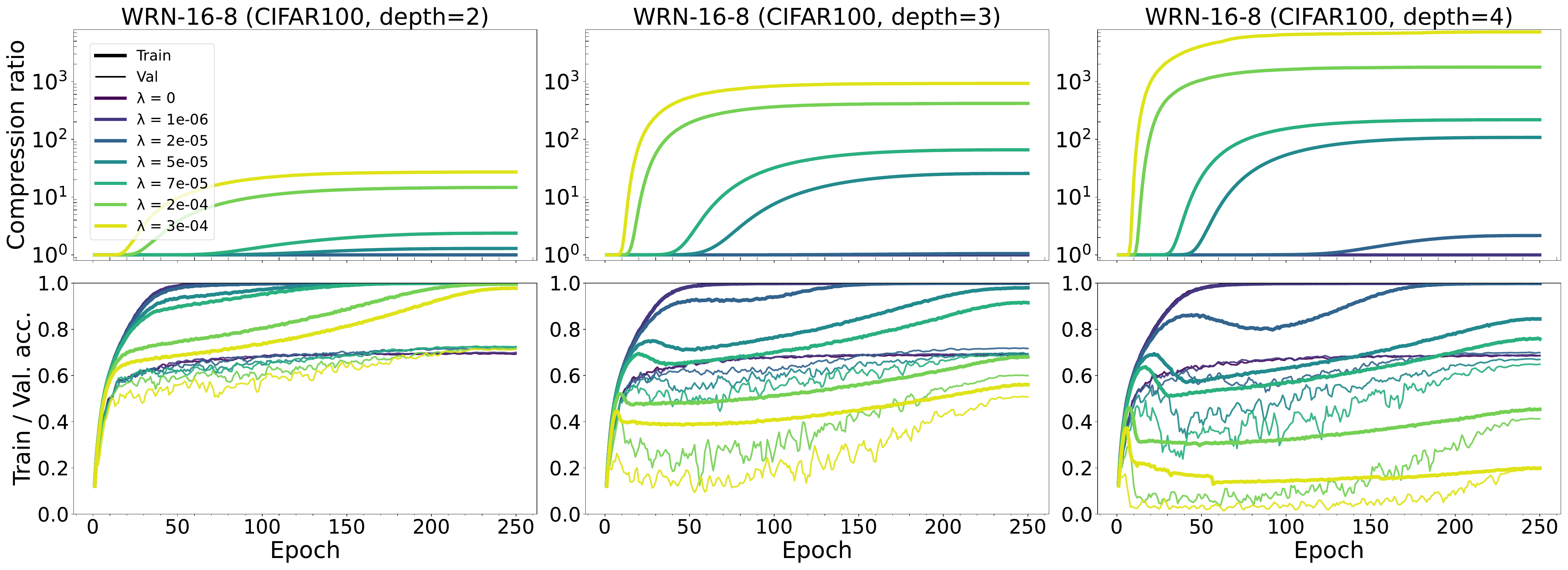}
    \caption{WRN-16-8 on CIFAR100}
    \label{fig:trainval-cr-wrn168-cifar100}
\end{subfigure}

\vspace{0.2cm}

\begin{subfigure}[b]{0.9\textwidth}
    \centering
    \includegraphics[width=0.9\textwidth]{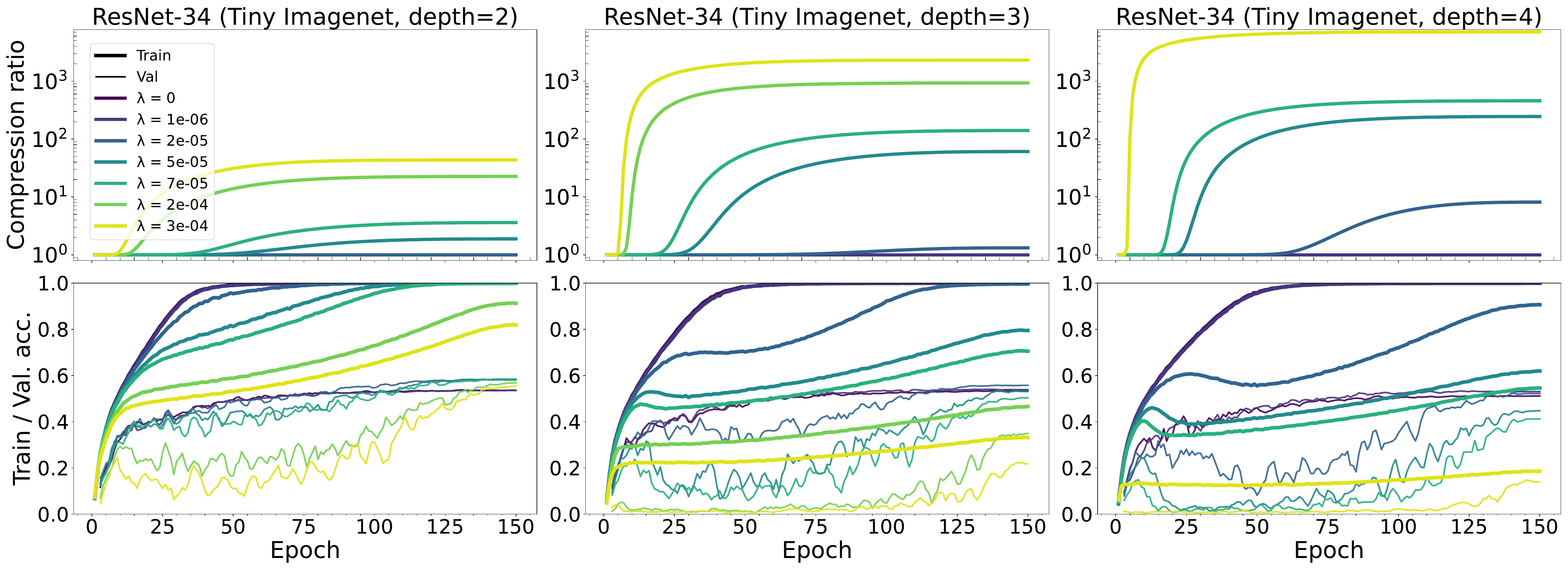}
    \caption{ResNet-34 on Tiny ImageNet}
    \label{fig:trainval-cr-resnet34-tiny}
\end{subfigure}

\caption{Impact of regularization $\lambda$ on compression (\textbf{top}), training, and validation accuracy (\textbf{bottom}) for various architectures and datasets, using $D \in \{2,3,4\}$.}
\label{fig:combined-trainval-cr}
\end{figure}


\subsection{Evolution of layer-wise compression and weight norms}\label{app:layerwise-cr-norms}


This section provides a detailed examination of the layer-wise dynamics regarding the evolution of sparsity, complementing the analysis in \cref{sec:learning-dynamics-sparsity}. Figure \ref{fig:combined-layerwise-cr-dep3} illustrates the layer-wise evolution of sparsity (top) and collapsed weight norm (bottom) for different architectures and datasets, using a factorization depth $D=3$ and increasing regularization strength $\lambda$. The plots reveal broadly consistent patterns across different architectures. For stronger regularization, we observe a more rapid and pronounced onset of sparsity across all layers. Different layers exhibit varying rates of sparsification, with deeper layers generally achieving higher compression ratios more quickly than earlier layers.
The layer-wise norm trajectories show a characteristic pattern of initial increase, for the first layer, followed by a peak and gradual decrease. The deeper levels exhibit a simpler dynamic, showing an initial short decline followed by a low plateau. Stronger regularization leads to earlier peaking and faster decay of weight norms, corresponding to faster sparsification.
Notably, the first layer exhibits distinct behavior (cf.~\cref{fig:combined-sparsity-misalignment}), often showing the lowest compression ratio and the highest peak in weight norm. These more complex dynamics indicate stronger feature learning in earlier layers closer to the input.
Combined, this analysis provides insights into how DWF affects different parts of the network during training and how this process is mediated by regularization. 

\begin{figure}[h]
\centering

\begin{subfigure}[b]{1\textwidth}
    \centering
    \includegraphics[width=1\textwidth]{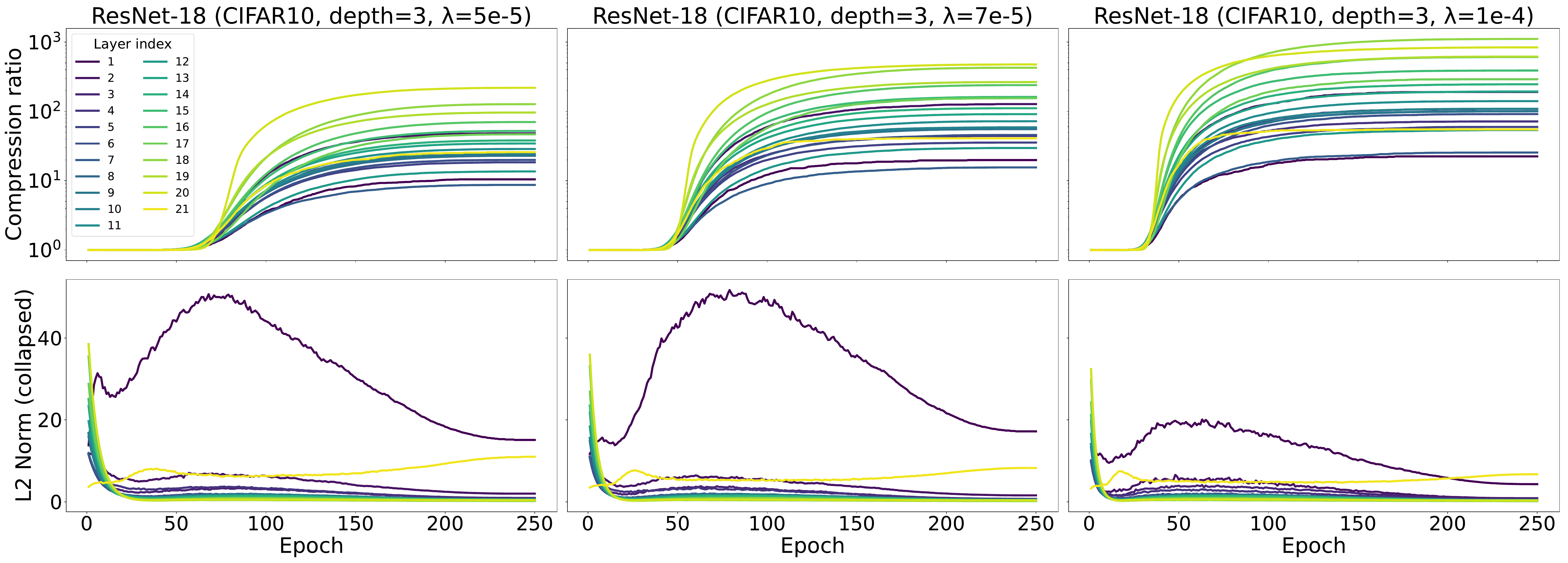}
    \caption{ResNet-18 on CIFAR10}
    \label{fig:layerwise-l2-cr-dep3-resnet18}
\end{subfigure}

\vspace{0.2cm} 

\begin{subfigure}[b]{1\textwidth}
    \centering
    \includegraphics[width=1\textwidth]{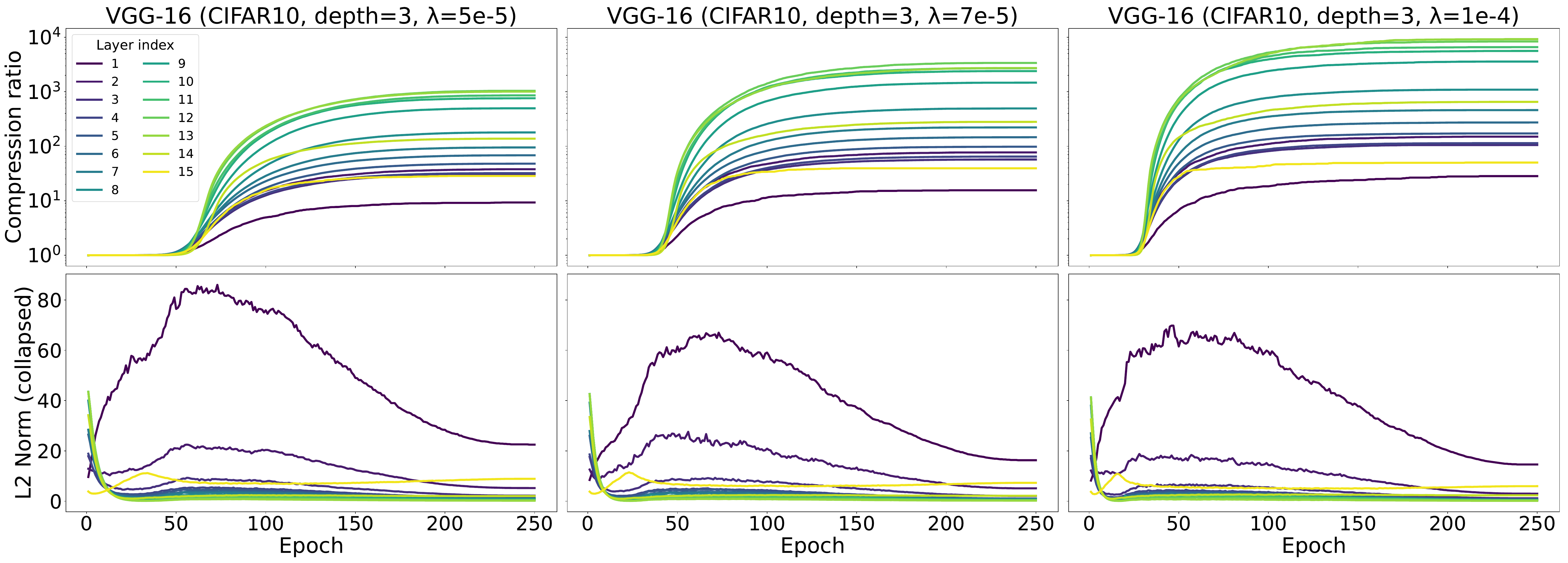}
    \caption{VGG-16 on CIFAR10}
    \label{fig:layerwise-cr-dep3-vgg16}
\end{subfigure}

\vspace{0.2cm} 

\begin{subfigure}[b]{1\textwidth}
    \centering
    \includegraphics[width=1\textwidth]{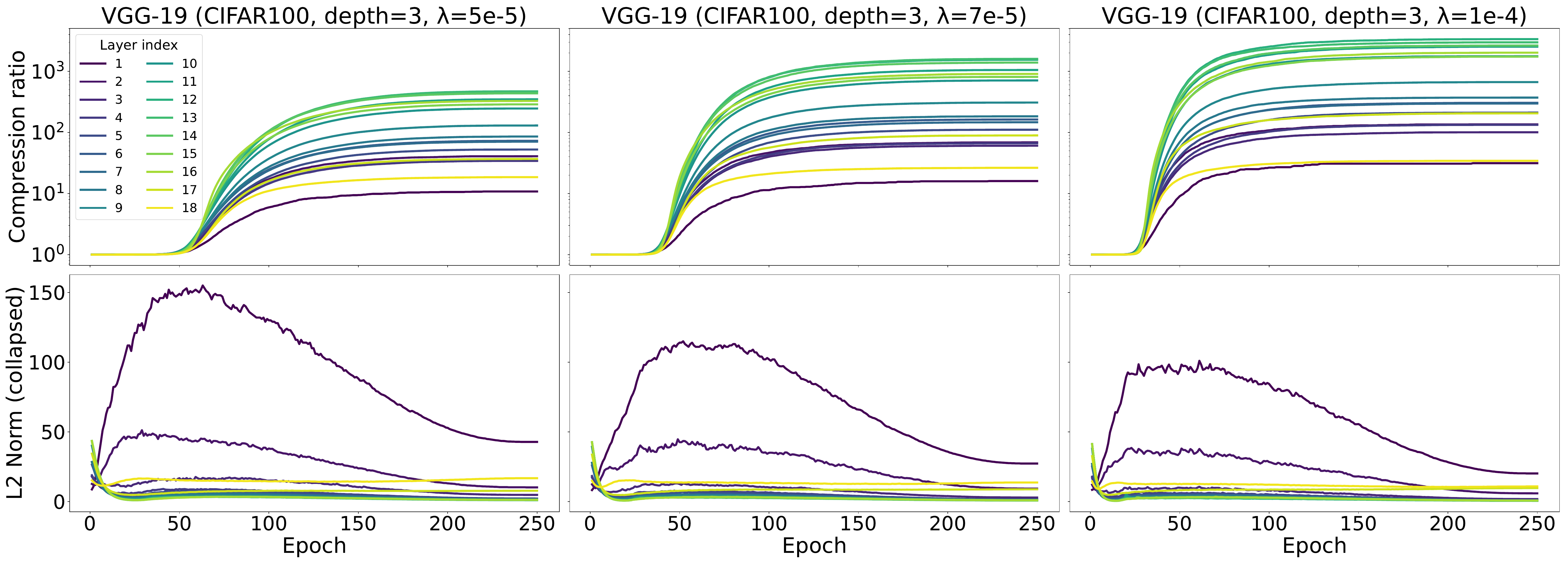}
    \caption{VGG-19 on CIFAR100}
    \label{fig:layerwise-cr-dep3-vgg19}
\end{subfigure}

\caption{Layer-wise evolution of sparsity (\textbf{top}) and collapsed weight norm (\textbf{bottom}) using $D=3$ and increasing regularization $\lambda$ (left to right) for different architectures and datasets.}
\label{fig:combined-layerwise-cr-dep3}
\end{figure}
\vspace{-0.4cm}


\subsection{Evolution of misalignment and onset of sparsity}\label{app:misalignment} 


We investigate the empirical dynamics of the factor misalignment $M(\bomega)$ and demonstrate that DWF ensures balanced factorizations for sufficiently large $\lambda$. Our analysis reveals an interesting connection between the reduction of misalignment and the onset of sparsity in the learning dynamics, both at the layer-wise and overall model levels.

\begin{figure}[h]
\centering
\begin{subfigure}[b]{1\textwidth}
    \centering
    \includegraphics[width=0.9\textwidth]{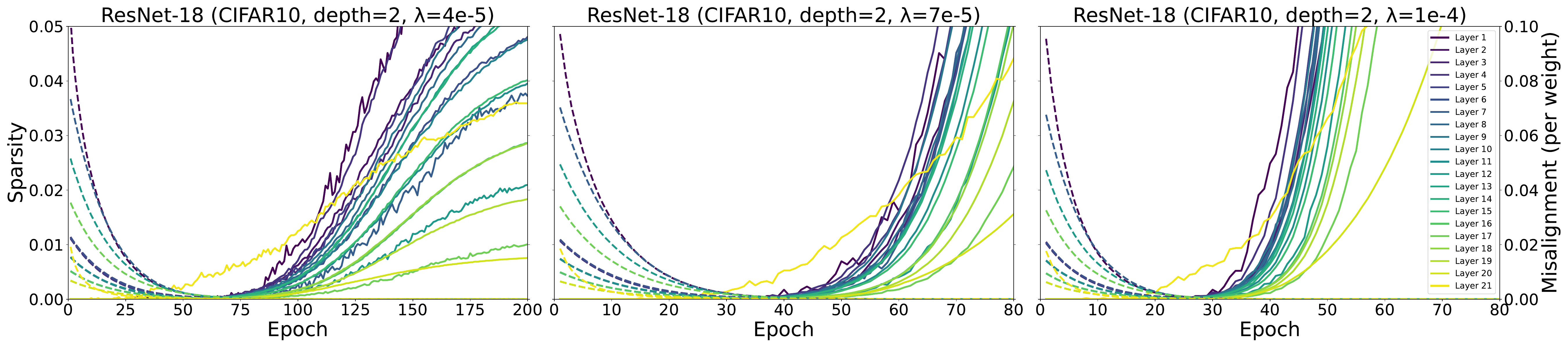}
    \caption{ResNet-18 on CIFAR10 ($D=2$)}
    \label{fig:sparsity-misalignment-layerwise-resnet18-cifar10-dep2}
\end{subfigure}

\vspace{0.1cm}

\begin{subfigure}[b]{1\textwidth}
    \centering
    \includegraphics[width=0.9\textwidth]{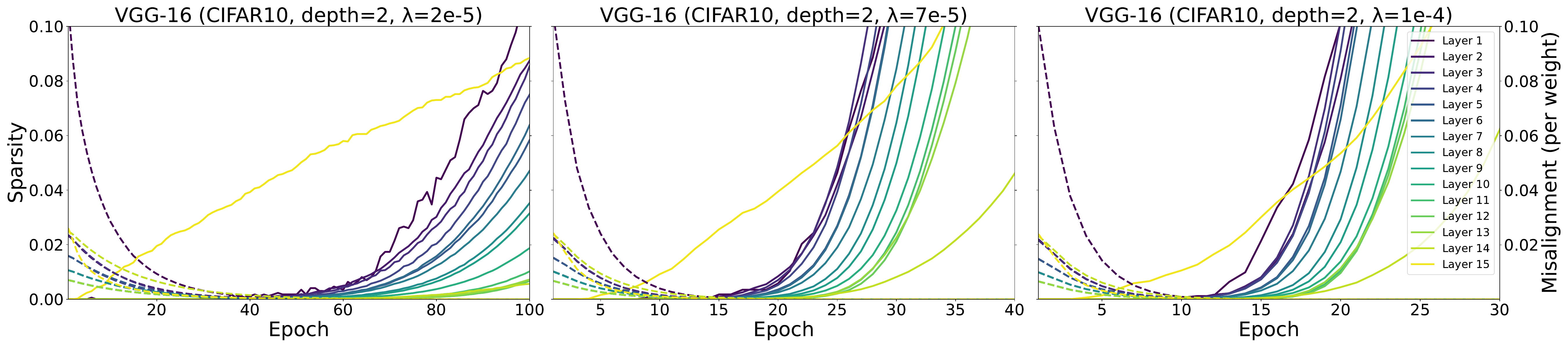}
    \caption{VGG-16 on CIFAR10 ($D=2$)}
    \label{fig:sparsity-misalignment-layerwise-vgg16warm-cifar10-dep2}
\end{subfigure}

\caption{Evolution of the average layer-wise factor misalignment (dashed) together with layer-wise sparsity (solid) for ResNet-18 and VGG-16 on CIFAR10 and $D=2$. Increasing values of $\lambda$ shown from left to right.}
\label{fig:combined-sparsity-misalignment}
\end{figure}

Figures \ref{fig:sparsity-misalignment-layerwise-resnet18-cifar10-dep2} and \ref{fig:sparsity-misalignment-layerwise-vgg16warm-cifar10-dep2} illustrate the layer-wise evolution of sparsity and the average misalignment per layer for depth-$2$ factorized ResNet-18 and VGG-16 trained on CIFAR10. The factor misalignment $M(\bomega)$ is calculated at the layer level and normalized by the number of weights in each layer, providing a granular view of misalignment evolution across the network.

\begin{figure}[ht]
\centering
\includegraphics[width=1\textwidth]{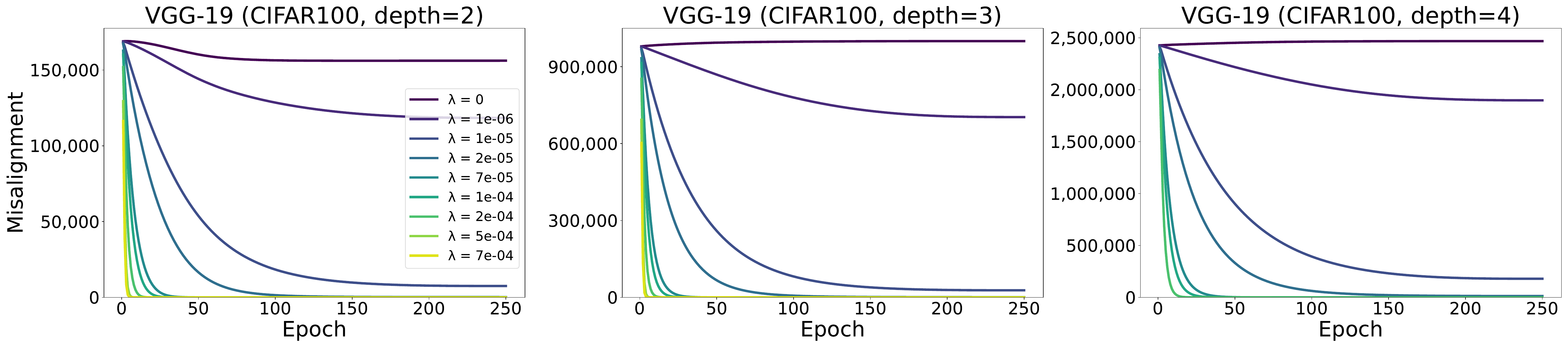}
\caption[]{Evolution of factor misalignment $M(\bomega)$ for VGG-19 on CIFAR100 with increasing $\lambda$ and factorization depths $D \in \{2,3,4\}$ (left to right).}
\label{fig:vgg19-misalign}
\end{figure}

The results reveal a clear relationship between the elimination of misalignment and sparsity emergence. The onset of sparsity coincides almost exactly with the elimination of average misalignment per layer, providing empirical evidence for the theoretical connection discussed in \cref{sec:theory}. Larger values of $\lambda$ lead to faster reduction in misalignment and earlier onset of sparsity, demonstrating stronger regularization favors more balanced factorizations.

Two important observations emerge from these results. First, earlier layers broadly exhibit higher initial layer-wise misalignment but decrease at a higher rate than later layers. Surprisingly, a larger initial misalignment coincides with the most rapid and pronounced onset of sparsity as the average misalignment approaches zero. Second, the final layer (yellow) displays distinctly decoupled dynamics, with sparsity emerging within the first few epochs, as opposed to the approximately simultaneous onset for the remaining layers.

We also explore if the onset of sparsity relates to the dynamics of different components of the regularized loss. Figure \ref{fig:onset-sparsity-equivar} shows the overall training loss $\Lomega(\bomega^{(t)})$, the data fit part $\mathcal{L}_{\bomega,0}(\bomega^{(t)})$, and the (non-collapsed) factor $L_2$ penalty $D^{-1} \lambda \Vert \bomega^{(t)} \Vert_2^2$. The $L_2$ penalty is further decomposed into its minimal penalty and the excess penalty or misalignment $\lambda \cdot M(\bomega^{(t)})$, as described in \cref{lemma:min-l2-penalty}.
\begin{figure}[h]
\centering
\includegraphics[width=0.65\textwidth]{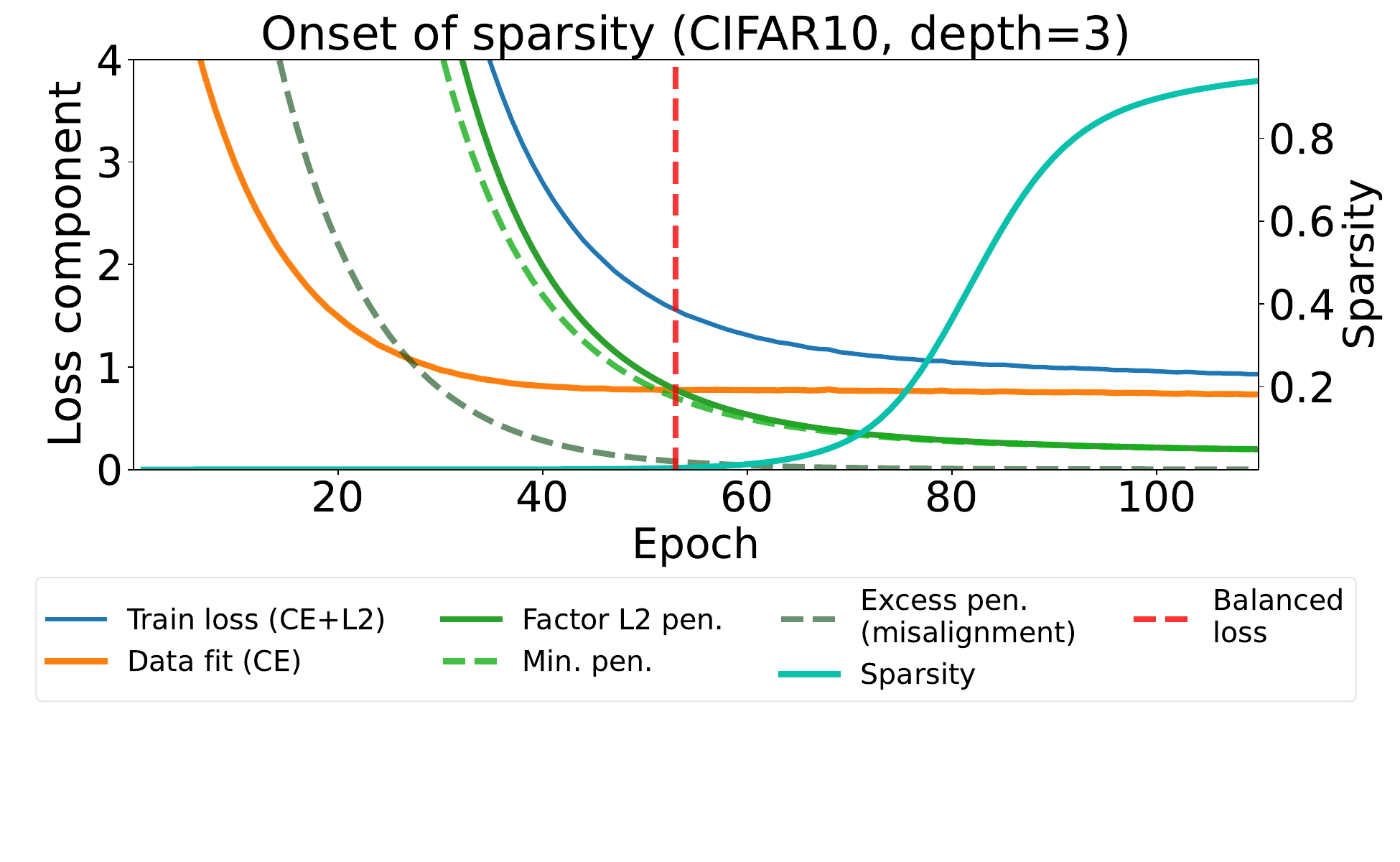}
\caption{Evolution of loss components and sparsity for ResNet-18 with depth $D=3$ and $\lambda=9\times10^{-5}$.}
\label{fig:onset-sparsity-equivar}
\end{figure}
Early in training, the $L_2$ component strongly exceeds the data fit component. Since the data fit levels out much earlier than the $L_2$ penalty, they intersect at some point during training that both LR and $\lambda$ influence. Notably, this point where the loss components are balanced coincides precisely with the onset of sparsity and the overall misalignment approaching zero.

\subsection{Post-hoc pruning and fine-tuning}\label{sec:post-hoc-finetuning}

Since DWF operates distinctly from most sparsification methods, this offers potential for integration with other pruning techniques. To demonstrate this, we combined DWF with post-hoc pruning on a ResNet-18 with $D=2$ factorization trained on CIFAR10. The setup used an initial learning rate of $0.27$ and a batch size of 256. Each model was trained across a range of $\lambda$ values to obtain a raw sparsity-accuracy tradeoff curve. These models were then further pruned along a sequence of compression ratios and fine-tuned for $50$ epochs using SGD with an LR of $0.11$.
\cref{fig:posthoc-pruning-finetune-resnet18} presents the results of this experiment. Combining DWF with post-hoc pruning led to increased sparsity at certain accuracy levels up to three times while maintaining comparable accuracy. This demonstrates the potential for integrating DWF with existing pruning techniques.
\begin{figure}[ht]
\centering
\includegraphics[width=0.6\textwidth]{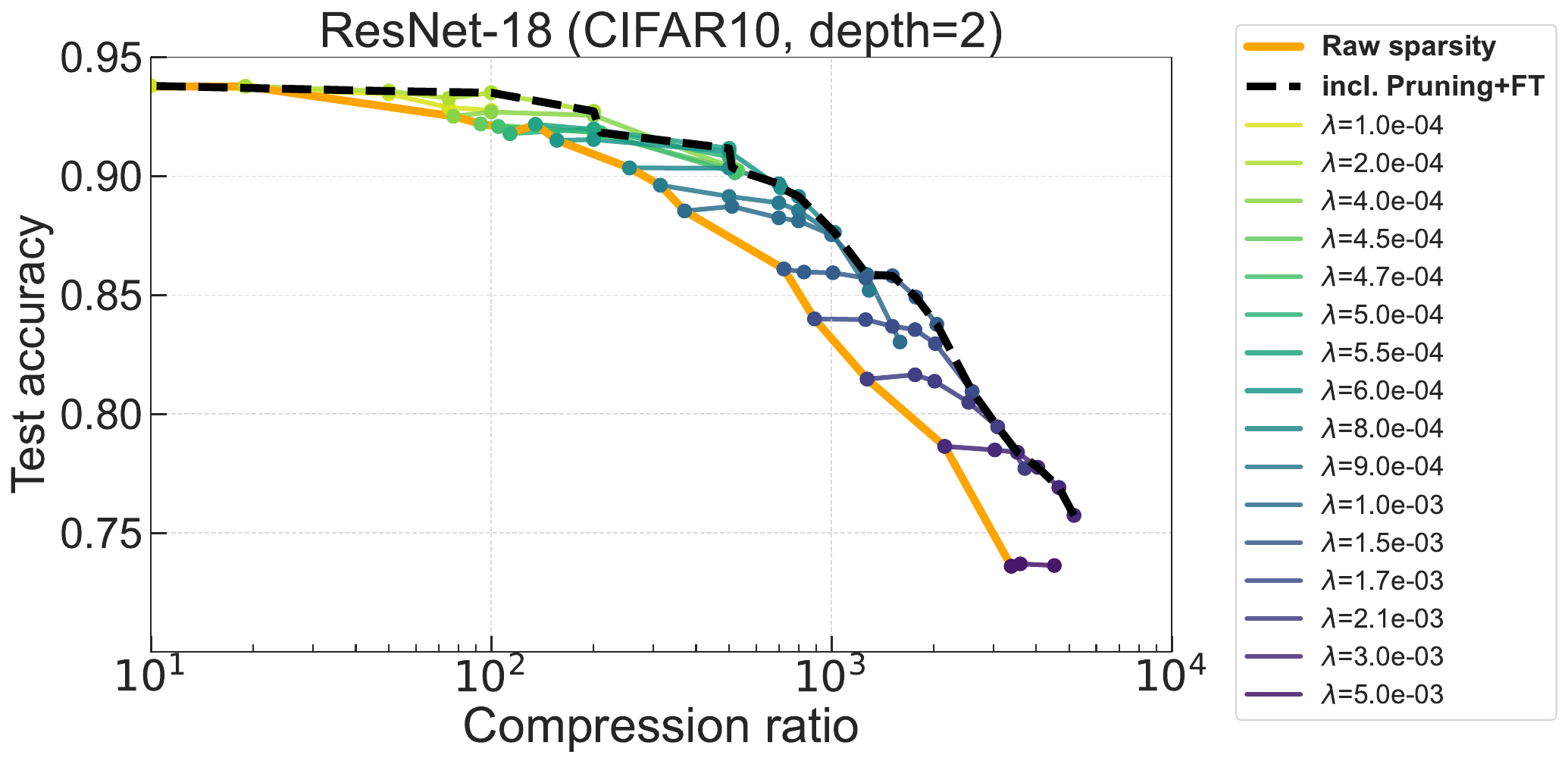}
\caption[]{Additional post-hoc pruning and fine-tuning. ResNet-18 ist first trained with DWF and $D=2$. The models are post-hoc magnitude pruned and re-trained for another 50 epochs.}
\label{fig:posthoc-pruning-finetune-resnet18}
\end{figure}
\vspace{-0.5cm}

\subsection{Additional sparsity-accuracy tradeoffs}


Figure \ref{fig:other-combinations-depths} presents sparsity-accuracy tradeoffs for WRN-16-8, ResNet-18, and ResNet-34 on CIFAR100 and Tiny ImageNet datasets, using factorization depths $D \in \{2, 3, 4\}$. Contrasting our training protocol for section \cref{sec:benchmark}, we do not tune the LRs here and set them to fixed values across datasets and architectures.
The results show that DWF consistently produces a range of sparsity-accuracy tradeoffs across different architectures and datasets without incurring model collapse. Deeper factorizations generally achieve higher accuracies at extreme sparsity levels.

\begin{figure}[ht]
\centering
\includegraphics[width=1\textwidth]{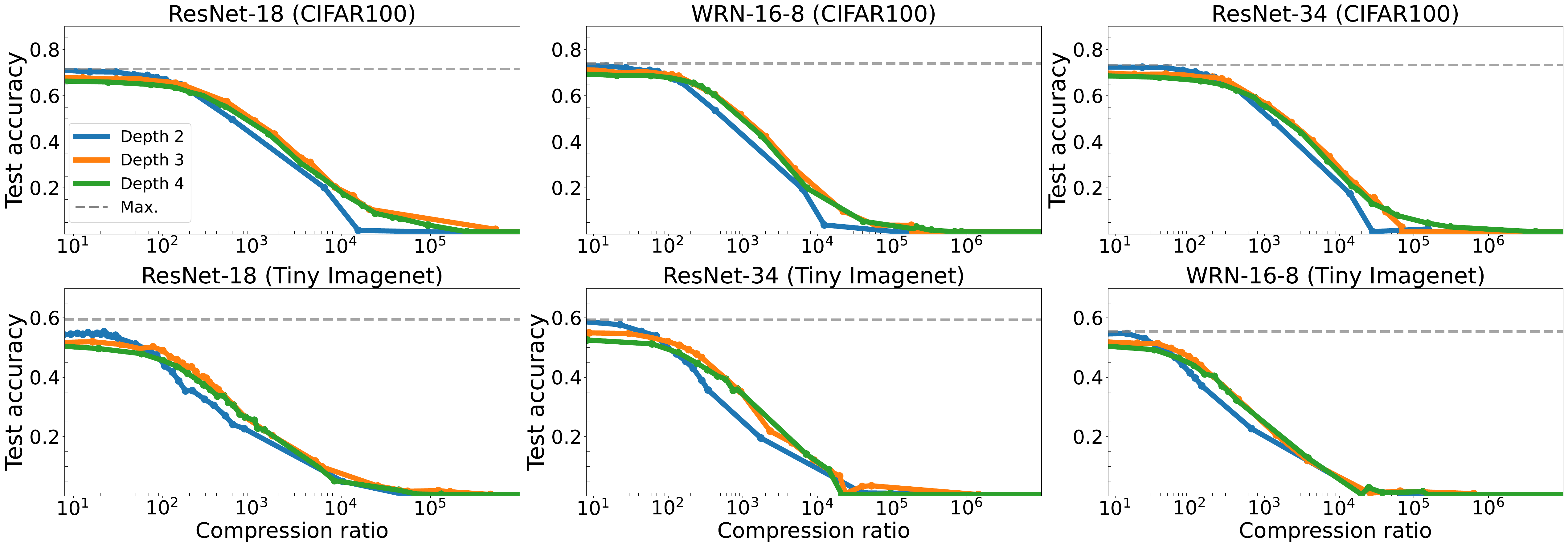}
\caption[]{Additional experiments applying DWF to WRN-16-8 and ResNet-18. For these experiments, the LRs were not tuned for each setting but set to $\{0.2,0.5,0.7\}$ for $D \in \{2,3,4\}$ across models and datasets.}
\label{fig:other-combinations-depths}
\end{figure}


\subsection{Additional benchmark results}\label{app:table-mnist-benchmark}

The following \cref{tab:accuracy_comparison} shows test accuracies for different compression ratios on different LeNet model specifications and different MNIST datasets. While GMP or SNIP sometimes perform best for 90\% or 95\% sparsity, DWF models show the highest sparsity in all medium- and high-sparsity cases. In total, Synflow and SNIP each work best in 1 case, GMP in 6 cases, $D=2$ yields the highest sparsity in 5 cases, $D=3$ in 14 cases, and $D=4$ in 21 cases.

\begin{table}[b]
\centering
\caption{Test accuracy (\%) for different compression ratios (columns), models (rows), and datasets (table sections).}
\label{tab:accuracy_comparison}
\tiny
\resizebox{\textwidth}{!}{%
\begin{tabular}{lcccccccc}
\hline
Sparsity & $90\%$ & $95\%$ & $98\%$ & $99\%$ & $99.5\%$ & $99.75\%$ & $99.875\%$ & $99.9\%$ \\
\hline
\multicolumn{9}{c}{\textbf{LeNet-5}} \\
\multicolumn{9}{l}{\textbf{MNIST}} \\
Dense & 99.26 $\pm$ 0.03 \\
Depth 2 & \textbf{99.26 $\pm$ 0.04} & \textbf{99.27 $\pm$ 0.06} & \textbf{99.02 $\pm$ 0.06} & 98.23 $\pm$ 0.09 & 66.88 $\pm$ 40.23 & 10.00 $\pm$ 0.00 & 10.00 $\pm$ 0.00 & 10.00 $\pm$ 0.00 \\
Depth 3 & 99.10 $\pm$ 0.06 & 99.09 $\pm$ 0.05 & \textbf{99.02 $\pm$ 0.01} & \textbf{98.79 $\pm$ 0.08} & \textbf{97.80 $\pm$ 0.10} & 81.22 $\pm$ 3.14 & 46.63 $\pm$ 26.19 & 40.52 $\pm$ 21.58 \\
Depth 4 & 98.95 $\pm$ 0.04 & 98.94 $\pm$ 0.05 & 98.88 $\pm$ 0.02 & 98.66 $\pm$ 0.10 & 97.76 $\pm$ 0.12 & \textbf{85.57 $\pm$ 6.45} & \textbf{61.31 $\pm$ 4.48} & 25.35 $\pm$ 21.71 \\
GMP & 99.00 $\pm$ 0.07 & 98.75 $\pm$ 0.14 & 97.97 $\pm$ 0.10 & 83.69 $\pm$ 10.07 & 11.35 $\pm$ 0.00 & 11.35 $\pm$ 0.00 & 11.35 $\pm$ 0.00 & 11.35 $\pm$ 0.00 \\
SNIP & 98.92 $\pm$ 0.21 & 98.63 $\pm$ 0.24 & 97.29 $\pm$ 0.38 & 64.85 $\pm$ 8.81 & 21.73 $\pm$ 11.19 & 14.34 $\pm$ 5.18 & 11.35 $\pm$ 0.00 & 11.35 $\pm$ 0.00 \\
Synflow & 99.00 $\pm$ 0.04 & 98.68 $\pm$ 0.09 & 98.18 $\pm$ 0.23 & 96.71 $\pm$ 0.63 & 91.97 $\pm$ 2.82 & 74.37 $\pm$ 7.80 & 56.60 $\pm$ 2.68 & \textbf{43.68 $\pm$ 2.34} \\
Random & 98.28 $\pm$ 0.13 & 97.29 $\pm$ 0.27 & 59.70 $\pm$ 7.65 & 22.61 $\pm$ 4.38 & 11.35 $\pm$ 0.00 & 11.35 $\pm$ 0.00 & 11.35 $\pm$ 0.00 & 11.35 $\pm$ 0.00 \\
\hline
\multicolumn{9}{l}{\textbf{F-MNIST}} \\
Dense & 90.41 $\pm$ 0.20 \\
Depth 2 & \textbf{91.30 $\pm$ 0.13} & \textbf{90.78 $\pm$ 0.26} & 89.78 $\pm$ 0.20 & 88.06 $\pm$ 0.18 & 34.68 $\pm$ 34.91 & 10.00 $\pm$ 0.00 & 10.00 $\pm$ 0.00 & 10.00 $\pm$ 0.00 \\
Depth 3 & 90.82 $\pm$ 0.22 & 90.67 $\pm$ 0.16 & \textbf{90.10 $\pm$ 0.18} & 88.75 $\pm$ 0.26 & \textbf{85.84 $\pm$ 0.72} & \textbf{79.15 $\pm$ 0.74} & \textbf{64.87 $\pm$ 2.78} & \textbf{60.53 $\pm$ 3.17} \\
Depth 4 & 90.65 $\pm$ 0.03 & 90.28 $\pm$ 0.25 & 90.00 $\pm$ 0.19 & \textbf{88.77 $\pm$ 0.11} & 85.21 $\pm$ 0.36 & 77.76 $\pm$ 1.41 & 62.90 $\pm$ 0.94 & 56.29 $\pm$ 2.23 \\
GMP & 90.24 $\pm$ 0.14 & 89.71 $\pm$ 0.16 & 84.61 $\pm$ 0.83 & 25.57 $\pm$ 7.01 & 10.00 $\pm$ 0.00 & 10.00 $\pm$ 0.00 & 10.00 $\pm$ 0.00 & 10.00 $\pm$ 0.00 \\
SNIP & 90.21 $\pm$ 0.59 & 87.21 $\pm$ 3.00 & 68.27 $\pm$ 15.13 & 48.92 $\pm$ 16.00 & 10.00 $\pm$ 0.00 & 10.00 $\pm$ 0.00 & 10.00 $\pm$ 0.00 & 10.00 $\pm$ 0.00 \\
Synflow & 89.77 $\pm$ 0.13 & 89.16 $\pm$ 0.31 & 87.21 $\pm$ 0.12 & 84.86 $\pm$ 0.37 & 78.68 $\pm$ 2.30 & 66.01 $\pm$ 9.94 & 45.45 $\pm$ 1.90 & 38.97 $\pm$ 1.97 \\
Random & 89.28 $\pm$ 0.26 & 86.29 $\pm$ 0.11 & 46.32 $\pm$ 9.87 & 15.60 $\pm$ 6.67 & 10.00 $\pm$ 0.00 & 10.00 $\pm$ 0.00 & 10.00 $\pm$ 0.00 & 10.00 $\pm$ 0.00 \\
\hline
\multicolumn{9}{l}{\textbf{K-MNIST}} \\
Dense & 95.58 $\pm$ 0.33 \\
Depth 2 & \textbf{95.45 $\pm$ 0.24} & 94.56 $\pm$ 0.25 & 90.52 $\pm$ 0.30 & 81.88 $\pm$ 0.15 & 10.00 $\pm$ 0.00 & 10.00 $\pm$ 0.00 & 10.00 $\pm$ 0.00 & 10.00 $\pm$ 0.00 \\
Depth 3 & 95.17 $\pm$ 0.19 & \textbf{94.91 $\pm$ 0.15} & \textbf{93.08 $\pm$ 0.10} & \textbf{88.68 $\pm$ 0.59} & 78.02 $\pm$ 0.58 & 61.69 $\pm$ 0.29 & 21.95 $\pm$ 16.90 & 10.00 $\pm$ 0.00 \\
Depth 4 & 94.72 $\pm$ 0.19 & 94.41 $\pm$ 0.22 & 92.91 $\pm$ 0.21 & 87.91 $\pm$ 0.40 & \textbf{79.37 $\pm$ 0.19} & \textbf{61.75 $\pm$ 0.90} & \textbf{43.09 $\pm$ 1.87} & \textbf{27.10 $\pm$ 12.12} \\
GMP & 93.18 $\pm$ 0.42 & 90.92 $\pm$ 0.40 & 79.28 $\pm$ 0.71 & 50.75 $\pm$ 11.04 & 20.12 $\pm$ 9.81 & 10.95 $\pm$ 1.64 & 10.00 $\pm$ 0.00 & 10.00 $\pm$ 0.00 \\
SNIP & 91.86 $\pm$ 0.73 & 89.00 $\pm$ 0.18 & 71.32 $\pm$ 1.48 & 26.25 $\pm$ 0.37 & 12.64 $\pm$ 4.57 & 10.00 $\pm$ 0.00 & 10.00 $\pm$ 0.00 & 10.00 $\pm$ 0.00 \\
Synflow & 92.21 $\pm$ 0.31 & 90.59 $\pm$ 0.17 & 82.77 $\pm$ 1.28 & 72.95 $\pm$ 0.46 & 58.95 $\pm$ 1.42 & 44.67 $\pm$ 3.48 & 27.69 $\pm$ 6.51 & 26.96 $\pm$ 4.16 \\
Random & 89.52 $\pm$ 0.60 & 82.18 $\pm$ 0.79 & 32.65 $\pm$ 7.40 & 11.20 $\pm$ 2.92 & 9.47 $\pm$ 0.92 & 10.00 $\pm$ 0.00 & 10.00 $\pm$ 0.00 & 10.00 $\pm$ 0.00 \\
\hline
\multicolumn{9}{c}{\textbf{LeNet-300-100}} \\
\multicolumn{9}{l}{\textbf{MNIST}} \\
Dense & 98.29 $\pm$ 0.05 \\
Depth 2 & 97.49 $\pm$ 0.10 & 97.30 $\pm$ 0.12 & 96.33 $\pm$ 0.02 & 94.54 $\pm$ 0.27 & 91.15 $\pm$ 0.13 & 10.00 $\pm$ 0.00 & 10.00 $\pm$ 0.00 & 10.00 $\pm$ 0.00 \\
Depth 3 & 97.30 $\pm$ 0.13 & 97.25 $\pm$ 0.15 & 97.11 $\pm$ 0.21 & 96.79 $\pm$ 0.19 & 95.83 $\pm$ 0.26 & 93.58 $\pm$ 0.34 & 90.29 $\pm$ 0.24 & \textbf{88.55 $\pm$ 0.63} \\
Depth 4 & 97.31 $\pm$ 0.10 & 97.22 $\pm$ 0.12 & 97.08 $\pm$ 0.15 & \textbf{96.81 $\pm$ 0.15} & \textbf{96.05 $\pm$ 0.25} & \textbf{94.27 $\pm$ 0.26} & \textbf{90.62 $\pm$ 0.40} & 88.49 $\pm$ 1.00 \\
GMP & \textbf{98.34 $\pm$ 0.16} & \textbf{98.14 $\pm$ 0.32} & \textbf{97.39 $\pm$ 0.56} & 96.59 $\pm$ 0.03 & 93.30 $\pm$ 1.75 & 75.38 $\pm$ 15.13 & 25.97 $\pm$ 9.12 & 19.16 $\pm$ 6.95 \\
SNIP & 98.09 $\pm$ 0.13 & 97.67 $\pm$ 0.19 & 96.25 $\pm$ 0.66 & 94.51 $\pm$ 0.21 & 86.45 $\pm$ 4.37 & 59.67 $\pm$ 6.41 & 40.38 $\pm$ 7.88 & 14.52 $\pm$ 2.79 \\
Synflow & 97.83 $\pm$ 0.17 & 97.40 $\pm$ 0.30 & 96.26 $\pm$ 0.59 & 94.03 $\pm$ 0.19 & 88.86 $\pm$ 0.46 & 75.61 $\pm$ 2.02 & 48.75 $\pm$ 11.40 & 49.49 $\pm$ 5.19 \\
Random & 97.35 $\pm$ 0.09 & 95.83 $\pm$ 0.35 & 79.32 $\pm$ 7.61 & 38.01 $\pm$ 10.45 & 14.64 $\pm$ 2.90 & 11.35 $\pm$ 0.00 & 11.35 $\pm$ 0.00 & 11.35 $\pm$ 0.00 \\
\hline
\multicolumn{9}{l}{\textbf{F-MNIST}} \\
Dense & 89.12 $\pm$ 0.40 \\
Depth 2 & 87.95 $\pm$ 0.03 & 87.42 $\pm$ 0.12 & 86.16 $\pm$ 0.12 & 85.05 $\pm$ 0.09 & 82.80 $\pm$ 0.06 & 53.87 $\pm$ 31.02 & 41.90 $\pm$ 22.68 & 38.05 $\pm$ 20.10 \\
Depth 3 & 87.84 $\pm$ 0.12 & 87.66 $\pm$ 0.10 & 87.34 $\pm$ 0.12 & 86.97 $\pm$ 0.11 & 86.02 $\pm$ 0.20 & 84.84 $\pm$ 0.35 & \textbf{82.35 $\pm$ 0.11} & \textbf{81.35 $\pm$ 0.16} \\
Depth 4 & 87.68 $\pm$ 0.15 & 87.53 $\pm$ 0.21 & \textbf{87.35 $\pm$ 0.25} & \textbf{87.15 $\pm$ 0.09} & \textbf{86.52 $\pm$ 0.16} & \textbf{84.96 $\pm$ 0.26} & 82.32 $\pm$ 0.31 & 81.30 $\pm$ 0.40 \\
GMP & 88.22 $\pm$ 1.00 & \textbf{87.95 $\pm$ 1.07} & 87.10 $\pm$ 1.30 & 85.22 $\pm$ 2.15 & 79.77 $\pm$ 5.47 & 55.70 $\pm$ 25.45 & 26.55 $\pm$ 14.94 & 17.29 $\pm$ 6.42 \\
SNIP & \textbf{88.61 $\pm$ 1.08} & 87.68 $\pm$ 0.84 & 82.35 $\pm$ 5.34 & 83.76 $\pm$ 1.10 & 75.64 $\pm$ 5.02 & 21.69 $\pm$ 17.05 & 13.23 $\pm$ 5.59 & 10.00 $\pm$ 0.00 \\
Synflow & 88.16 $\pm$ 1.06 & 87.54 $\pm$ 0.74 & 86.57 $\pm$ 0.75 & 85.23 $\pm$ 0.41 & 82.04 $\pm$ 0.59 & 76.75 $\pm$ 0.33 & 68.29 $\pm$ 1.46 & 51.62 $\pm$ 13.97 \\
Random & 87.79 $\pm$ 0.12 & 87.14 $\pm$ 0.57 & 73.35 $\pm$ 8.92 & 29.92 $\pm$ 9.75 & 16.21 $\pm$ 4.80 & 10.16 $\pm$ 0.27 & 10.00 $\pm$ 0.00 & 10.00 $\pm$ 0.00 \\
\hline
\multicolumn{9}{l}{\textbf{K-MNIST}} \\
Dense & 91.44 $\pm$ 0.24 \\
Depth 2 & 88.26 $\pm$ 0.09 & 86.61 $\pm$ 0.15 & 80.74 $\pm$ 0.24 & 73.23 $\pm$ 0.36 & 63.19 $\pm$ 0.63 & 10.00 $\pm$ 0.00 & 10.00 $\pm$ 0.00 & 10.00 $\pm$ 0.00 \\
Depth 3 & 87.83 $\pm$ 0.20 & 87.49 $\pm$ 0.22 & 86.60 $\pm$ 0.06 & 83.92 $\pm$ 0.19 & 77.75 $\pm$ 0.22 & 66.56 $\pm$ 3.07 & 52.17 $\pm$ 1.94 & 48.71 $\pm$ 1.67 \\
Depth 4 & 87.96 $\pm$ 0.16 & 87.63 $\pm$ 0.17 & \textbf{86.99 $\pm$ 0.24} & \textbf{84.49 $\pm$ 0.21} & \textbf{78.85 $\pm$ 0.66} & \textbf{70.16 $\pm$ 0.51} & \textbf{55.77 $\pm$ 1.78} & \textbf{51.13 $\pm$ 2.19} \\
GMP & \textbf{90.43 $\pm$ 0.37} & \textbf{88.09 $\pm$ 0.43} & 83.18 $\pm$ 0.34 & 75.34 $\pm$ 0.61 & 51.39 $\pm$ 0.57 & 21.50 $\pm$ 4.05 & 9.41 $\pm$ 1.02 & 10.00 $\pm$ 0.00 \\
SNIP & 88.51 $\pm$ 0.49 & 85.15 $\pm$ 0.20 & 79.96 $\pm$ 1.03 & 68.30 $\pm$ 0.79 & 47.25 $\pm$ 2.25 & 25.02 $\pm$ 3.44 & 10.00 $\pm$ 0.00 & 10.85 $\pm$ 1.47 \\
Synflow & 88.52 $\pm$ 0.32 & 86.05 $\pm$ 0.34 & 82.30 $\pm$ 0.45 & 77.29 $\pm$ 0.39 & 65.65 $\pm$ 1.26 & 52.01 $\pm$ 1.36 & 36.22 $\pm$ 1.61 & 37.92 $\pm$ 2.96 \\
Random & 87.02 $\pm$ 0.31 & 82.10 $\pm$ 0.15 & 57.30 $\pm$ 2.39 & 22.22 $\pm$ 3.61 & 11.96 $\pm$ 3.39 & 11.65 $\pm$ 2.35 & 10.00 $\pm$ 0.00 & 10.00 $\pm$ 0.00 \\
\hline
\end{tabular}
}

\end{table}


\clearpage

\section{Experimental details}\label{app:experimental-details}

\subsection{Description of comparison methods}\label{app:other-methods}

In the following, we briefly describe the comparison methods used in our study, covering different approaches of network sparsification before or post-training.

\textbf{SNIP} (Single-shot Network Pruning): This method introduces the concept of connection sensitivity to quantify the impact of individual weights on the network's loss function, given by \( \mathbf{z}^{(l)} = \left| \mathbf{g}^{(l)} \odot \mathbf{w}^{(l)} \right| \) for layer $l \in [L]$, where \( \mathbf{g}^{(l)} \) is the loss gradient with respect to \( \mathbf{w}^{(l)} \). By computing this score for each weight at initialization, SNIP identifies and preserves the most crucial connections, enabling effective one-shot pruning before training. This approach has shown remarkable efficacy in maintaining network performance even at high sparsity levels \citep{lee2019snip}.\\
\textbf{SynFlow}: As a data-independent pruning approach, SynFlow addresses the important issue of layer collapse in neural network pruning. It utilizes a layerwise conservation principle to ensure conservation of synaptic flow across the network, thereby maintaining high model capacity even under extreme compression ratios. SynFlow has demonstrated state-of-the-art performance at very high sparsity levels, outperforming many data-driven approaches in scenarios where over 99\% of parameters are pruned \citep{tanaka2020pruning}.\\
\textbf{Global Magnitude Pruning} (GMP): This method is based on the assumption that the weight magnitudes are a good proxy for their importance in the network. Despite its heuristic nature, GMP has proven remarkably effective, especially at low sparsity levels. Its success has led to numerous refinements and adaptations of pruning schedules and criteria, with its lasting popularity in both research and practice highlighting its robustness and efficacy \citep{han2015learning,blalock2020state,frankle2020pruning}.\\
\textbf{Random Pruning}: Serving as a baseline method, random pruning uniformly removes weights or structures without considering their importance, thereby helping to evaluate the effectiveness of more sophisticated pruning strategies.

\subsection{Details on architectures, datasets, and training hyperparameters}

\paragraph{Neural network architectures}

In the following, we briefly describe the neural architectures used in our experiments.

\begin{itemize}
    \item \textit{LeNet-300-100}: This fully-connected network, designed for MNIST classification, consists of an input layer (784 units), two hidden layers (300 and 100 units respectively), and an output layer (10 units). All layers utilize ReLU activation functions. The architecture closely follows the original version proposed by \citep{lecun1989optimal}, adapted to incorporate modern activations for improved performance.
    \item \textit{LeNet-5} \citep{lecun1998gradient} is a small but effective convolutional network with two convolutional layers (6 and 16 filters, both 5x5), and three fully connected layers (120, 84, and 10 units). We use ReLU activations and add batch normalization \citep{ioffe2015batch} and average pooling after each convolutional layer.
    \item \textit{VGG-16} for CIFAR10/100 consists of 13 convolutional layers and 3 fully connected layers \citep{simonyan2014very}. The convolutional part is described by 2x(64 filters), 2x(128 filters), 3x(256 filters), 3x(512 filters), 3x(512 filters), with max pooling inserted after each group. All filter sizes are 3x3. Batch normalization is applied before each ReLU activation as described by \citep{lee2019snip}. \textit{VGG-19} extends VGG-16 by adding one more convolutional layer to each of the last three convolutional blocks, resulting in 19 layers in total. Following \citep{zagoruyko2015cifar}, the two fully-connected layers before the output are reduced to a single layer layer with 512 units compared to the ImageNet version.
    \item \textit{ResNet-18} is a popular residual network with 18 layers \citep{he2016deep}. In our implementation, the architecture is adapted following common practice for smaller image datasets \citep{tanaka2020pruning}. We modify the first convolutional layer to use 3x3 filters and remove the initial max pooling layer. The network consists of an initial convolutional layer, followed by 4 stages of basic blocks (2 blocks each), with filter sizes [64, 128, 256, 512]. Global average pooling is used before the fully connected output layer. Likewise, our \textit{ResNet-34} implementation is also adapted for smaller datasets. The architecture follows a similar pattern to ResNet-18 with more layers in each stage. As with ResNet-18, we use 3x3 filters in the first layer and omit the initial max pooling, appropriate for the image size of our experiments.
    \item \textit{WideResNet} is a ResNet variant whose increased width compared to plain ResNets allows for better feature representations. In our experiments, we choose WRN-16-8, which is specifically suited for CIFAR-like tasks \citep{zagoruyko2016wide}.
\end{itemize}

\paragraph{Datasets}

\begin{table}[t]
\centering
\caption{Summary of datasets used in experiments.}
\label{tab:datasets}
\resizebox{0.8\textwidth}{!}{%
\begin{tabular}{lrrrr}
\toprule
Dataset & Training Samples & Test Samples & Classes & Input Features \\
\midrule
MNIST & 60,000 & 10,000 & 10 & 784 (28$\times$28$\times$1) \\
F-MNIST & 60,000 & 10,000 & 10 & 784 (28$\times$28$\times$1) \\
K-MNIST & 60,000 & 10,000 & 10 & 784 (28$\times$28$\times$1) \\
CIFAR-10 & 50,000 & 10,000 & 10 & 3,072 (32$\times$32$\times$3) \\
CIFAR-100 & 50,000 & 10,000 & 100 & 3,072 (32$\times$32$\times$3) \\
Tiny ImageNet & 100,000 & 10,000 & 200 & 12,288 (64$\times$64$\times$3) \\
\bottomrule
\end{tabular}
}
\end{table}

In our experimental evaluation, we use several standard image classification datasets of varying size and complexity, summarized in \cref{tab:datasets}.\\
MNIST, Fashion-MNIST (F-MNIST), and Kuzushiji-MNIST (K-MNIST) are grayscale image datasets, each containing 10 classes with images of 28x28 pixels. The original MNIST comprises handwritten digits, while F-MNIST contains images of clothing items, and K-MNIST has handwritten Japanese characters. These datasets combine a range of classification tasks with similar input dimensions but varying levels of difficulty.\\
CIFAR10 and CIFAR100 contain 32x32x3 (color) images with 10 and 100 classes respectively. These datasets present more challenging classification tasks due to their higher resolution, color information, and larger number of classes for CIFAR100.\\
Finally, Tiny ImageNet is a subset of the ImageNet dataset featuring 200 classes with 64x64x3 color images. This dataset is markedly more challenging and computationally intensive due to the relatively complex task with more and higher resolution images, as well as a larger number of classes.
All datasets are split into training (50,000 or 60,000 samples) and test (10,000 samples) sets. We further apply standard data pre-processing and augmentation techniques: For the three MNIST variants, we use pixel rescaling to $[0,1]$. The CIFAR and Tiny ImageNet images are normalized. For larger networks, we additionally employ data augmentation, including horizontal flips, width and height shifts (up to 12.5\%), and rotations (up to $15 ^{\circ}$). \cref{tab:training_hyperparams} contains the combinations of architectures and datasets we conducted experiments on.

\paragraph{Training hyperparameters}

In our experiments, we use training hyperparameter configurations following broadly established standard settings \citep{simonyan2014very,he2015delving,zagoruyko2016wide}, as displayed in \cref{tab:training_hyperparams}. For both LeNet-300-100 and LeNet-5, we set the initial LR to $0.15$ and found it to perform well across datasets, with the exception of LeNet-300-100 on K-MNIST. 
Because established LRs were found to be suboptimal for DWF, we additionally select the best-performing LR (using small $\lambda=10^{-6}$) from a discrete grid between $0.05$ and $1$ for each factorization depth, architecture, and dataset. For DWF, the sparsity level is controlled using a logarithmically spaced sequence of $\lambda$ parameters between $10^{-6}$ and $10^{-1}$ on which we train each model to obtain the sparsity-accuracy tradeoff curves. 
For the comparison methods in \cref{sec:benchmark}, we follow the implementation details provided in \citet{frankle2020pruning,lee2019snip} if available. To make for a fair comparison, we also train the two LeNet architectures using the same LR of $0.15$ and cosine decay. For the larger networks, we only adjust the LR schedule from step to cosine decay but use the prescribed initial LR. To obtain tradeoff curves for the respective pruning methods, we train each method on a sequence of $15$ compression ratios between $10^1$ and $10^5$.

\begin{table}[htbp]
\centering
\caption{Training hyperparameters for different architectures and datasets. The LRs for the larger models correspond to factorization depths $D=2,3,4$. The comparison methods use standard Kaiming initialization. LRs for the supplementary results on WRN-16-8 and ResNet-34 were not tuned.}
\label{tab:training_hyperparams}
\small
\resizebox{0.99\textwidth}{!}{
\begin{tabular}{lcccccccc}
\hline\hline
Architecture & Dataset & Epochs & Batch size & Optim. & Mom. & Init. & LR & Schedule \\
\cline{2-9}
\multirow{3}{*}{\textbf{LeNet-300-100}} 
 & \tiny{MNIST} & 75 & 256 & SGD & 0.9 & \texttt{DWF}-Init & 0.15 & Cosine \\
 & \tiny{F-MNIST} & 75 & 256 & SGD & 0.9 & \texttt{DWF}-Init & 0.15 & Cosine \\
 & \tiny{K-MNIST} & 75 & 256 & SGD & 0.9 & \texttt{DWF}-Init & 0.4 & Cosine \\
\cline{2-9}
\multirow{3}{*}{\textbf{LeNet-5}} 
 & \tiny{MNIST} & 75 & 256 & SGD & 0.9 & \texttt{DWF}-Init & 0.15 & Cosine \\
 & \tiny{F-MNIST} & 75 & 256 & SGD & 0.9 & \texttt{DWF}-Init & 0.15 & Cosine \\
 & \tiny{K-MNIST} & 75 & 256 & SGD & 0.9 & \texttt{DWF}-Init & 0.15 & Cosine \\
\cline{2-9}
\textbf{VGG-16} & \tiny{CIFAR-10} & 250 & 128 & SGD & 0.9 & \texttt{DWF}-Init & \{0.5,0.6,0.6\} & Cosine \\
\textbf{VGG-19} & \tiny{CIFAR-100} & 250 & 128 & SGD & 0.9 & \texttt{DWF}-Init & \{0.3,0.6,0.6\} & Cosine \\
\cline{2-9}
\multirow{3}{*}{\textbf{ResNet-18}} 
 & \tiny{CIFAR-10} & 250 & 128 & SGD & 0.9 & \texttt{DWF}-Init & \{0.2,0.5,0.7\} & Cosine \\
 & \tiny{CIFAR-100} & 250 & 128 & SGD & 0.9 & \texttt{DWF}-Init & \{0.2,0.5,0.7\} & Cosine \\
 & \tiny{Tiny ImageNet} & 250 & 128 & SGD & 0.9 & \texttt{DWF}-Init & \{0.5,0.8,1.1\} & Cosine \\
\cline{2-9}
\multirow{3}{*}{\textbf{WRN-16-8}} 
 & \tiny{CIFAR-10} & 250 & 128 & SGD & 0.9 & \texttt{DWF}-Init & \{0.2,0.5,0.7\} & Cosine \\
 & \tiny{CIFAR-100} & 250 & 128 & SGD & 0.9 & \texttt{DWF}-Init & \{0.2,0.5,0.7\} & Cosine \\
 & \tiny{Tiny ImageNet} & 250 & 128 & SGD & 0.9 & \texttt{DWF}-Init & \{0.2,0.5,0.7\} & Cosine \\
\cline{2-9}
\multirow{2}{*}{\textbf{ResNet-34}} 
 & \tiny{CIFAR-100} & 250 & 128 & SGD & 0.9 & \texttt{DWF}-Init & \{0.2,0.5,0.7\} & Cosine \\
 & \tiny{Tiny ImageNet} & 150 & 128 & SGD & 0.9 & \texttt{DWF}-Init & \{0.2,0.5,0.7\} & Cosine \\
\hline\hline
\end{tabular}
}
\end{table}

\paragraph{Further details for DWF}

Although our method requires no post-hoc pruning, it is sensible to apply a sufficiently small threshold to the final collapsed weights to account for numerical inaccuracies which have no impact on performance. We set this threshold to $\texttt{float32.mach.eps} \approx 1.19 \times 10^{-7}$. Additionally, the $\texttt{DWF}$ initialization (\cref{alg:init}) requires specification of the lower truncation threshold for the factor initializations, which we set to $\omegae_{\min} =3 \times 10^{-3}$ for all our experiments (cf. left plot of \cref{fig:init-combined-plot}).


\section{Other approaches to factor initialization}\label{app:other-inits}

\subsection{Root initialization and results}

An alternative option to obtain an initialization of factors $\bomega_d$ that recovers the distribution of the original weight $\w$ is given in the following.

\begin{definition}[Root initialization]
    A root initialization of a depth-$D$ factorized weight $\w = \bomega_1 \odot \ldots \odot \bomega_D$ is given by first drawing a single standard weight initialization (\cref{def:std_init}) for each entry of $\w$ and assigning $\bomega_1 \leftarrow \operatorname{sign}(\w)\cdot|\w|^{1/D}$ and $\bomega_2,\ldots,\bomega_D \leftarrow |\w|^{1/D}$ element-wise.
\end{definition}

\cref{fig:lenet300100-initcompar-all} compares the root initialization against the vanilla initialization and the proposed \texttt{DWF} initialization with and without truncation. While the root initialization yields satisfactory results improving upon vanilla initialization for $D=2$, we observe that it behaves similarly to the VarMatch initialization for $D=3$, both outperformed compared to our \texttt{DWF} initialization and yields the worst results for $D=4$.

\begin{figure}[h]
\centering
\includegraphics[width=1\textwidth]{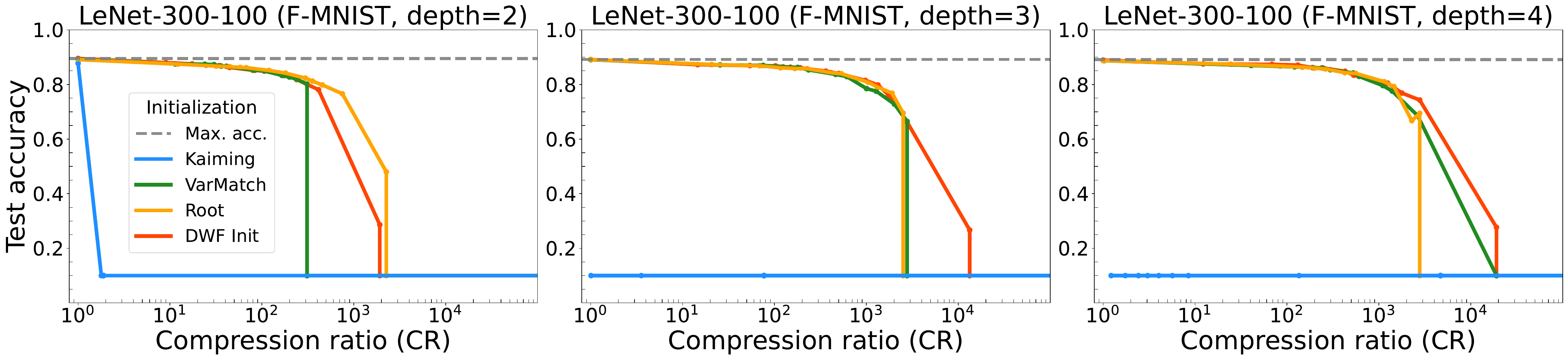}
\caption[]{Sparsity-accuracy tradeoffs for different depths $D$ (columns) and initializations (colors).}
\label{fig:lenet300100-initcompar-all}
\end{figure}

In \cref{fig:lrdecay-resnet18-root}, we further analyze the learning dynamics of a DWF model with root initialization. The results demonstrate qualitatively similar learning dynamics to our proposed \texttt{DWF} initialization, suggesting them to be a general feature of DWF and SGD optimization.

\begin{figure}[h]
\centering
\includegraphics[width=1\textwidth]{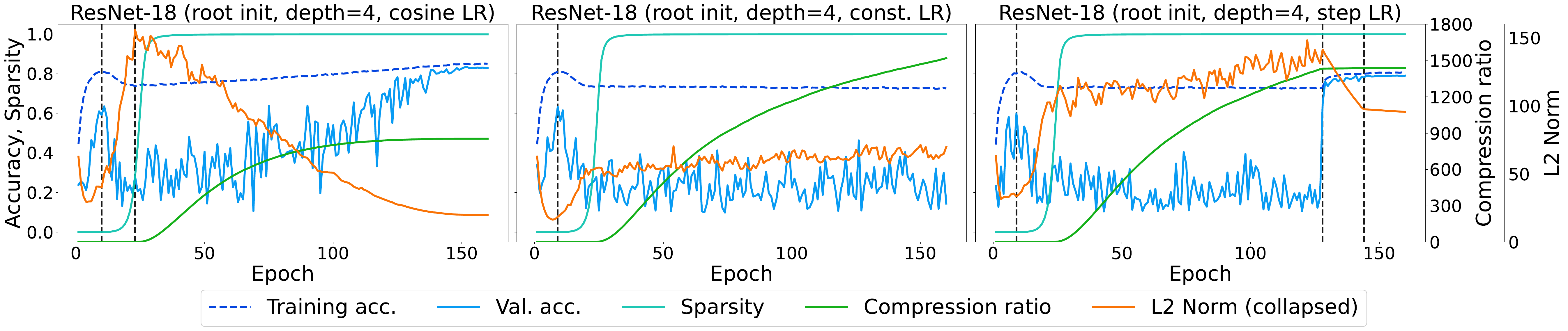}
\caption[]{Learning dynamics for the root initialization for different learning rate schedules (columns).}
\label{fig:lrdecay-resnet18-root}
\end{figure}

\subsection{Exact Gaussian Factor Representation}

Although the density of random variables whose product is Gaussian is non-trivial, an exact expression in terms of compositions of transformations of Gamma-distributed random variables can be derived. The expression is found by writing the (symmetric) standard Gaussian as $\mathrm{w} = R \cdot e^U$, where $R$ is a Rademacher variable taking values $\pm 1$ with equal probability and $U := \ln |\mathrm{w}|$. \citet{pinelis2018exp} establish the infinite divisibility of the $\exp$-normally distributed $U$ by inspection of its characteristic function. This result can be readily exploited to obtain factor distributions such that their $D$-times product is a zero-mean Gaussian with arbitrary variance $\sigma_{\mathrm{w}}^2>0$:

\begin{lemma}[Gaussian Product Factor (GPF) Distribution]\label{lemma:init-exact-gaussian}
Let $D \in \mathbb{N}^+$. Consider independent and identically (i.i.d.) distributed random variables $\{\omega_d\}_{d=1}^D$ constructed as:
\begin{equation}
\omega_d \stackrel{\mathcal{D}}{\sim} R_d \cdot \exp\left\{ \frac{\ln(2 \cdot \sigma_{\mathrm{w}}^2)}{2D}- \mathcal{G}_{0,d} - \sum_{k=1}^{\infty} \left[ \frac{\mathcal{G}_{k,d}}{2k+1} - \frac{1}{2D} \ln\left(1 + \frac{1}{k}\right) \right] \right\}\,,
\end{equation}
where $R_d$ are i.i.d. Rademacher random variables (taking values $\pm1$ with probability $\frac{1}{2}$) and $\mathcal{G}_{k,d}$ are i.i.d. $\text{Gamma}(\frac{1}{D}, 1)$ variables for all $k \geq 0$ and $d \in \{1,\ldots,D\}$. Then their product follows a zero-mean normal distribution:
\begin{equation}
\prod_{d=1}^D \omega_d \stackrel{\mathcal{D}}{\sim} \mathcal{N}(0,\sigma_{\mathrm{w}}^2)
\end{equation}
\end{lemma}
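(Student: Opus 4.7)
The plan is to decompose each factor into sign and magnitude and reduce the multiplicative claim to an additive identity in distribution for logarithms. If $\mathrm{w}\sim\mathcal{N}(0,\sigma_{\mathrm{w}}^2)$, write $\mathrm{w}\stackrel{\mathcal{D}}{=}R\exp(U)$ with $R$ Rademacher, $R\perp U$, and $U:=\ln|\mathrm{w}|$. For the proposed $\omega_d=R_d\exp(U_d)$ with $U_d$ the bracketed exponent, the product factorizes as $\prod_{d=1}^D\omega_d=\bigl(\prod_d R_d\bigr)\exp\bigl(\sum_d U_d\bigr)$. An independent product of $D$ Rademachers is again Rademacher and is independent of the $U_d$'s, so it suffices to prove $\sum_{d=1}^D U_d\stackrel{\mathcal{D}}{=}U$.

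For the key step, I would match moment generating functions on both sides. The half-normal moment formula gives
\begin{equation*}
\mathbb{E}[e^{tU}]=\mathbb{E}[|\mathrm{w}|^t]=\frac{(2\sigma_{\mathrm{w}}^2)^{t/2}}{\sqrt{\pi}}\,\Gamma\!\Bigl(\tfrac{t+1}{2}\Bigr),\qquad\operatorname{Re}(t)>-1.
\end{equation*}
On the other side, using independence of the $\mathcal{G}_{k,d}$ and the $\Gamma(1/D,1)$ MGF $\mathbb{E}[e^{-s\mathcal{G}_{k,d}}]=(1+s)^{-1/D}$, the deterministic shifts $\tfrac{\ln(2\sigma_{\mathrm{w}}^2)}{2D}$ and $\tfrac{1}{2D}\ln(1+\tfrac{1}{k})$ collect across the $D$ factors, yielding
\begin{equation*}
\mathbb{E}\bigl[e^{t\sum_d U_d}\bigr]=(2\sigma_{\mathrm{w}}^2)^{t/2}\,\frac{1}{1+t}\,\prod_{k=1}^{\infty}\frac{(1+1/k)^{t/2}}{1+t/(2k+1)}.
\end{equation*}
Note how the combination $\mathcal{E}_k:=\sum_d\mathcal{G}_{k,d}\sim\mathrm{Exp}(1)$ is precisely engineered so that the $1/D$-shape Gammas assemble into unit-rate exponentials.

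The main obstacle is the ensuing analytic identity
\begin{equation*}
\frac{1}{1+t}\prod_{k=1}^{\infty}\frac{(1+1/k)^{t/2}}{1+t/(2k+1)}=\frac{\Gamma((t+1)/2)}{\sqrt{\pi}},
\end{equation*}
which expresses the exp-normal law as a specific infinitely divisible distribution and is precisely the L\'evy--Khintchine representation of $U$ established by \citet{pinelis2018exp}. One route is to substitute $z=(t+1)/2$ into the Weierstrass product $1/\Gamma(z)=ze^{\gamma z}\prod_n(1+z/n)e^{-z/n}$, reindex the denominators over odd integers $n=2k+1$, and absorb the Weierstrass convergence factors $e^{\pm z/n}$ into the compensators $(1+1/k)^{t/2}$; the surviving Wallis-type partial products give the $\sqrt{\pi}$ constant via $(2N)!!^2/((2N+1)!!(2N-1)!!)\to\pi/2$. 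The delicate point is that neither the Gamma MGFs nor the log compensators are separately summable; only the paired product converges, which is exactly why the lemma's construction introduces the specific offsets $\tfrac{1}{2D}\ln(1+\tfrac{1}{k})$. Once the identity is verified on a strip of analyticity (e.g.\ $\operatorname{Re}(t)\in(-1,1)$), uniqueness of MGFs yields $\sum_d U_d\stackrel{\mathcal{D}}{=}U$, and combined with the Rademacher factor this closes the argument.
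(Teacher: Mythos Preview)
Your proposal is correct and follows essentially the same route as the paper: decompose $\mathrm{w}=R\exp(U)$, reduce to $\sum_d U_d\stackrel{\mathcal{D}}{=}U$ via independence and the fact that $\sum_d\mathcal{G}_{k,d}\sim\mathrm{Exp}(1)$, and then match transforms of $U$ against the infinite-product representation. The only substantive difference is how the Gamma identity is established: the paper works with the characteristic function and applies Euler's product $\Gamma(z)=z^{-1}\prod_{k\geq1}(1+1/k)^{z}/(1+z/k)$ to both $\Gamma((1+it)/2)$ and $\Gamma(1/2)$, whose ratio collapses directly to $\tfrac{1}{1+it}\prod_k(1+1/k)^{it/2}/(1+it/(2k+1))$ with no Wallis-type residual. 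Your Weierstrass sketch would ultimately work but is less clean---the ``reindex over odd integers'' step is not quite right as stated (the Weierstrass factors $1+z/n$ run over all $n$, and only after combining with the $1/z$ prefactor and the $e^{\gamma z}$ term do the odd shifts emerge), and you would still need to handle the $\gamma$-constant cancellation explicitly. Since you already cite \citet{pinelis2018exp} for the identity, the argument is complete either way; but Euler's product is the shorter self-contained path.
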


\begin{proof}
We first establish the result for $\sigma_{\mathrm{w}}^2 = 1$ and then extend to arbitrary positive variances. Let $\mathrm{w} \sim \mathcal{N}(0,1)$. Given the symmetry of the normal distribution, we can represent $\mathrm{w}$ as $\mathrm{w} = R \cdot e^U$, where $R$ is a Rademacher random variable and $U = \ln|\mathrm{w}|$ is the logarithm of its absolute value. The characteristic function of $U$ is given by $\phi_U(t) = \mathbb{E}[e^{itU}] = \mathbb{E}[e^{it\ln|w|}]$. Since $\mathrm{w}$ has an even density, we can simplify the expectation as:
\begin{equation}
\phi_U(t) = 2\int_0^\infty e^{it\ln x} \cdot f_\mathrm{w}(x) dx
\end{equation}
Substituting the standard normal density $f_\mathrm{w}(x) = \frac{1}{\sqrt{2\pi}}e^{-x^2/2}$:
\begin{equation}
\phi_U(t) = \frac{2}{\sqrt{2\pi}} \int_0^\infty x^{it}e^{-x^2/2}dx
\end{equation}
Substituting $u = x^2/2$ implies $x = \sqrt{2u}$ and $dx = \frac{du}{\sqrt{2u}}$ and therefore  
\begin{equation}
    \phi_U(t) = \frac{2}{\sqrt{2\pi}} \int_0^\infty (\sqrt{2u})^{it} e^{-u} \frac{du}{\sqrt{2u}},
\end{equation} 
The term $(\sqrt{2u})^{it}$ can be written as $2^{it/2}u^{it/2}$, giving using further simplification:
\begin{equation}
\phi_U(t) = \frac{2^{it/2}}{\sqrt{\pi}} \int_0^\infty u^{(it-1)/2} e^{-u} du
\end{equation}
The integral is the Gamma function with argument $s = \frac{1+it}{2}$:
\begin{equation}
\phi_U(t) = \frac{2^{it/2}}{\sqrt{\pi}} \cdot \Gamma\left(\frac{1+it}{2}\right) = 2^{it/2} \cdot \frac{\Gamma(\frac{1+it}{2})}{\Gamma(\frac{1}{2})}
\end{equation}
using the identity $\Gamma(\frac{1}{2}) = \sqrt{\pi}$.
We now apply Euler's product formula to both gamma functions:
\begin{equation}
\Gamma(z) = \frac{1}{z}\prod_{k=1}^\infty \left(\left(1+\frac{1}{k}\right)^z \cdot \frac{1}{1+\frac{z}{k}}\right)
\end{equation}
Let $z = \frac{1+it}{2}$ and $z_0 = \frac{1}{2}$. Then:
{\scriptsize
\begin{equation}
\Gamma\left(\frac{1+it}{2}\right) = \frac{2}{1+it}\prod_{k=1}^\infty \left(\left(1+\frac{1}{k}\right)^{\frac{1+it}{2}} \cdot \frac{1}{1+\frac{1+it}{2k}}\right)\,\,\text{and}\,\,\Gamma\left(\frac{1}{2}\right) = 2\prod_{k=1}^\infty \left(\left(1+\frac{1}{k}\right)^{\frac{1}{2}} \cdot \frac{1}{1+\frac{1}{2k}}\right)
\end{equation}
}
Taking their ratio and substituting into our expression for $\phi_U(t)$:
\begin{equation}
\phi_U(t) = 2^{it/2} \cdot \frac{2}{1+it} \cdot \frac{1}{2} \prod_{k=1}^\infty \left(\left(1+\frac{1}{k}\right)^{\frac{1+it}{2}-\frac{1}{2}} \cdot \frac{1+\frac{1}{2k}}{1+\frac{1+it}{2k}}\right)
\end{equation}
The exponent simplifies as $\frac{1+it}{2}-\frac{1}{2} = \frac{it}{2}$, giving:
\begin{equation}
\phi_U(t) = 2^{it/2} \cdot \frac{1}{1+it} \prod_{k=1}^\infty \left(\left(1+\frac{1}{k}\right)^{it/2} \cdot \frac{1+\frac{1}{2k}}{1+\frac{1+it}{2k}}\right)
\end{equation}
For the fraction in the product, multiply numerator and denominator by $2k$:
\begin{equation}
\frac{1+\frac{1}{2k}}{1+\frac{1+it}{2k}} = \frac{2k+1}{2k+1+it} = \frac{1}{1+\frac{it}{2k+1}}
\end{equation}
Writing the expression in exponential form:
\begin{equation}
\phi_U(t) = e^{\frac{it}{2}\ln 2} \cdot \frac{1}{1+it} \prod_{k=1}^\infty \left(e^{\frac{it}{2}\ln(1+\frac{1}{k})} \cdot \frac{1}{1+\frac{it}{2k+1}}\right)
\end{equation}
This characteristic function reveals that $U$ can be expressed as:
\begin{equation}
U \stackrel{\mathcal{D}}{\sim} \frac{\ln(2)}{2} - \mathcal{E}_0 - \sum_{k=1}^\infty \left[\frac{\mathcal{E}_k}{2k+1} - \frac{1}{2}\ln(1+\frac{1}{k})\right]
\end{equation}
where $\mathcal{E}_k \stackrel{iid}{\sim} \text{Exp}(1)$ for all $k \geq 0$, since $e^{\frac{it}{2}\ln 2}$ is the characteristic function of the constant $\frac{\ln 2}{2}$, $\frac{1}{1+it}$ is the characteristic function of $-\mathcal{E}_0$, and for each $k$, $e^{\frac{it}{2}\ln(1+\frac{1}{k})}$ is the characteristic function of $\frac{1}{2}\ln(1+\frac{1}{k})$. Lastly, $\frac{1}{1+\frac{it}{2k+1}}$ is the characteristic function of $-\frac{\mathcal{E}_k}{2k+1}$. For the factors $\omega_d$, we distribute the components of $U$ across $D$ dimensions using the property that the sum of $D$ independent $\text{Gamma}(\frac{1}{D},1)$ random variables follows $\text{Exp}(1)$:
\begin{equation}
\ln|\omega_d| \stackrel{\mathcal{D}}{\sim} \frac{\ln(2)}{2D} - \mathcal{G}_{0,d} - \sum_{k=1}^\infty \left[\frac{\mathcal{G}_{k,d}}{2k+1} - \frac{1}{2D}\ln(1+\frac{1}{k})\right]
\end{equation}
where $\mathcal{G}_{k,d} \sim \text{Gamma}(\frac{1}{D},1)$. Taking the product:
\begin{equation}
\prod_{d=1}^D \omega_d = \left(\prod_{d=1}^D R_d\right) \cdot \exp\left(\sum_{d=1}^D \ln|\omega_d|\right)
\end{equation}
Since $\sum_{d=1}^D \mathcal{G}_{k,d} \sim \text{Gamma}(1,1) = \mathcal{E}_k$, the sum of logarithms recovers the distribution of $U$:
\begin{equation}
\sum_{d=1}^D \ln|\omega_d| \stackrel{\mathcal{D}}{\sim} U
\end{equation}
Therefore, $\prod_{d=1}^D \omega_d \stackrel{\mathcal{D}}{\sim} \mathcal{N}(0,1)$. For arbitrary $\sigma_{\mathrm{w}}^2 > 0$, we modify the constant term in the factor distribution to $\frac{\ln(\sigma_{\mathrm{w}}^2)}{2D}$ to produce the desired product variance, yielding the result:
{\footnotesize
\begin{equation}\label{eq:factor-construction-impl}
\omega_d \stackrel{iid}{\sim} R_d  \cdot \exp\left\{ \frac{\ln(2 \cdot \sigma_{\mathrm{w}}^2)}{2D}- \mathcal{G}_{0,d} - \sum_{k=1}^{\infty} \left[ \frac{\mathcal{G}_{k,d}}{2k+1} - \frac{1}{2D} \ln\left(1 + \frac{1}{k}\right) \right] \right\} \Rightarrow \prod_{d=1}^D \omega_d \stackrel{\mathcal{D}}{\sim} \mathcal{N}(0,\sigma_{\mathrm{w}}^2).
\end{equation}.
}
\end{proof}

\paragraph{Practical challenges} Unfortunately, it is hard to efficiently implement sampling from the factor distribution due to the infinite sum and the series of additional operations required to compute $\omega_d$. Since the summands tend to $0$ as $k \to \infty$, it seems practical to truncate the infinite sum at a reasonable value to approximate the true factor distribution. Fig.~\ref{fig:init_exact_approx} shows the kernel density estimates of the factor distribution for $D = \{2, 3, 4, 10, 20\}$ as well as the densities of the resulting product alongside an overlay of the ground truth Gaussian. To obtain the estimates, $n = 1000$ products were sampled using the target standard deviation $\sigma_{\mathrm{w}} = 0.1$. The plots in Fig.~\ref{fig:init_exact_approx} show the
approximation power computing only the first $k \in \{1, 5\}$ terms of the infinite sum. The results show that even the coarsest approximation using only a single summand $k=1$ yields product distributions that are statistically indistinguishable from a Gaussian using a Kolmogorov-Smirnoff test for any factorization depth $D$. Expectedly, the approximation improves with the number of summands $k$.

A significant drawback to the exact Gaussian factorization approach using Lemma~\ref{lemma:init-exact-gaussian} over the proposed \texttt{DWF} initialization is its impractical initialization complexity, rendering the approach unfeasible in practice. This is owed to the accumulating number of additional operations required for each scalar weight, such as drawing multiple random variables per single factor $\omega_d$, computing several constants, summing or multiplying them, and exponentiation. For example, initializing a factorized ResNet-18 with $D=2$ using a vanilla initialization takes $\approx 1 s$, for the proposed \texttt{DWF} initialization $\approx 3 s$, but for the coarsely approximated exact Gaussian factor distribution with $k=1$ around $180$ minutes on an A-4000 GPU with 48GB RAM, despite yielding identical performance to the \texttt{DWF} init. The same network could not be initialized within a time budget of six hours for depths $D>2$.

\vspace{-0.1cm}

\begin{figure}[htbp]
    \centering
    \subfloat[Only including the first $k=1$ summands.]{%
        \includegraphics[width=0.9\textwidth]{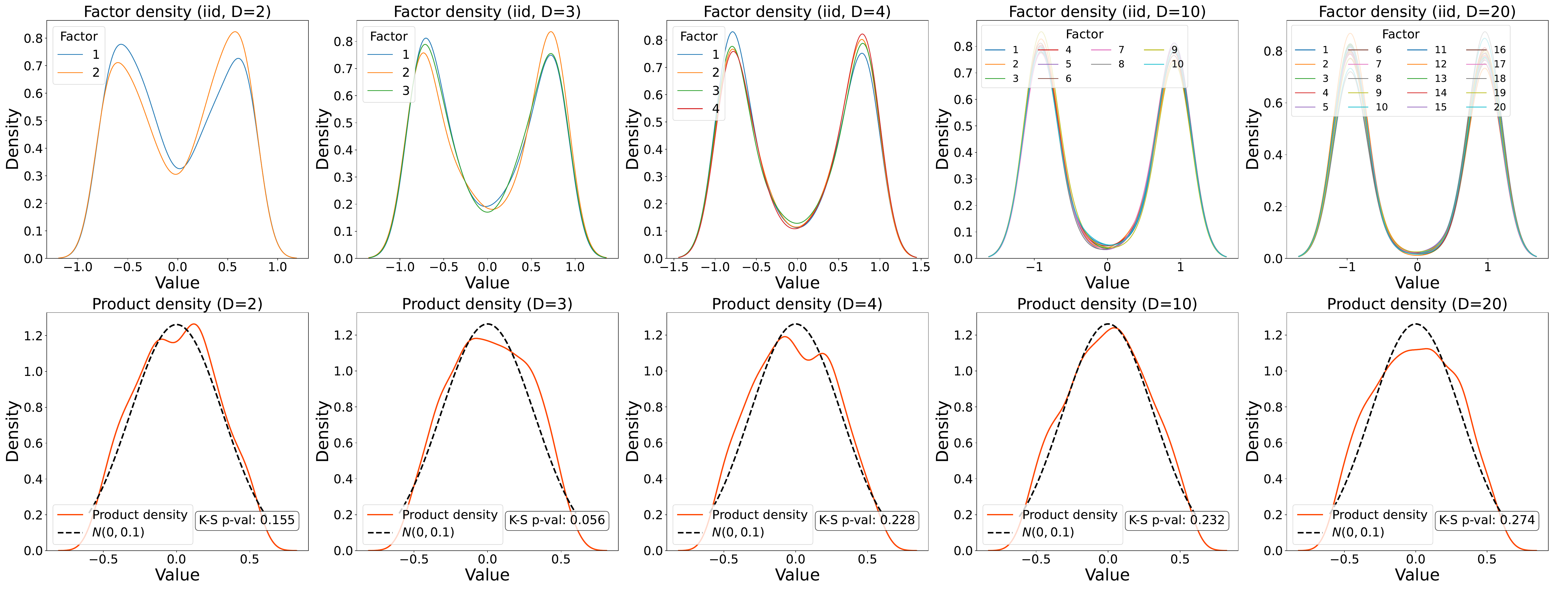} 
        \label{fig:maxterm1}
    }\\[0.01em]
    
    \subfloat[Only including the first $k=5$ summands.]{%
        \includegraphics[width=0.9\textwidth]{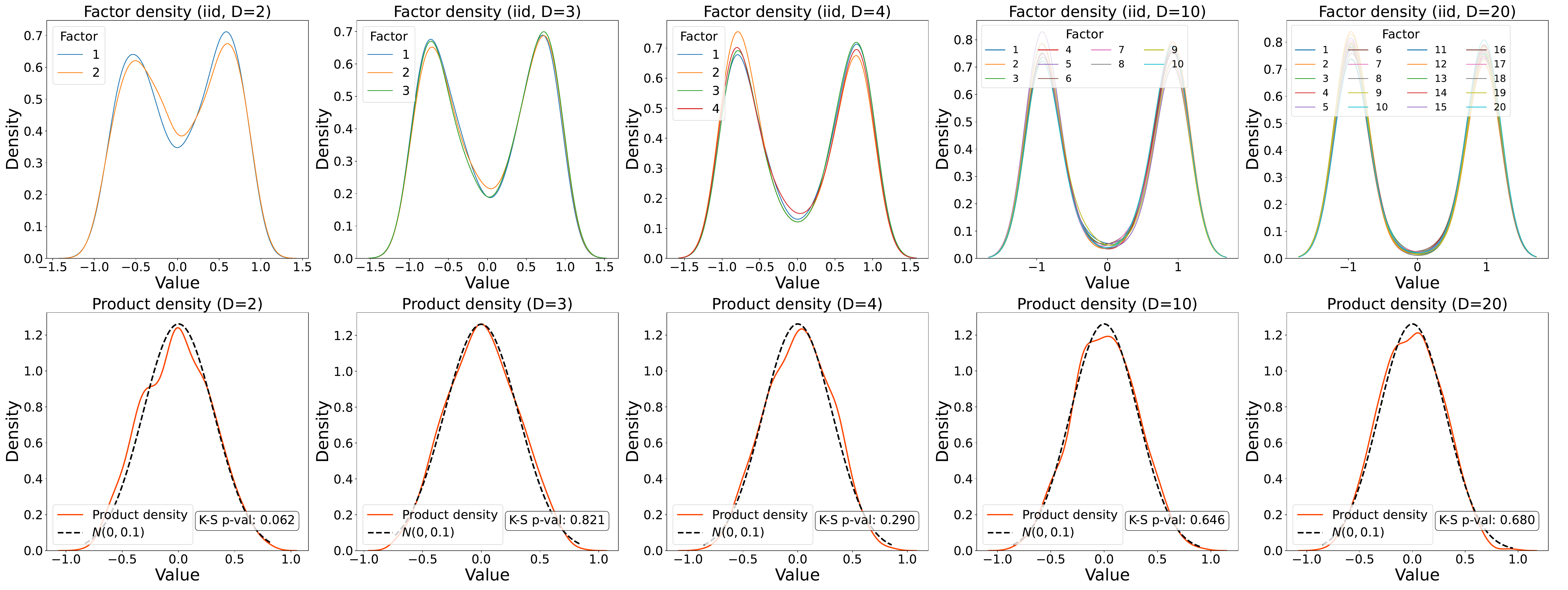} 
        \label{fig:maxterm5}
    }\\[0.01em]
    
    
    \caption{ Approximation of a Gaussian product for different truncation values of the infinite sum in \cref{lemma:init-exact-gaussian}. K-S corresponds to the Kolmogorov-Smirnoff test for normality.}
    \label{fig:init_exact_approx}
\end{figure}


\clearpage

\section{Computational environment and runtime analysis}

\subsection{Computational environment}

Large experiments on ResNet-18 and VGG-19 on datasets CIFAR10, CIFAR100, and Tiny ImageNet were run on an A-100 GPU server with 32GB RAM and 16 CPU cores. Smaller experiments were conducted on a single A-4000 GPU with 48GB RAM or CPU workstations.

\subsection{Runtime analysis}\label{app:runtimes}

Here, we investigate the computational overhead induced by DWF compared to vanilla training. We conducted experiments with WRN-16-8 on CIFAR10 and VGG-19 on CIFAR100 across various batch sizes. Each model was trained for 1000 iterations using SGD. We measured the average wall-clock time per sample and peak GPU memory utilization during training. All experiments were performed on a single A-4000 GPU with 48GB RAM, repeated five times to report means and standard deviations.
Our results, displayed in \cref{fig:runtime-persample-2} and \cref{fig:gpumem-2}, show that the factorization depth $D$ in the DWF model only has a minor impact on computational costs during training. For batch sizes of 256 or higher, both networks exhibit an indistinguishable time per sample comparable to vanilla training across all levels of $D$. At smaller batch sizes, we observe a slight monotonic increase in runtime with greater D. For example, WRN-16-8 with a batch size of 128 and $D=2$ runs approximately 10\% longer than vanilla training, while VGG-19 with a batch size of 64 and $D=4$ shows the largest increase of about 80\%.
These findings demonstrate that DWF training under typical settings incurs only a small additional cost compared to standard training. This contrasts with many sparsification techniques like Iterative Magnitude Pruning \citep{frankle2018lottery}, which can lead to several-fold increases in training time due to multiple cycles of pruning and re-training.

\begin{figure}[ht]
\centering
\includegraphics[width=0.5\textwidth]{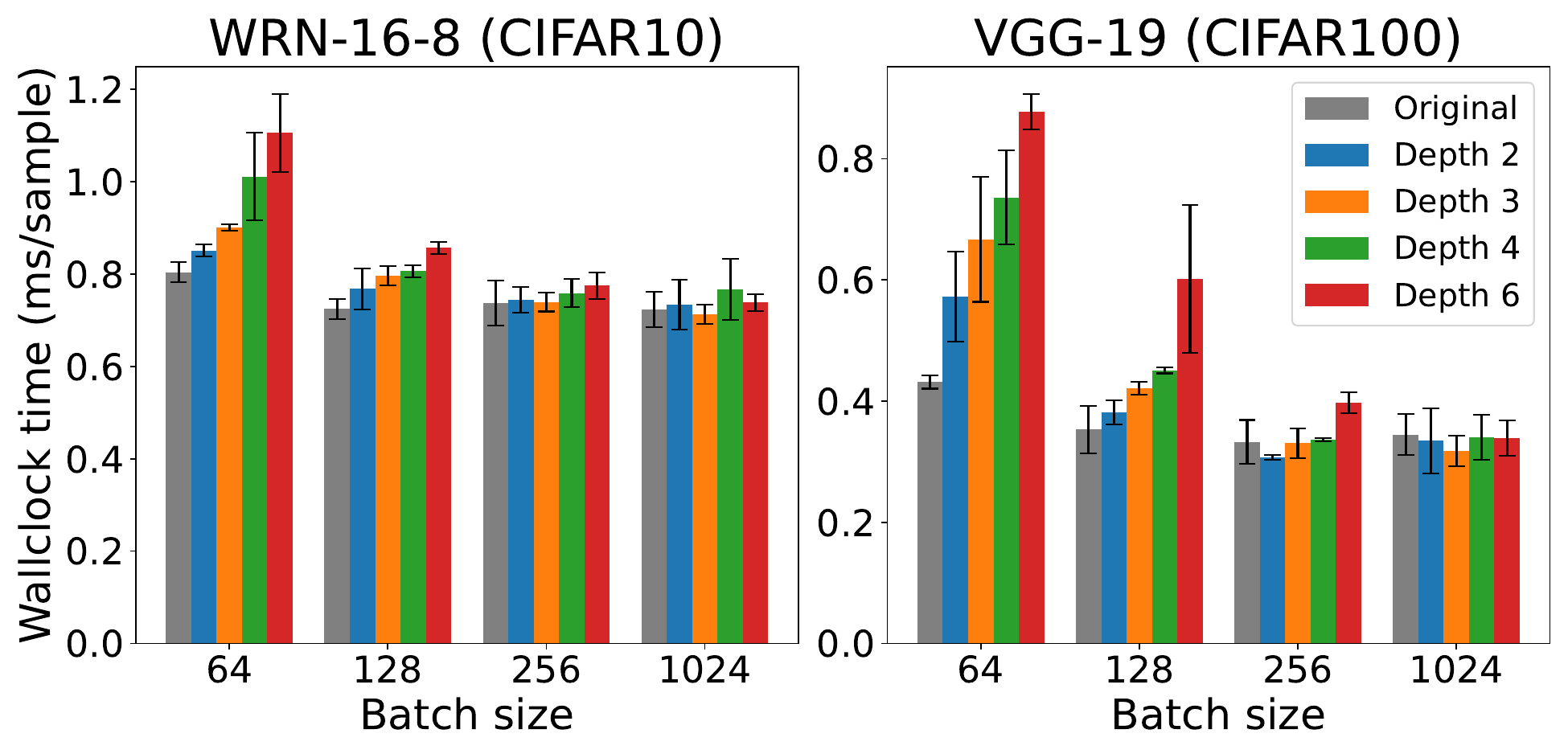}
\caption[]{Comparison of wall-clock time per sample for WRN-16-8 (\textbf{left}) and VGG-19 (\textbf{right}) on CIFAR10/100 across factorization depths $D$. Results indicate insignificant runtime overhead for DWF compared to vanilla training, particularly for larger batch sizes where runtime is identical.}
\label{fig:runtime-persample-2}
\end{figure}

GPU memory utilization is primarily dependent on batch size, with $D$ having rather small effects in total. In conclusion, besides factorizing the weights into $D$ factors, DWF incurs only a minor additional runtime and memory cost on commonly used convolutional architectures. The minimal increase, especially for typical batch sizes, suggests DWF can be readily integrated into existing training protocols without major changes in computational overhead.

\begin{figure}[ht]
\centering
\includegraphics[width=0.7\textwidth]{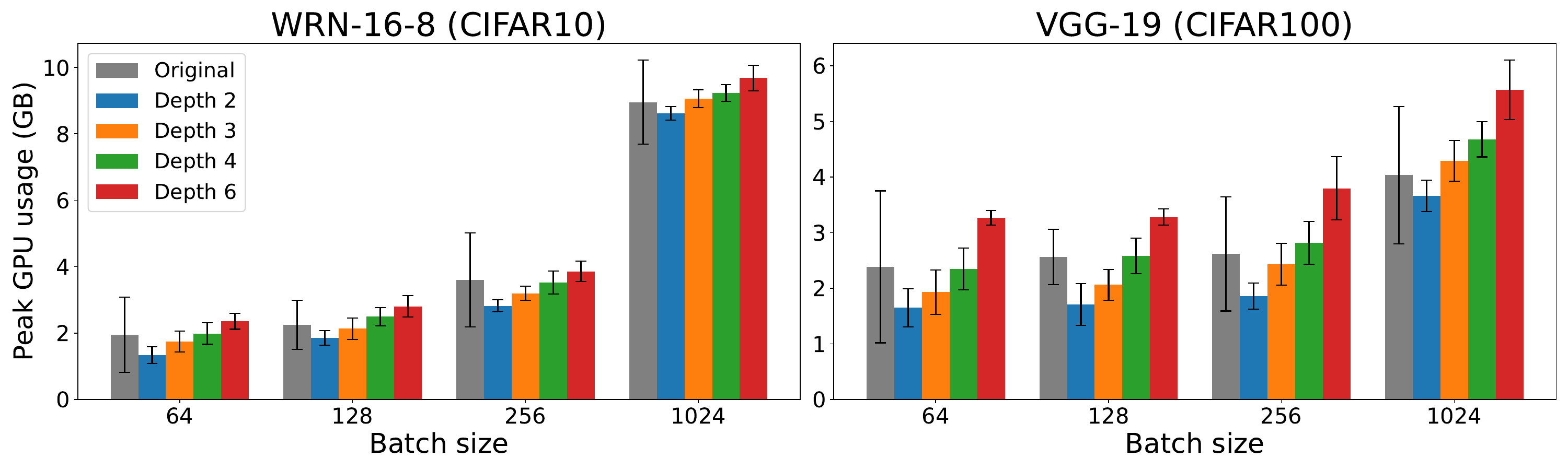}
\caption[]{Peak GPU memory utilization for WRN-16-8 (\textbf{left}) and VGG-19 (\textbf{right}) across depths $D$. The results show that batch training is the dominant factor for memory usage, with only a minimal impact of $D$.}
\label{fig:gpumem-2}
\end{figure}

\end{document}